\documentclass[11pt, letterpaper, logo, onecolumn, copyright]{template_from_google}

\usepackage[T1]{fontenc}

\usepackage{bm} %
\usepackage{nicefrac} %

\usepackage{booktabs} %
\usepackage{threeparttable}
\usepackage{enumitem} %
\usepackage{fancyhdr}
\usepackage{multirow} %
\usepackage{tabularx}
\usepackage{multicol}
\usepackage{array}
\usepackage{longtable}
\usepackage{adjustbox}

\usepackage{wrapfig} %
\usepackage{tocloft} %
\usepackage{fancybox}
\usepackage{authblk} %
\usepackage[bottom]{footmisc} %

\usepackage{algorithm}
\usepackage[noend]{algpseudocode}

\usepackage{graphicx}
\usepackage{caption}
\usepackage{subcaption}
\providecommand{\safeincludegraphics}[2][]{\IfFileExists{#2}{\includegraphics[#1]{#2}}{\fbox{\parbox{0.9\linewidth}{\centering Missing figure: \texttt{\detokenize{#2}}}}}}

\usepackage[dvipsnames]{xcolor}

\usepackage{xspace} %
\usepackage{url}

\usepackage[authoryear, sort&compress, round]{natbib}

\usepackage{hyperref}

\definecolor{darkblue}{rgb}{0.0, 0.0, 0.6}
\definecolor{darkred}{rgb}{0.7, 0.0, 0.0}
\hypersetup{
  pdffitwindow=true,
  pdfstartview={FitH},
  pdfnewwindow=true,
  colorlinks,
  linktocpage=true,
  linkcolor=darkred,
  urlcolor=darkblue,
  citecolor=darkblue
}

\usepackage{amsmath,amsfonts,bm}

\def\eqref#1{(\ref{#1})}

\def\1{\bm{1}}

\DeclareMathAlphabet{\mathsfit}{\encodingdefault}{\sfdefault}{m}{sl}
\SetMathAlphabet{\mathsfit}{bold}{\encodingdefault}{\sfdefault}{bx}{n}

\usepackage{enumitem}
\usepackage{multirow}
\usepackage{mathtools}
\usepackage{amssymb}
\usepackage{amsthm}
\usepackage{wrapfig}
\usepackage{placeins} %

\usepackage{algorithm}
\usepackage[noend]{algpseudocode}
\usepackage{listings}
\usepackage{pmboxdraw}

\usepackage{tcolorbox}
\tcbuselibrary{breakable,skins,listings}

\usepackage{mdframed}
\usepackage{xspace}

\usepackage{pifont}%
\newcommand{\xmark}{\ding{55}}%

\usepackage{tikz}
\usetikzlibrary{patterns.meta}

\definecolor{sglBlue}{RGB}{0,102,204}
\definecolor{sglRed}{RGB}{222,68,68}
\definecolor{sglGreen}{RGB}{118,205,38}
\definecolor{sglOrange}{RGB}{252,184,39}

\definecolor{SharedPrefixColor}{HTML}{8A9A5B}
\definecolor{SegmentColor}{HTML}{7393B3}
\colorlet{CachedSuffixColor}{gray!70}

\colorlet{NewSuffixAFillColor}{sglOrange!80}
\colorlet{NewSuffixBFillColor}{sglGreen!80}

\colorlet{GuideLineColor}{black!45}
\colorlet{TreeEdgeColor}{black!60}
\definecolor{BoxBorderColor}{RGB}{96,96,96}
\colorlet{NeutralArrowColor}{gray!70}
\definecolor{RejectEdgeColor}{RGB}{160,160,160}
\colorlet{PrunedMarkColor}{sglRed}
\definecolor{InactiveNodeFillColor}{RGB}{244,244,244}
\colorlet{FigureTextColor}{black}
\colorlet{FigureBackgroundColor}{white}

\def\rowh{0.45}    %
\def\barw{4.6}     %
\def\cutTwo{1.5}   %
\def\cutOne{3.2}   %
\def\rY{-1.15}     %
\def\rZ{-2.30}     %

\tikzset{
  paperfig/.style={x=1cm,y=1cm,>=stealth,font=\small,
                   line join=round,line cap=round},
  annot/.style={fill=FigureBackgroundColor,inner sep=1pt},
  panel/.style={align=center},
  maparrow/.style={->,thick,rounded corners=3pt,
                   shorten <=1pt,shorten >=1pt},
  seqbar/.style={draw=BoxBorderColor,rounded corners=1pt},
  seqlabel/.style={anchor=east,align=right},
  ticklabel/.style={anchor=south,inner sep=1.5pt},
  cutline/.style={dashed,GuideLineColor},
  prefixfill/.style={pattern={Lines[angle=15,distance=2.5pt,line width=0.5pt]},
                     pattern color=SharedPrefixColor},
  segmentfill/.style={pattern={Lines[angle=30,distance=2.5pt,line width=0.5pt]},
                      pattern color=SegmentColor},
  cachedfill/.style={pattern={Lines[angle=45,distance=2.5pt,line width=0.5pt]},
                     pattern color=CachedSuffixColor},
  newAfill/.style={fill=NewSuffixAFillColor},
  newBfill/.style={fill=NewSuffixBFillColor},
  plate/.style={fill=FigureBackgroundColor}, %
  treenode/.style={rounded corners=3pt,draw=BoxBorderColor,
                   minimum size=0.62cm,inner sep=1.5pt,align=center},
  rejectednode/.style={dashed},                %
  acc/.style={->,thick,FigureTextColor},
  rej/.style={->,thick,RejectEdgeColor,dashed},
  edgelabel/.style={text=FigureTextColor,
                    fill=FigureBackgroundColor,inner sep=1.5pt},
  kvbox/.style={draw=BoxBorderColor,rounded corners=1pt},
  kvlink/.style={TreeEdgeColor,thick},
  kvpoint/.style={circle,draw=BoxBorderColor,fill=FigureBackgroundColor,
                  minimum size=0.18cm,inner sep=0pt},
  kvleaf/.style={circle,draw=BoxBorderColor,fill=BoxBorderColor,
                 minimum size=0.14cm,inner sep=0pt}
}

\newcommand{\drawSequenceBars}{%
  \fill[plate]       (0,0) rectangle ++(\barw,\rowh);
  \fill[prefixfill]  (0,0) rectangle ++(\cutTwo,\rowh);
  \fill[segmentfill] (\cutTwo,0) rectangle ++({\cutOne-\cutTwo},\rowh);
  \fill[cachedfill]  (\cutOne,0) rectangle ++({\barw-\cutOne},\rowh);
  \draw[seqbar]      (0,0) rectangle ++(\barw,\rowh);
  \fill[plate]       (0,\rY) rectangle ++(\barw,\rowh);
  \fill[prefixfill]  (0,\rY) rectangle ++(\cutTwo,\rowh);
  \fill[segmentfill] (\cutTwo,\rY) rectangle ++({\cutOne-\cutTwo},\rowh);
  \fill[newAfill]    (\cutOne,\rY) rectangle ++({\barw-\cutOne},\rowh);
  \draw[seqbar]      (0,\rY) rectangle ++(\barw,\rowh);
  \fill[plate]       (0,\rZ) rectangle ++(\barw,\rowh);
  \fill[prefixfill]  (0,\rZ) rectangle ++(\cutTwo,\rowh);
  \fill[newBfill]    (\cutTwo,\rZ) rectangle ++({\barw-\cutTwo},\rowh);
  \draw[seqbar]      (0,\rZ) rectangle ++(\barw,\rowh);
  \draw[cutline] (\cutTwo,\rZ-0.12) -- (\cutTwo,\rowh+0.1);
  \draw[cutline] (\cutOne,\rY-0.12) -- (\cutOne,\rowh+0.1);
  \node[ticklabel] at (0,\rowh)         {$0$};
  \node[ticklabel] at (\cutTwo,\rowh)   {$c_2$};
  \node[ticklabel] at (\cutOne,\rowh)   {$c_1$};
  \node[ticklabel] at (\barw-0.1,\rowh) {$T$};
}

\newcommand{\drawSequenceReuseBrackets}{%
  \draw[thick,SharedPrefixColor]
    (0,\rZ-0.30) -- (0,\rZ-0.15) -- (\cutTwo,\rZ-0.15) -- (\cutTwo,\rZ-0.30);
  \node[annot,text=SharedPrefixColor] (shared-prefix-label)
    at ({\cutTwo/2},\rZ-0.50) {shared prefix $\mathbf{x}_{0:c_2}$};
  \draw[thick,SegmentColor]
    (\cutTwo,\rY-0.30) -- (\cutTwo,\rY-0.15)
    -- (\cutOne,\rY-0.15) -- (\cutOne,\rY-0.30);
  \node[annot,text=SegmentColor] (shared-segment-label)
    at ({(\cutTwo+\cutOne)/2+0.6},\rY-0.50) {shared segment $\mathbf{x}_{c_2+1:c_1}$};
}

\lstdefinestyle{prefetchtree}{
  basicstyle=\small\ttfamily,
  inputencoding=utf8,
  breaklines=true,
  breakindent=1em,
  breakautoindent=true,
  breakatwhitespace=false,
  postbreak=\mbox{\textcolor{GuideLineColor}{$\hookrightarrow$}\space},
  columns=fullflexible,
  keepspaces=true,
  showstringspaces=false,
  tabsize=2,
  morecomment=[l][\bfseries\color{sglBlue}]{===},
  moredelim=**[s][\color{sglGreen}]{[accept}{]},
  moredelim=**[s][\color{sglRed}]{[reject}{]},
  literate={├}{{\textSFviii}}1
           {└}{{\textSFii}}1
           {─}{{\textSFx}}1
           {│}{{\textSFxi}}1
           {→}{{\ensuremath{\rightarrow}}}1
}

\colorlet{TreeBoxFrameColor}{black!75}
\colorlet{TreeBoxBackgroundColor}{gray!7}
\colorlet{TreeBoxTitleBackgroundColor}{sglBlue!10}
\colorlet{TreeBoxTitleColor}{sglBlue!75!black}

\newcounter{prefetchbox}

\newcounter{prefetchpanel}[figure]

\makeatletter
\newcommand{\PrefetchFigureCaption}[1]{%
  \par\begingroup
    \@parboxrestore\normalsize
    \@makecaption{\figurename~\thefigure}{#1}\par
  \endgroup
}
\makeatother

\newcommand{\TreeBox}[3][]{%
  \IfFileExists{#3}{%
    \tcbinputlisting{%
      enhanced,
      breakable,
      listing only,
      boxrule=0.4pt,
      titlerule=0.4pt,
      colframe=TreeBoxFrameColor,
      colback=TreeBoxBackgroundColor,
      colbacktitle=TreeBoxTitleBackgroundColor,
      coltitle=TreeBoxTitleColor,
      fonttitle=\normalsize\ttfamily\bfseries,
      title={#2},
      title after break={#2 (continued)},
      before skip=4pt,
      after skip=4pt,
      boxsep=1pt,
      left=1pt,
      right=1pt,
      top=1pt,
      bottom=1pt,
      listing file={#3},
      listing options={style=prefetchtree,basicstyle=\fontsize{8}{9}\selectfont\ttfamily,#1}%
    }%
  }{%
    \begin{tcolorbox}[
      enhanced,
      breakable,
      boxrule=0.4pt,
      titlerule=0.4pt,
      colframe=TreeBoxFrameColor,
      colback=TreeBoxBackgroundColor,
      colbacktitle=TreeBoxTitleBackgroundColor,
      coltitle=TreeBoxTitleColor,
      fonttitle=\normalsize\ttfamily\bfseries,
      title={#2}
    ]
      \textit{File not found: \texttt{\detokenize{#3}}. Check the path and capitalization.}
    \end{tcolorbox}%
  }%
}

\newcommand{\TreeTrajectory}[3][]{%
  \IfFileExists{#3}{%
    \lstinputlisting[
      style=prefetchtree,
      title={\textbf{#2}},
      frame=single,
      rulecolor=\color{black!75},
      backgroundcolor=\color{gray!7},
      framesep=2pt,
      framexleftmargin=2pt,
      framexrightmargin=2pt,
      framextopmargin=2pt,
      framexbottommargin=2pt,
      #1
    ]{#3}%
  }{%
    \begin{tcolorbox}[
      enhanced,
      colframe=black!75,
      colback=gray!7,
      title={#2}
    ]
      \textit{File not found: \texttt{\detokenize{#3}}. Check the path and capitalization.}
    \end{tcolorbox}%
  }%
}

\mdfdefinestyle{pretty}{
  backgroundcolor=blue!5,
  linecolor=blue!50!black,
  linewidth=1pt,
  roundcorner=6pt,
  innertopmargin=6pt,
  innerbottommargin=6pt,
  innerleftmargin=8pt,
  innerrightmargin=8pt,
  skipabove=10pt,
  skipbelow=10pt
}

\newtheorem{lemma}{Lemma}
\newtheorem{condition}{Condition}
\newtheorem{proposition}{Proposition}
\newtheorem{definition}{Definition}

\newtheorem{example}{Example}

\newcommand{\framework}{{PreSTO}\xspace}                   
\newcommand{\method}{{PreSTO}-PowerMH\xspace}          %
\newcommand{\methodEnt}{{PreSTO}-EntropyCut\xspace}
\newcommand{\methodMTM}{{PreSTO}-MultiTryMH\xspace}

\usepackage{makecell}
\usepackage{adjustbox}

\let\cite\citep

\renewcommand{\headrulewidth}{0.4pt}

\reportnumber{} %

\renewcommand{\today}{}

\title{PreSTO: Predictive Subtree Prefetching for Fast LLM Power Sampling}

\author[1]{Nan Jiang}
\author[2]{Panagiotis Theodoropoulos}
\author[3]{Wei Deng}

\affil[1]{University of Texas at El Paso}
\affil[2]{Georgia Institute of Technology}
\affil[3]{Morgan Stanley}

\correspondingauthor{{njiang@utep.edu}}

\begin{abstract}

Power sampling improves large language model (LLM) reasoning at test time by sampling from a sharpened version of the base model's distribution.
Its sampler, PowerMH, is slow because its Metropolis--Hastings (MH) updates are sequential: each accept/reject decision sets the state from which the next proposal is generated.
Prior work reduces this cost by changing the sampler or the model (e.g., sequential Monte Carlo, approximate lookahead, distillation).
We propose Predictive SubTree Prefetching (PreSTO), which accelerates PowerMH by prefetching future proposals and provably leaves its MH transition kernel unchanged.
PreSTO views future MH outcomes as a tree of accept/reject decisions: in each model call, it generates the proposals of a \textit{prefetched subtree} of this tree in one batch, then follows the realized path with the original MH acceptance rule.
We also observe that MH reaches only a sparse, nearly unary part of each prefetched subtree, we further design a \textit{predictive traversal rule} that chooses the subtree to maximize the predicted number of completed transitions.
Across seven datasets and five base LLMs, PreSTO achieves median speedups of $1.71\times$, $1.91\times$, and $2.33\times$ on PowerMH and its follow-ups EntropyCut and MultiTryMH, respectively; on the first two, it makes a median of $3.9\times$ fewer model calls, each only $2.1\times$ longer.
On LCB V6 with Qwen3.5-9B, the predictive traversal rule is a further $1.25\times$--$1.37\times$ faster than the default breadth-first traversal.

\begingroup\centering
\textbf{Code:} \url{https://github.com/nan-jiang-group/PreSTO}\\
\textbf{Project Page:} \url{https://nan-jiang-group.github.io/PreSTO}\par
\vspace{-1em}
\endgroup

\end{abstract}

\begin{document}

\maketitle

\section{Introduction}

\begin{figure}[!b]
    \centering
    \includegraphics[width=\linewidth]{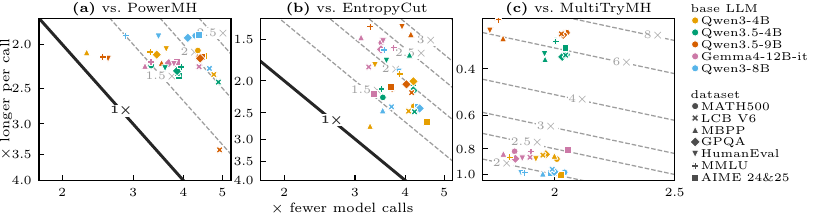}
    \vspace{-2em}
    \caption{\small Across 105 sampler--LLM--dataset settings, \framework (ours) speeds up PowerMH, EntropyCut, and MultiTryMH by a median of \(1.71\times\), \(1.91\times\), and \(2.33\times\), respectively.
    Each point is one setting, and faster settings lie toward the upper right.}
    \label{fig:teaser}
\end{figure}

Large language models (LLMs) perform strongly on reasoning tasks such as competition mathematics, code generation, and graduate-level science questions~\citep{DBLP:conf/nips/HendrycksBKABTS21,journal/science/alphacode22,conf/colm/rein2024gpqa}.
These capabilities are commonly improved by reinforcement learning (RL) post-training with task-specific rewards~\citep{guo2025deepseekr1,hu2025openreasonerzero}.
Recent work suggests that RL post-training largely sharpens the base model's distribution~\citep{he2025rewarding,shao2026spurious,DBLP:conf/nips/ChenLZWYSH25}, and that sampling directly from a sharpened distribution, known as \emph{power sampling}, can perform comparably to RL post-training without additional training~\citep{karan2026reasoning}.

However, PowerMH, the Metropolis--Hastings (MH) sampler for power sampling, is expensive~\citep{ji2026scalable}: on MATH500, it generates up to $8.8\times$ as many tokens as standard sampling~\citep{karan2026reasoning}, and on LCB V6 with Qwen3.5-9B, it takes $6.5\times$--$18.4\times$ as long per prompt as single-sample low-temperature sampling (Case Study~8, Appendix~\ref{apx:scaling-baselines}).
Follow-up work improves it in four directions:
(1) \emph{Alternative samplers} replace serial MH with sequential Monte Carlo (SMC) or replica exchange~\citep{DBLP:arxiv/power-smc,he2026recipes}.
(2) \emph{Efficient approximations} of the power target reduce inference cost~\citep{ji2026scalable,nguyen2026futurevalue}.
(3) \emph{Proposal refinements} choose cut indices by token-level entropy to improve mixing~\citep{entropycut,powermh-mtm}.
(4) \emph{Objective or training changes} target the answer marginal~\citep{arzhantsev2026selfconsistency} or fine-tune the model on power-sampled responses~\citep{tomihari2026power}.

The main bottleneck of PowerMH is the sequential dependence between MH transitions: in a standard implementation, each proposal must be accepted or rejected before the next is generated from the resulting state.
We draw on prefetching in classical MCMC~\citep{Brockwell01032006,DBLP:conf/uai/AngelinoKWSA14}: future MH transitions form a tree of accept/reject outcomes, and each sequential run traces one root-to-leaf path.
If the proposals on this path are available in advance, several transitions complete without another model call.

We present Predictive SubTree Prefetching (\framework), \textbf{a framework for accelerating MH-based sharpened sampling by completing several MH transitions per model call}.
In each round, \framework builds a \emph{prefetched subtree}, generates its feasible proposals in one batched model call, and follows the realized path with the original MH acceptance rule.
We prove that \method leaves the MH transition kernel of PowerMH unchanged (Proposition~\ref{thm:spec_mh_main}).
We further observe that, under the power-sharpened target, MH reaches only a sparse, nearly unary part of each prefetched subtree (Figure~\ref{fig:epsilon-certainty-datasets} and Appendix~\ref{subsec:edge-certainty}), so a breadth-first traversal spends much of the budget on nodes that MH rarely visits.
Motivated by this, we propose a improved \emph{predictive traversal rule} that allocates the prefetch budget to maximize the expected number of MH transitions completed per call.
\framework trades fewer serial calls for a costlier batched call; with prefix-cache reuse across requests, total latency falls whenever the saved calls outweigh the extra cost per batch.

In experiments, we apply \framework to PowerMH, EntropyCut, and MultiTryMH on seven datasets and five base LLMs.
\framework is never slower than its baseline in all 105 settings, with median speedups of $1.71\times$ over PowerMH, $1.91\times$ over EntropyCut, and $2.33\times$ over MultiTryMH, and maximum speedups of $2.29\times$ and $2.85\times$ over the first two (Figure~\ref{fig:teaser}).
Our case studies show that
\textbf{(1)} the speedup comes from fewer model calls rather than cheaper ones: over PowerMH, \method makes \(1.94\times\)--\(4.92\times\) fewer calls, each taking \(1.90\times\)--\(3.42\times\) as long (Figure~\ref{fig:teaser});
\textbf{(2)} the log-likelihood and confidence of \framework outputs pass paired equivalence tests against the baselines in 29 of 38 comparisons, which supports that \framework leaves the MH transition kernel unchanged (Figure~\ref{fig:likelihood-confidence});
and
\textbf{(3)} the predictive traversal rule is \(1.25\times\)--\(1.37\times\) faster than the default breadth-first search (BFS) traversal at all five prefetch budgets tested on LCB V6 with Qwen3.5-9B (Figure~\ref{fig:traversal-rules}).

\section{Preliminaries}

\medskip
\noindent\textbf{Metropolis--Hastings (MH)} is an MCMC algorithm for sampling from a distribution known only up to a normalizing constant~\citep{metropolis1953equation,hastings1970monte}.
MH constructs a Markov chain $\mathbf{x}^{(0)}\to\cdots\to\mathbf{x}^{(k)}\to\cdots\to\mathbf{x}^{(K)}$ to sample from a target distribution $\pi(\mathbf{x})$.
Given the current state $\mathbf{x}^{(k)}$, a candidate $\mathbf{y}$ is drawn from a proposal distribution $q(\cdot \mid \mathbf{x}^{(k)})$ and is accepted with probability
\begin{equation}\label{eq:accept-ratio}
  A(\mathbf{y}, \mathbf{x}^{(k)})
  = \min\!\left\{1,
      \frac{\pi(\mathbf{y})\, q(\mathbf{x}^{(k)} \mid \mathbf{y})}
           {\pi(\mathbf{x}^{(k)})\, q(\mathbf{y}\mid \mathbf{x}^{(k)})}
    \right\}.
\end{equation}
If the proposal is accepted, the chain moves to $\mathbf{x}^{(k+1)}\gets\mathbf{y}$; otherwise,  $\mathbf{x}^{(k+1)}\gets \mathbf{x}^{(k)}$.

\noindent\textbf{Power Sharpening for LLMs.} Let $\mathbf{x}_0$ denote a given prompt and $\mathbf{x}_{0:n} = (\mathbf{x}_0, x_1, \dots, x_n)$ denote the prompt followed by $n$ generated tokens.
Let $p_0$ denote the distribution of the base LLM.
Recently, \citet{karan2026reasoning} proposed to sample from a \emph{power-sharpened} distribution:
\begin{align} \label{eq:pi-alpha}
  \pi_\alpha(\mathbf{x}_{1:n}\mid \mathbf{x}_0)
  \;\propto\; p_0(\mathbf{x}_{1:n}\mid \mathbf{x}_0)^{\alpha}.
\end{align}
When $\alpha > 1$, the exponent sharpens the base distribution to amplify high-likelihood continuations and suppress low-likelihood ones~\citep{li2025blinkofaneyetheory}.
Empirical results~\citep{karan2026reasoning,ji2026scalable} show that power sharpening can achieve performance competitive with RL-based post-training methods, such as Group Relative Policy Optimization (GRPO)~\citep{depseekmath-grpo}.

Exact sampling from $\pi_\alpha$ in~\eqref{eq:pi-alpha} is intractable, as it requires summing over future continuations.
\citet{karan2026reasoning} propose PowerMH, an MH-based sampler that proceeds through a sequence of intermediate targets block by block:
\begin{equation}\label{eq:stage-sequence}
  \pi_\alpha(\mathbf{x}_{1:B}\mid\mathbf{x}_0)
  \rightarrow
  \cdots
  \rightarrow
  \pi_\alpha(\mathbf{x}_{1:\ell B}\mid\mathbf{x}_0)
  \rightarrow
  \cdots
  \rightarrow
  \pi_\alpha(\mathbf{x}_{1:n}\mid\mathbf{x}_0),
\end{equation}
where $B$ is the block size and the stage-$\ell$ target raises the base model's distribution over the first $\ell B$ generated tokens to the power $\alpha$,
\begin{equation}\label{eq:stage-target}
  \pi_\alpha(\mathbf{x}_{1:\ell B}\mid\mathbf{x}_0)
  \;\propto\;
  p_0(\mathbf{x}_{1:\ell B}\mid\mathbf{x}_0)^{\alpha},
\end{equation}
which equals~\eqref{eq:pi-alpha} at the last stage, $\ell B=n$.
At the $\ell$-th stage, PowerMH runs MH steps targeting the distribution $\pi_\alpha(\mathbf{x}_{1:\ell B}\mid\mathbf{x}_0)$ by first extending the sample $\mathbf{x}_{0:(\ell-1)B}$ from the previous stage by one block to produce $\mathbf{x}_{1:\ell B}$, and then refining it with $K$ MH steps.

\section{Methodology}

\begin{figure}[!t]
    \centering
    \includegraphics[width=\linewidth]{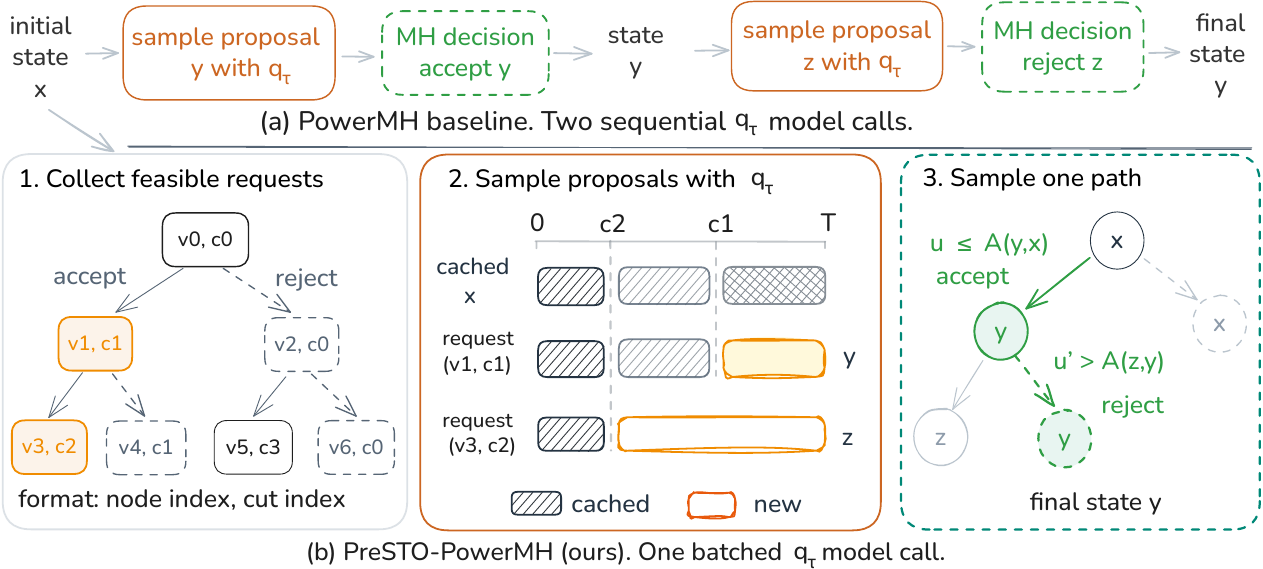}
    \caption{Our subtree prefetching (\framework) accelerates PowerMH for transitions $\mathbf{x}\to\mathbf{y}\to\mathbf{y}$. \textbf{(a)}  PowerMH performs two sequential rounds of proposal generation with $q_\tau$ followed by an MH accept/reject decision.
    \textbf{(b)} Our method collects feasible requests by traversing the prefetching tree, generates proposals $(\mathbf{y},\mathbf{z})$ in one batched call, and then samples a path over the subtree. 
    }
    \label{fig:pipeline}
\end{figure}

\subsection{Motivation}

Prior studies~\citep{ji2026scalable,DBLP:arxiv/power-smc} show that PowerMH incurs substantial inference-time latency.
On the MATH500 dataset, \citet{karan2026reasoning} report that PowerMH generates up to $8.8\times$ as many tokens as standard sampling.
In our measurements on LCB V6 with Qwen3.5-9B (Case Study~8, Appendix~\ref{apx:scaling-baselines}), PowerMH with $K=10$, $20$, and $30$ MH steps takes $6.5\times$, $12.2\times$, and $18.4\times$ as long per prompt as low-temperature sampling, which draws a single sample.
This latency arises from the \textit{inherently sequential nature of MH}: the proposal at step $k{+}1$ can be generated only after the accept/reject outcome at step $k$ determines the next state.

We first describe the suffix-resampling procedure in the PowerMH sampler.
For notational clarity, let $L=(\ell-1)B$ and $T=\ell B$ be the sequence lengths before and after the $\ell$-th stage.
Given the sequence $\mathbf{x}_{0:L}$ from the previous stage, the sampler first extends it to $\mathbf{x}_{0:T}$ by sampling a continuation from the proposal LLM, and this extended sequence is the initial MH state $\mathbf{x}^{(0)}=\mathbf{x}_{0:T}$.
At step $k$ with MH state $\mathbf{x}^{(k)}$, the sampler draws a cut index $c\sim\mathrm{Unif}\{0,\ldots,T-1\}$, and resamples the suffix from the proposal LLM to construct a candidate $\mathbf{y}=\bigl(\mathbf{x}^{(k)}_{0:c},\mathbf{y}_{c+1:T}\bigr)$:
\begin{equation}\label{eq:suffix-q}
\begin{aligned}
y_t \sim q_{\tau}\!\left( \cdot \mid \mathbf{x}^{(k)}_{0:c},\mathbf{y}_{c+1:t-1} \right),
& \qquad\text{ for }  t= c+1,\ldots, T,
\end{aligned}
\end{equation}
where the proposal distribution $q_{\tau}$ is the base LLM with temperature $\tau$.
The next state $\mathbf{x}^{(k+1)}$ is then determined by applying the acceptance rule in~\eqref{eq:accept-ratio} to the candidate $\mathbf{y}$.

\noindent\textbf{MCMC Prefetching} is a form of \emph{speculative computation} in MCMC that accelerates an inherently sequential process by precomputing multiple possible future MH transitions before the corresponding accept/reject decisions are realized~\citep{Brockwell01032006,DBLP:conf/uai/AngelinoKWSA14,DBLP:journals/tai/YeL24}.
In our setting, prefetching is best understood through a depth-$K$ tree, which enumerates all possible accept/reject outcomes over the next $K$ MH steps.
Each tree node $v_i$ is associated with a cut index $c_i$ and a proposed candidate.
Its left branch corresponds to accepting a newly proposed candidate, whereas its right branch corresponds to a rejection decision.
Every realization of $K$ MH transitions corresponds to a root-to-leaf path in the tree.
Example~\ref{example:k2-traj} explains this construction in detail.

\begin{example}
\label{example:k2-traj}
Figure~\ref{fig:pipeline}(a) shows the two-step PowerMH trajectory $\mathbf{x}\to\mathbf{y}\to\mathbf{y}$.
In round~1, starting at $\mathbf{x}$, the sampler draws cut index $c_1$, resamples the suffix according to~\eqref{eq:suffix-q} to generate $\mathbf{y}$, and accepts the proposal, yielding $\mathbf{x}^{(1)}=\mathbf{y}$.
  In round~2, it draws a new cut index $c_2$ and resamples the suffix of $\mathbf{y}$ to generate $\mathbf{z}$, but rejects this proposal, leaving $\mathbf{x}^{(2)}=\mathbf{y}$.
  These accept/reject decisions select the path $\mathbf{x}\to\mathbf{y}\to\mathbf{y}$ in the prefetched subtree in step~3 of Figure~\ref{fig:pipeline}(b).
\end{example}

To accelerate PowerMH and its follow-up variants, we propose to construct a \textit{prefetched subtree} that represents multiple possible future MH states and then select one root-to-leaf path using~\eqref{eq:accept-ratio}.

\subsection{Subtree Prefetching for MH-Based Power Sharpening}
\label{subsec:method-llm}

Our method is composed of (1) collecting feasible suffix-resampling requests from the subtree, (2) sampling proposal sequences and computing acceptance probabilities, and (3) performing MH transitions over the prefetched subtree by moving to the left or right child recursively.

\noindent\textbf{(1) Collecting feasible requests.}
We use breadth-first search (BFS) to traverse the prefetching tree and collect at most $N_{\mathrm{pf}}$ feasible suffix-resampling requests. In the BFS construction, we initialize a queue at the subtree root and traverse nodes in breadth-first order.
Each request is a tuple $(v_i,c_i)$, where $v_i$ identifies the node and $c_i$ denotes its cut index.
Expanding a node creates two children: an acceptance (left) child and a rejection (right) child.
For each acceptance child $v_i$, we draw a new cut index $c_i\sim\mathrm{Unif}\{0,\ldots,T-1\}$ and form the request $(v_i,c_i)$.
If this request satisfies Condition~\ref{cond:prefetch}, we add it to the batch and enqueue $v_i$ for further expansion.
Otherwise, we defer the request while retaining its original cut index; if its branch is reached, we evaluate it in a later batch.
We enqueue the rejection child but do not add a corresponding request to the batch, because the child retains its parent's state and therefore requires no new proposal.
Our BFS implementation enqueues the rejection child even when the node's own request is deferred. The requests that BFS then places below this child are generated and scored, but the MH walk of this batch stops at the deferred node and never reaches them; at \(N_{\mathrm{pf}}=10\) on LCB V6 with Qwen3.5-9B, they are \(19\%\) of the issued requests (Appendix~\ref{apx:acceptance-predictors}).
The traversal stops when the batch contains $N_{\mathrm{pf}}$ requests or the queue is empty. Section~\ref{subsec:efficiency} introduces our predictive traversal rule, which improves on the default BFS traversal by maximizing a predicted expected-transition objective, informed by an empirical property of prefetching trees observed across all datasets in Figure~\ref{fig:epsilon-certainty-datasets}.

\begin{condition}
\label{cond:prefetch}
Let $v$ be a node in the prefetching tree with cut index $c$ and $v'$ be its acceptance child with cut index $c'$.
The request associated with $v'$ is feasible for the current prefetch batch if $c'\le c$.
\end{condition}

Condition~\ref{cond:prefetch} follows from the suffix-resampling step in~\eqref{eq:suffix-q}.
Fix a state $\mathbf{x}=\mathbf{x}_{0:T}$ and a cut $0\le c\le T-1$, and let $\mathbf{y}=(\mathbf{x}_{0:c},\mathbf{y}_{c+1:T})$ be a candidate obtained by suffix resampling at $c$.
For any $0\le c'\le c$, the preserved prefix satisfies $\mathbf{y}_{0:c'}=\mathbf{x}_{0:c'}$.
Hence, for every suffix $\mathbf{z}_{c'+1:T}$, autoregressive factorization gives
\begin{equation*}
  q_\tau(\mathbf{z}_{c'+1:T} |\mathbf{y}_{0:c'})=\prod_{t=c'+1}^{T}
  q_\tau(z_t |\mathbf{y}_{0:c'},\mathbf{z}_{c'+1:t-1})=\prod_{t=c'+1}^{T}
  q_\tau(z_t| \mathbf{x}_{0:c'},\mathbf{z}_{c'+1:t-1})=q_\tau(\mathbf{z}_{c'+1:T} |\mathbf{x}_{0:c'}).
\end{equation*}
Consequently, the proposal at cut $c'$ can be generated in the same batch from the already available prefix $\mathbf{x}_{0:c'}$, before $\mathbf{y}$ is available, without changing its conditional distribution.
Example~\ref{ex:compose-cuts} visualizes how BFS collects suffix-resampling requests and checks Condition~\ref{cond:prefetch}.

\begin{example}\label{ex:compose-cuts}
To collect a batch of $N_{\mathrm{pf}}=2$ requests from the tree in step~1 of Figure~\ref{fig:pipeline}(b), we start BFS at the root node $v_0$, which represents the current sequence $\mathbf{x}_{0:T}$.
We assign it the cut index $c_0=T$ for consistency with Condition~\ref{cond:prefetch}.
We expand the tree in the order $v_0, v_1, v_2, \ldots$.
Whenever we visit an acceptance (left) child, we independently sample a new cut index for it.
If each sampled cut index satisfies Condition~\ref{cond:prefetch}, we add the request $(v_1,c_1)$, followed by the request $(v_3,c_2)$.

As shown in step~2 of Figure~\ref{fig:pipeline}(b), resampling the suffix at $v_1$ after cut index $c_1$ leaves the prefix $\mathbf{x}_{0:c_2}$ unchanged because $c_2\le c_1$.
Then, the proposal at $v_3$ is conditioned only on the \textcolor{SharedPrefixColor}{shared prefix $\mathbf{x}_{0:c_2}$}, which is already available at the root node $v_0$.
The two suffix-resampling requests $(v_1,c_1)$ and $(v_3,c_2)$ can therefore be issued in the same model call.
\end{example}

\textbf{(2) Batched proposal sampling.} The collected feasible suffix-resampling requests are submitted in a single batched call to the serving engine of the proposal LLM (e.g., vLLM~\citep{DBLP:conf/sosp/KwonLZ0ZY0ZS23} or SGLang~\citep{DBLP:conf/nips/ZhengYXS0YCKSGB24}), which returns the resampled suffixes together with their log-probabilities.
These log-probabilities are used to compute the MH log-acceptance probability.
Because many requests share long prefixes, the KV cache for each shared prefix can be computed once and reused by all requests that extend it.
Modern LLM serving engines support this reuse through \emph{prefix caching}.
The following Example~\ref{ex:acc-ratio-request} explains the detailed calculation procedure.
Example~\ref{ex:prefix-sharing} (Appendix~\ref{subsec:prefetch-examples}) visualizes how KV-cache reuse reduces repeated prefix computation across batched requests.

\begin{example}\label{ex:acc-ratio-request}
  In steps~2 and~3 of Figure~\ref{fig:pipeline}(b), sequence $\mathbf{x}$ is associated with the root $v_0$, while $\mathbf{y}$ and $\mathbf{z}$ are the newly resampled sequences associated with nodes $v_1$ and $v_3$, respectively, in the tree shown in step~3.
  For the request $(v_1,c_1)$, $\mathbf{y}$ is obtained by resampling the suffix from the proposal LLM $q_{\tau}$ conditioned on the prefix $\mathbf{x}_{0:c_1}$.
  Thus, $\mathbf{y} = \bigl( \mathbf{x}_{0:c_1}, \mathbf{y}_{c_1+1:T} \bigr)$.
  By~\eqref{eq:accept-ratio}, the log-acceptance probability for the MH transition from $\mathbf{x}$ to $\mathbf{y}$ is
  \begin{equation}
  \begin{aligned}
  \label{eq:log-acc-ratio}
  \log A(\mathbf{y}, \mathbf{x})
  =\min\{
  0, &\log q_{\tau}(\mathbf{x}_{c_1+1:T}\mid \mathbf{x}_{0:c_1})
  -
  \log q_{\tau}(\mathbf{y}_{c_1+1:T}\mid \mathbf{x}_{0:c_1}) \\
  &\qquad+\alpha \left(\log p_0(\mathbf{y}_{c_1+1:T}\mid \mathbf{x}_{0:c_1}) - \log p_0(\mathbf{x}_{c_1+1:T}\mid \mathbf{x}_{0:c_1}) \right)   \}.
  \end{aligned}
  \end{equation}
  The calculations are performed in the log domain for numerical stability.
  Each sequence-level term is computed by summing token-level log-probabilities, such as $\log q_{\tau}(\mathbf{y}_{c_1+1:T}\mid \mathbf{x}_{0:c_1}) = \sum_{t=c_1+1}^{T}\log q_{\tau}(y_t \mid \mathbf{x}_{0:c_1},\mathbf{y}_{c_1+1:t-1})$.
  Appendix~\ref{subsec:log-acc-derivation} provides the derivation of~\eqref{eq:log-acc-ratio}.
  \end{example}

\noindent\textbf{(3) MH transitions over the prefetched subtree.}
Starting from the subtree root, representing the current MH state, we simulate the MH accept/reject decisions sequentially along a single root-to-leaf path.
At each node $v$, we follow the left edge with probability $A(v^{+},v)$ and the right edge otherwise.
The resulting root-to-leaf path determines a sequence of accept/reject decisions and, consequently, the final MH state.
We update the current state to the sequence associated with the realized leaf and retain its token-level log-probabilities under the proposal LLM $q_\tau$ and the base model $p_0$ for computing  MH acceptance probabilities.
Example~\ref{ex:random-walk} (Appendix~\ref{subsec:prefetch-examples}) visualizes the above process with the calculated acceptance probabilities from~\eqref{eq:log-acc-ratio}.

Algorithms~\ref{alg:powermh} and~\ref{alg:subtree-mh} place the original PowerMH sampler and \method side by side.
Both share the same blockwise outer loop and the same acceptance rule; they differ only in how the $K$ MH steps of a block are executed.
PowerMH issues one model call per MH step and must wait for each accept/reject outcome before forming the next proposal.
\method instead issues one \emph{batched} call per prefetch round and realizes $k_\Delta\ge 1$ MH steps from it.
In Algorithm~\ref{alg:subtree-mh}, \textsc{PrefetchAndScore} builds the prefetching tree rooted at the current state $\mathbf{x}$, collects at most $N_{\mathrm{pf}}$ feasible requests under the tree traversal rule, and generates and scores them in one batched call.
It returns the scored prefetched subtree $\mathcal{T}$; deferred requests keep their cut indices across rounds.
\textsc{MHTransition} then follows the realized accept/reject path through $\mathcal{T}$ for at most $K-k$ transitions, the number that remain in the block, and returns the reached state $\mathbf{x}_\Delta$ together with the number $k_\Delta\le K-k$ of completed transitions; \textsc{PrefetchAndScore} likewise caps the tree at depth $K-k$ (Appendix~\ref{app:realized-prefetched-path}), so the block never exceeds $K$ transitions.
Proposition~\ref{thm:spec_mh_main} further establishes that \method preserves the original MH transition kernel.

\begin{figure}[!t]
  \refstepcounter{algorithm}\label{alg:powermh}
  \edef\powerMHAlgorithmNumber{\thealgorithm}
  \refstepcounter{algorithm}\label{alg:subtree-mh}
  \edef\subtreeMHAlgorithmNumber{\thealgorithm}
  \hrule height.8pt depth0pt
  \kern2pt
  \begin{minipage}[t]{0.5\textwidth}
  \textbf{Algorithm~\powerMHAlgorithmNumber} PowerMH~\citep{karan2026reasoning}
  \end{minipage}
  \begin{minipage}[t]{0.48\textwidth}
  \textbf{Algorithm~\subtreeMHAlgorithmNumber} \method (ours)
  \end{minipage}
  \kern2pt
  \hrule
  \kern2pt

  \textit{\textcolor{sglOrange}{Common settings:}} the base-model distribution $p_0$, proposal-LLM distribution $q_\tau$, power-sharpening coefficient $\alpha$, response length $n$, block size $B$, and MCMC steps per block $K$. \\
  \textit{\textcolor{sglBlue}{\method-only settings:}} tree traversal rule, prefetch budget $N_{\mathrm{pf}}$.
  
  \textbf{Output:} the final state $\mathbf{x}=\mathbf{x}_{0:n}$ after $K$ MH steps per block, approximately distributed as $\pi_\alpha$.
  \kern2pt
  \vspace{6pt}
  \small

  \begin{minipage}[t]{0.47\textwidth}
  
  \begin{algorithmic}[1]
  
  \For{$\ell \gets 1,\ldots, n/B$}
    \State Extend $\mathbf{x}_{0:L}$ to $\mathbf{x}_{0:T}$ using $q_{\tau}$.
    \For{$k \gets 1,\ldots, K$}
      \State \textcolor{sglOrange}{$\mathbf{y} \gets \Call{DrawOneCandidate}{\mathbf{x}}$.}
      \State \textcolor{sglOrange}{$A \gets \Call{ComputeAcceptance}{\mathbf{y}, \mathbf{x}}$.}
      \State $u \sim \mathrm{Unif}(0,1)$.
      \State \textbf{if} $u \le A$ \textbf{then} $\mathbf{x} \gets \mathbf{y}$.
    \EndFor
    
  \EndFor
  \State \Return $\mathbf{x}$.
  \end{algorithmic}
  \kern2pt
  \hrule
  \end{minipage}
  \hfill
  \begin{minipage}[t]{0.51\textwidth}
  \begin{algorithmic}[1]
  \For{$\ell \gets 1,\ldots, n/B$}
    \State Extend $\mathbf{x}_{0:L}$ to $\mathbf{x}_{0:T}$ using $q_{\tau}$.
    \State $k \gets 0$.
    \While{$k < K$}
      
      \State \textcolor{sglBlue}{$\mathcal{T} \gets \Call{PrefetchAndScore}{\mathbf{x}, N_{\mathrm{pf}}, K-k}$.}
  
      \State \textcolor{sglBlue}{$(\mathbf{x}_\Delta, k_{\Delta}) \gets \Call{MHTransition}{\mathcal{T}, \mathbf{x}, K-k}$.}
      \State $k \gets k + k_{\Delta}$; $\mathbf{x}\gets \mathbf{x}_\Delta$.
    \EndWhile
    
  \EndFor
  \State \Return $\mathbf{x}.$
  
  \end{algorithmic}
  \kern2pt
  \hrule
  \end{minipage}
\end{figure}

\subsection{Predictive Subtree Allocation for Efficient Prefetching}
\label{subsec:efficiency}

Prefetching efficiency depends on the total workload to generate a batch
and on the number of MH transitions that the batch completes. We show
that MH reaches only a sparse, nearly unary part of the prefetched
tree, and use this observation to decide which subtree to prefetch
under a budget.

\begin{wrapfigure}{R}{0.41\textwidth}
  \centering
  \centering
  \begin{tikzpicture}
    \node[anchor=south west,inner sep=0] (tree) at (0,0)
      {\includegraphics[width=\linewidth]{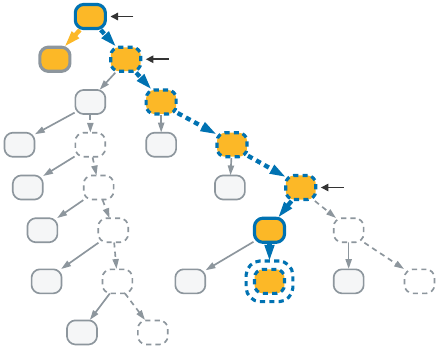}};
    \begin{scope}[x={(tree.south east)},y={(tree.north west)},
                  every node/.style={anchor=west,inner sep=0,font=\footnotesize}]
      \node at (0.3098,0.9526) {$\epsilon$-uncertain};
      \node at (0.3896,0.8297) {$\epsilon$-certain reject};
      \node[align=left] at (0.7835,0.4611) {$\epsilon$-certain\\accept};
    \end{scope}
  \end{tikzpicture}
  \caption{\small The $\epsilon$-rounded subtree (orange) is sparse and mostly
  has one child per node within the full prefetched subtree. Blue edges trace a
  sampled MH path.}
  \label{fig:epsilon-rounded-tree-theory}
\end{wrapfigure}
\textbf{Our Observation.}
Fix $\epsilon\in(0,\tfrac12)$ (we use $\epsilon=0.01$), and let $v$ be an expanded
node with acceptance child $v^{+}$ and rejection child $v^{-}$, reached with
probabilities $A(v^{+},v)$ and $1-A(v^{+},v)$, respectively. The decision at
$v$ is an $\epsilon$-certain accept if $A(v^{+},v)\geq1-\epsilon$, an
$\epsilon$-certain reject if $A(v^{+},v)\leq\epsilon$, and $\epsilon$-uncertain
otherwise (see Figure~\ref{fig:epsilon-rounded-tree-theory} for an example). Rounding the acceptance probability of every $\epsilon$-certain
decision to $1$ or $0$ leaves it with a single reachable child, whereas an
$\epsilon$-uncertain decision keeps both. We call the nodes that remain
reachable from the root after this rounding the $\epsilon$-rounded subtree.

Figure~\ref{fig:traversal-rules}(d) and Appendix~\ref{subsec:tree-visualization} show prefetched
subtrees in full, with their $\epsilon$-rounded subtrees in orange and the
sampled paths in blue. Most nodes lie outside the
$\epsilon$-rounded subtree, so MH is unlikely to visit them. The default BFS
traversal, however, expands decisions regardless of how likely MH is to reach
them, and therefore spends much of the budget on nodes that MH rarely visits.

\begin{figure}[!t]
  \centering
  \includegraphics[width=\linewidth]{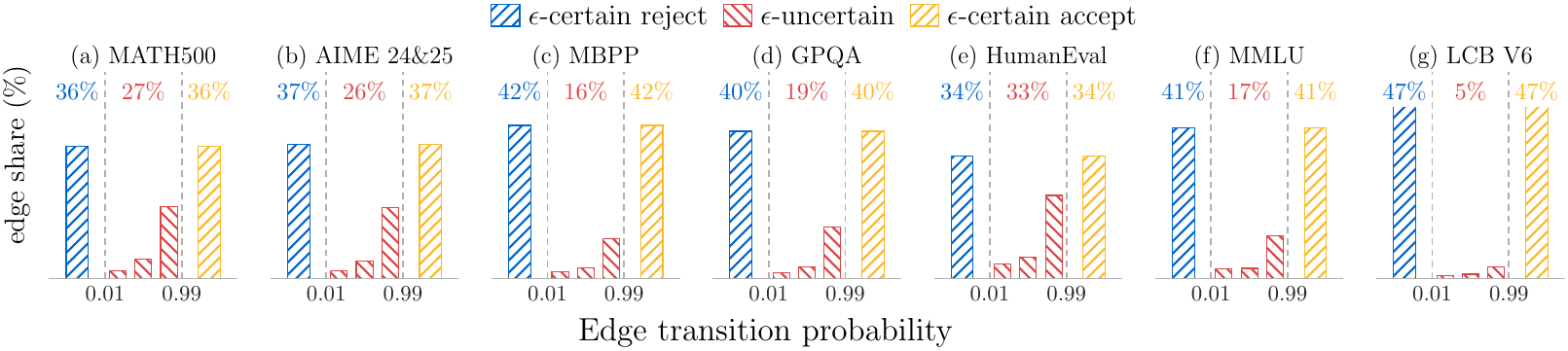}
  \vspace{-2em}
  \caption{\textbf{Most edges in prefetched subtrees are \(\epsilon\)-certain.}
  For \method with Qwen3.5-9B and the default BFS traversal, \(67\%\)--\(95\%\) of edges are \(\epsilon\)-certain across all seven datasets, with \(\epsilon=0.01\) and \(N_{\mathrm{pf}}=20\).}
  \label{fig:epsilon-certainty-datasets}
\end{figure}
The $\epsilon$-rounded subtree is predominantly unary because the target
distribution $\pi_\alpha$ in~\eqref{eq:pi-alpha} amplifies
high-likelihood continuations and suppresses low-likelihood ones, which pushes
most acceptance probabilities toward $0$ or $1$.
Figure~\ref{fig:epsilon-certainty-datasets} confirms that most edges are
$\epsilon$-certain at $\epsilon=0.01$: pooling the acceptance and rejection
edges of all decisions, about $95\%$ of the edges are $\epsilon$-certain on
LCB V6 (panel~(g)). Appendix~\ref{subsec:edge-certainty} reports additional
acceptance distributions.

\textbf{Predictive traversal rule.} These observations motivate allocating
prefetch requests selectively rather than uniformly. We therefore design a
traversal rule that, for the same budget, maximizes the predicted number of
completed MH transitions. Before generation, a three-class logistic regression
on the cut fraction $c_j/T$ of each candidate request predicts the
$\epsilon$-certainty class probabilities of the decision, and we convert
them into a branch probability. We then select at most $N_{\mathrm{pf}}$
requests (one per expanded node) that maximize the predicted number of completed
transitions (Lemma~\ref{lem:prefetch-expectation}), computing this selection
by dynamic programming over the acceptance and rejection subtrees
(Appendices~\ref{subsec:predictive-dp} and~\ref{subsec:dp-allocation}). We call this rule the
\emph{predictive traversal rule}, in contrast to the default BFS traversal of
Section~\ref{subsec:method-llm}. Case Study~5 in
Section~\ref{subsec:case-study} compares the two rules empirically, with
extended results in Appendix~\ref{apx:traversal-rules}.

\section{Related Work}

We review related work most relevant to \framework here; see Appendix~\ref{apx:related} for an extended discussion.

\textbf{MCMC prefetching.}
Like prior MCMC-prefetching methods~\citep{Brockwell01032006,DBLP:conf/uai/AngelinoKWSA14,DBLP:journals/tai/YeL24},
\framework precomputes candidates for future accept/reject outcomes
before the realized path is known. The methods differ in how they
allocate this speculative work. \citet{Brockwell01032006} and
\citet{DBLP:conf/uai/AngelinoKWSA14} treat proposal generation as cheap
and therefore prefetch many more candidates than the chain eventually
uses. LLM proposals are expensive, so \framework confines prefetching to
a budgeted subtree.
\citet{DBLP:journals/tai/YeL24} expand a regular multiproposal tree with a
fixed number of proposals and a fixed prefetch depth, whereas \framework
expands an irregular subtree: it uses Condition~\ref{cond:prefetch} to
identify feasible suffix requests and prioritizes them by predicted reach probability.

\textbf{Tree-based speculative decoding
(TreeSD)}~\citep{DBLP:conf/asplos/MiaoOZCWZWZYSSC24,DBLP:conf/icml/CaiLGPLCD24,DBLP:conf/icml/LiW0024,DBLP:conf/emnlp/LiW0024}
also batches evaluations over a tree to reduce serial model calls, but
its tree encodes different objects. In TreeSD, branches are alternative
\textit{draft-token continuations}, and target-model verification
commits an accepted token prefix, as in single-chain speculative
decoding~\citep{DBLP:conf/nips/SternSU18,DBLP:conf/icml/LeviathanKM23,chen2023accelerating}.
In \framework, branches are the accept/reject outcomes of
\textit{complete-sequence MH proposals}, and following the realized MH
decisions reproduces the sequential PowerMH trajectory. The shared tree
structure admits similar budget-allocation arguments, but the unit of
progress differs: each visited expanded node in \framework advances the
chain by one MH transition, whether the proposal is accepted or
rejected.

\textbf{Power-sharpening methods.}
Other work targets the power distribution through SMC-based
samplers~\citep{DBLP:arxiv/power-smc,he2026recipes}, efficient
approximations~\citep{ji2026scalable,nguyen2026futurevalue}, refined
proposals~\citep{entropycut,powermh-mtm}, or modified objectives and
training procedures~\citep{arzhantsev2026selfconsistency,tomihari2026power}.
\framework takes a different route: it accelerates PowerMH itself,
keeping its target, suffix proposal, and MH acceptance
rule intact, and changes only the schedule by
which proposals are generated and scored.

\section{Experiments}\label{sec:exp}
We first show that \framework speeds up all three sequential samplers across base LLMs and datasets (Table~\ref{tab:paired-speedup}).
Five case studies (Section~\ref{subsec:case-study}) then examine where the speedup comes from, how the prefetch budget affects it, whether the outputs are preserved, how often MH decisions are \(\epsilon\)-certain, and how the traversal rule matters.
Appendix~\ref{apx:exp-set} describes the baselines, base LLMs, datasets, and metrics, and Appendix~\ref{apx:extra-exp} reports the full results.

\begin{table}[!ht]
    \centering
    \caption{Speedup of \framework over sequential samplers: the mean cumulative model-call time of the sampler divided by that of its \framework variant~\eqref{eq:paired-call-speedup}, on the first 20 prompts with \(K=100\) and \(N_{\mathrm{pf}}=20\).}
    \label{tab:paired-speedup}
    \begin{tabular}{@{}lccccccc@{}}
        \multicolumn{8}{c}{\textbf{(a)} PowerMH vs. \method}\\
        \toprule
        Base LLM & MATH500 & AIME 24\&25 & MBPP & GPQA & HumanEval & MMLU & LCB V6 \\
        \midrule
        Qwen3-4B    & \(2.11\times\) & \(2.12\times\) & \(1.10\times\) & \(1.63\times\) & \(1.77\times\) & \(1.59\times\) & \(2.04\times\) \\
        Qwen3-8B    & \(2.23\times\) & \(2.29\times\) & \(1.00\times\) & \(2.12\times\) & \(1.81\times\) & \(1.50\times\) & \(2.05\times\) \\
        Qwen3.5-4B  & \(1.49\times\) & \(1.66\times\) & \(1.67\times\) & \(1.69\times\) & \(1.59\times\) & \(1.76\times\) & \(2.02\times\) \\
        Qwen3.5-9B  & \(2.06\times\) & \(2.08\times\) & \(1.61\times\) & \(2.05\times\) & \(1.21\times\) & \(1.18\times\) & \(1.44\times\) \\ 
        Gemma4-12B-it & \(1.74\times\) & \(1.76\times\) & \(1.44\times\) & \(1.67\times\) & \(1.71\times\) & \(1.52\times\) & \(1.95\times\) \\
        \bottomrule
        \addlinespace[2pt]
        \multicolumn{8}{c}{\textbf{(b)} EntropyCut vs. \methodEnt}\\
        \toprule
        Base LLM & MATH500 & AIME 24\&25 & MBPP & GPQA & HumanEval & MMLU & LCB V6 \\
        \midrule
        Qwen3-4B    & \(1.76\times\) & \(1.69\times\) & \(1.19\times\) & \(2.09\times\) & \(1.88\times\) & \(2.03\times\) & \(1.65\times\) \\
        Qwen3-8B    & \(1.75\times\) & \(1.80\times\) & \(1.43\times\) & \(1.79\times\) & \(2.05\times\) & \(2.20\times\) & \(1.54\times\) \\
        Qwen3.5-4B  & \(1.57\times\) & \(2.06\times\) & \(2.46\times\) & \(1.96\times\) & \(2.06\times\) & \(1.65\times\) & \(1.70\times\) \\
        Qwen3.5-9B  & \(1.76\times\) & \(1.76\times\) & \(2.85\times\) & \(1.96\times\) & \(2.85\times\) & \(2.38\times\) & \(1.91\times\) \\
        Gemma4-12B-it & \(2.27\times\) & \(1.52\times\) & \(2.10\times\) & \(2.16\times\) & \(2.32\times\) & \(2.35\times\) & \(1.79\times\) \\
        \bottomrule
        \addlinespace[2pt]
        \multicolumn{8}{c}{\textbf{(c)} MultiTryMH vs. \methodMTM}\\
        \toprule
        Base LLM & MATH500 & AIME 24\&25 & MBPP & GPQA & HumanEval & MMLU & LCB V6 \\
        \midrule
        Qwen3-4B     & \(2.30\times\) & \(2.02\times\) & \(2.35\times\) & \(2.34\times\) & \(2.21\times\) & \(2.14\times\) & \(2.32\times\) \\
        Qwen3-8B     & \(2.08\times\) & \(2.02\times\) & \(1.97\times\) & \(2.05\times\) & \(1.91\times\) & \(1.92\times\) & \(2.08\times\) \\
        Qwen3.5-4B   & \(5.80\times\) & \(6.12\times\) & \(5.35\times\) & \(5.64\times\) & \(5.59\times\) & \(6.36\times\) & \(5.76\times\) \\
        Qwen3.5-9B   & \(7.01\times\) & \(6.90\times\) & \(5.93\times\) & \(6.83\times\) & \(6.18\times\) & \(6.92\times\) & \(6.87\times\) \\
        Gemma4-12B-it& \(2.27\times\) & \(2.54\times\) & \(2.18\times\) & \(2.13\times\) & \(2.17\times\) & \(2.33\times\) & \(2.22\times\) \\
        \bottomrule
    \end{tabular}%
\end{table}

\subsection{Acceleration Across Base LLMs and Datasets}

We pair each sequential sampler (PowerMH~\citep{karan2026reasoning}, EntropyCut~\citep{entropycut}, and MultiTryMH~\citep{powermh-mtm}) with its \framework variant.
Both samplers in a pair run \(K=100\) realized MH transitions on the first 20 prompts of each dataset and generate the full \(1{,}024\)-token response as a single block on the same vLLM backend, with the base LLM, power exponent \(\alpha\), proposal rule, initialization, and prompts held fixed (Appendix~\ref{apx:hyperparameters}).

Table~\ref{tab:paired-speedup} reports the speedup for every base LLM--dataset pair (full metrics in Appendix~\ref{apx:steps-time-compare}). We find that
\textbf{(1)} \framework speeds up all three baselines: it is never slower in the 105 settings, reaches at least \(1.5\times\) in 96, and achieves median speedups of \(1.71\times\) over PowerMH, \(1.91\times\) over EntropyCut, and \(2.33\times\) over MultiTryMH (ranges \(1.00\times\)--\(2.29\times\), \(1.19\times\)--\(2.85\times\), and \(1.91\times\)--\(7.01\times\), respectively).
Over MultiTryMH, part of the speedup comes from the implementation rather than from prefetching: sequential MultiTryMH rescores every candidate at all three temperatures, whereas \methodMTM reuses the sampled log-probabilities (Appendix~\ref{apx:steps-time-compare}).
Each \framework call completes several MH transitions, which cuts the number of model calls by \(1.77\times\)--\(4.92\times\).
\textbf{(2)} The gain is largest when the sequential step is expensive: on MBPP with Qwen3.5-9B, an EntropyCut step (\(7.62\)\,s) is slower than a PowerMH step (\(4.97\)\,s), so a \methodEnt call costs only \(1.45\times\) a step and the speedup reaches \(2.85\times\), versus \(1.61\times\) for \method.
\textbf{(3)} The gain is smallest when few prefetched proposals are realized relative to the cost of a call: of the five settings below \(1.25\times\), the four of \method (MBPP with both Qwen3 models, and HumanEval and MMLU with Qwen3.5-9B) realize only \(1.96\)--\(2.72\) transitions per call, each costing about \(2\times\) a PowerMH step, and \methodEnt on MBPP with Qwen3-4B realizes \(3.51\) per call, each costing \(2.7\times\) an EntropyCut step.

\subsection{Case Studies}
\label{subsec:case-study}

\textbf{Case Study 1: Model-call efficiency.}
We decompose cumulative model-call time into realized MH transitions per model call (chain progress) and time per call (cost).
Figure~\ref{fig:steps-and-time-compare-qwen35} reports these metrics with Qwen3.5-9B on LCB V6, and Figure~\ref{fig:teaser} places all 105 settings by the two factors of the speedup~\eqref{eq:paired-call-speedup}: the reduction in model calls and the change in time per call.

\begin{figure}[!t]
    \centering
    \includegraphics[width=\linewidth,trim=0 0 0 11.5pt,clip]{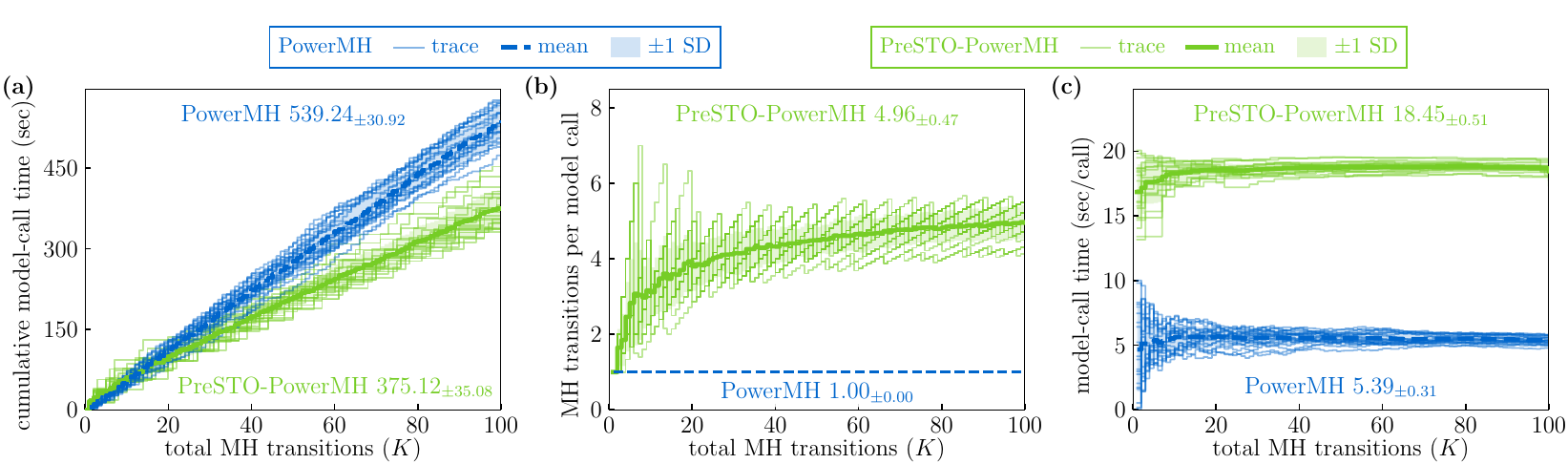}
    \caption{Under LCB V6 and Qwen3.5-9B,
    \framework (\(N_{\mathrm{pf}}=20\), default BFS traversal) speeds up PowerMH by \(1.44\times\) in mean cumulative model-call time: it makes \(4.92\times\) fewer model calls (a ratio of means), each taking \(3.42\times\) as long, while the per-prompt transitions per call average \(4.96\) (Appendix~\ref{apx:metrics}).}
    \label{fig:steps-and-time-compare-qwen35}
\end{figure}

\begin{table}[!t]
    \centering
    \caption{Under dataset LCB V6 and base LLM Qwen3.5-9B,
    raising the prefetch budget \(N_{\mathrm{pf}}\) from \(8\) to \(20\) increases the speedup of \method over PowerMH from \(1.20\times\) to \(1.44\times\).
    Transitions per call is \(K\) divided by the mean number of calls, and time per call is the total call time divided by the total number of calls.}
    \label{tab:prefetch-budget-impact}
    \small
    \begin{tabular}{lrrrrrr}
        \toprule
        & \multirow{2}{*}{PowerMH} & \multicolumn{5}{c}{\method, prefetch budget \(N_{\mathrm{pf}}\)} \\
        \cmidrule(lr){3-7}
        & & 8 & 10 & 12 & 16 & 20 \\
        \midrule
        MH transitions per model call        & 1.00 & 3.59 & 3.94 & 4.27 & 4.63 & 4.91 \\
        Time per model call (s)              & 5.39 & 16.15 & 16.56 & 17.58 & 18.15 & 18.43 \\
        Speedup over PowerMH                 & 1.00\(\times\) & 1.20\(\times\) & 1.28\(\times\) & 1.31\(\times\) & 1.38\(\times\) & 1.44\(\times\) \\
        \bottomrule
    \end{tabular}
\end{table}

From Figure~\ref{fig:steps-and-time-compare-qwen35}, we find that
\textbf{(1)} \method is \(1.44\times\) faster than PowerMH: it completes \(K=100\) transitions in \(375.12_{\pm35.08}\) rather than \(539.24_{\pm30.92}\) seconds.
\textbf{(2)} the speedup comes from fewer calls, not cheaper ones: each \method call realizes \(4.96_{\pm0.47}\) transitions instead of one, but takes \(18.45_{\pm0.51}\) rather than \(5.39_{\pm0.31}\) seconds. A \method call here takes about \(18\) seconds, against \(10.5\)--\(10.9\) seconds for Qwen3.5-9B on the other six datasets (Table~\ref{tab:all-paired-mean-metrics}), so the speedup is lower than in those settings. \methodEnt shows the same pattern on LCB V6 (\(1.91\times\)): \(4.22_{\pm0.58}\) transitions per call at \(10.08_{\pm0.47}\) rather than \(4.64_{\pm0.52}\) seconds; as a ratio of means, it makes \(4.15\times\) fewer calls, each \(2.17\times\) as long.
\textbf{(3)} across the 105 settings (Figure~\ref{fig:teaser}), what drives the speedup depends on the sampler. The log speedup equals the log reduction in calls minus the log increase in time per call; we split its variance between the two terms by their covariances with it (the shares sum to one). Over PowerMH, the reduction in calls (\(1.94\times\)--\(4.92\times\)) takes a \(95\%\) share (\(R^2=0.78\)), because time per call rises by a nearly constant \(1.90\times\)--\(2.42\times\). Over EntropyCut and MultiTryMH, the time-per-call ratio (\(1.37\times\)--\(2.74\times\) and \(0.26\times\)--\(1.00\times\)) takes \(94\%\) and \(99\%\) shares (\(R^2=0.75\) and \(0.99\)), because the reduction in calls varies less (\(3.11\times\)--\(4.52\times\) and \(1.77\times\)--\(2.05\times\)).

\textbf{Case Study 2: Effect of prefetch budget.}
We sweep \(N_{\mathrm{pf}}\in\{8,10,12,16,20\}\) for \method with the default BFS traversal on LCB V6 with Qwen3.5-9B, against the PowerMH run of Case Study~6; all runs share one host and were recorded within three days (Appendix~\ref{apx:lcb-provenance}).
Table~\ref{tab:prefetch-budget-impact} reports chain progress (transitions per call, speedup over PowerMH) and the cost of a call; each budget uses a single seed.

From Table~\ref{tab:prefetch-budget-impact}, we find that
\textbf{(1)} the speedup grows with the budget up to the default \(N_{\mathrm{pf}}=20\): from \(N_{\mathrm{pf}}=8\) to \(20\), transitions per call rise from \(3.59\) to \(4.91\) and the speedup from \(1.20\times\) to \(1.44\times\).
\textbf{(2)} each call slows with the budget, from \(16.2\) to \(18.4\) seconds against \(5.4\) seconds for a PowerMH step, so the speedup grows more slowly than the transitions per call; every prefetched proposal is generated whether or not it is realized.

\textbf{Case Study 3: Likelihood and confidence.}
\framework changes how proposals are scheduled and evaluated, not the MH target, so a sampler and its \framework variant should produce terminal traces with the same distribution of token-normalized log-likelihood and confidence under \(p_0\)~\citep{karan2026reasoning}.
We rescore the terminal traces of each pair under \(p_0\) after \(K=100\) transitions and test each statistic with a paired two one-sided tests (TOST) equivalence test at a margin of \(0.2\) baseline standard deviations (Appendix~\ref{apx:passes}).

\begin{figure}[!t]
    \centering
    \includegraphics[width=\linewidth]{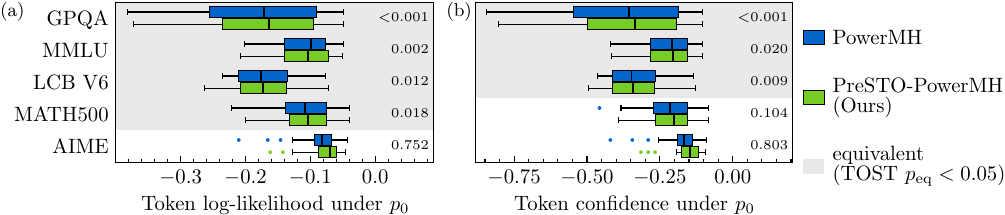}
    \caption{Under base LLM Qwen3.5-4B across five datasets,
    \method and PowerMH produce closely overlapping distributions of base-model log-likelihood and confidence, and the paired TOST finds them equivalent in 7 of the 10 comparisons (shaded).}
    \label{fig:likelihood-confidence}
\end{figure}

From Figure~\ref{fig:likelihood-confidence}, we find that
\textbf{(1)} with Qwen3.5-4B, the distributions of \method and PowerMH overlap closely on all five datasets, and the test establishes equivalence in 7 of the 10 comparisons; in the other three, both statistics on AIME and confidence on MATH500, the 90\% intervals extend past the margin.
\textbf{(2)} the result holds beyond this base LLM: across all pairs that pass the rescoring check of Appendix~\ref{apx:passes} (8 for PowerMH and 11 for EntropyCut, two statistics each), 29 of the 38 tests establish equivalence, and the pooled differences on both statistics lie within the \(\pm0.2\)-SD margin (\(p_{\mathrm{eq}}<0.001\)).

\textbf{Case Study 4: Distribution of $\epsilon$-certainty.}
Figures~\ref{fig:epsilon-certainty-N20} and~\ref{fig:entropycut-certainty} in Appendix~\ref{subsec:edge-certainty} report the distribution of \(\epsilon\)-certainty for all five base LLMs under \method and \methodEnt. We find that
\textbf{(1)} most decisions are \(\epsilon\)-certain for every base LLM: \(56\%\)--\(95\%\) of edges for the Qwen models and \(67\%\)--\(78\%\) for Gemma4-12B-it under \method, and \(60\%\)--\(92\%\) for the Qwen models and \(60\%\)--\(93\%\) for Gemma4-12B-it under \methodEnt.
\textbf{(2)} fewer certain decisions leave more branches in the \(\epsilon\)-rounded tree, so a fixed budget tends to realize fewer transitions per call. This link to runtime is correlational, because the histograms count all scored decisions rather than realized acceptances.

\begin{figure}[!h]
    \centering
    \includegraphics[width=\linewidth]{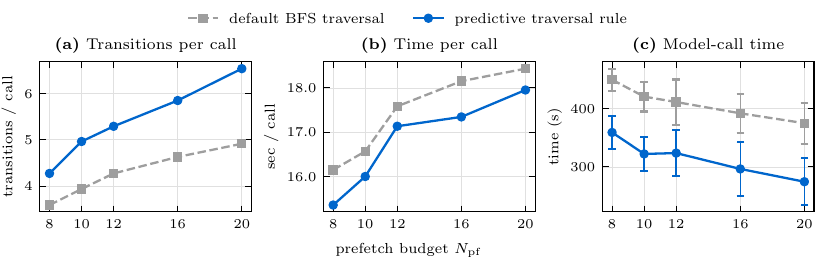}
    \par\medskip
    
    {\small \textbf{(d)} First prefetched subtree of one prompt}\par\smallskip
    \resizebox{\textwidth}{!}{\begin{minipage}{396.0pt}
    \makebox[24.0pt][l]{}%
    \makebox[45.8pt][c]{\footnotesize \(N_{\mathrm{pf}}=8\)}\hspace{4.0pt}%
    \makebox[61.7pt][c]{\footnotesize \(N_{\mathrm{pf}}=10\)}\hspace{4.0pt}%
    \makebox[77.5pt][c]{\footnotesize \(N_{\mathrm{pf}}=12\)}\hspace{4.0pt}%
    \makebox[79.6pt][c]{\footnotesize \(N_{\mathrm{pf}}=16\)}\hspace{4.0pt}%
    \makebox[91.2pt][c]{\footnotesize \(N_{\mathrm{pf}}=20\)}
    \par\smallskip
    \begin{minipage}[b]{20.0pt}\raggedleft\parbox[b][38.7pt][c]{\linewidth}{\raggedleft\rotatebox[origin=c]{90}{\begin{tabular}[c]{@{}c@{}}\footnotesize Prompt 0\\\footnotesize BFS\end{tabular}}}\par{\scriptsize\strut\par}\end{minipage}\hspace{4.0pt}%
    \begin{minipage}[b]{45.8pt}
        \centering
        \includegraphics[scale=0.233]{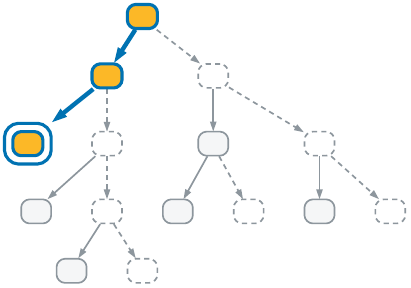}\par
        {\scriptsize MH \(0\rightarrow2\)\par}
    \end{minipage}\hspace{4.0pt}%
    \begin{minipage}[b]{61.7pt}
        \centering
        \includegraphics[scale=0.233]{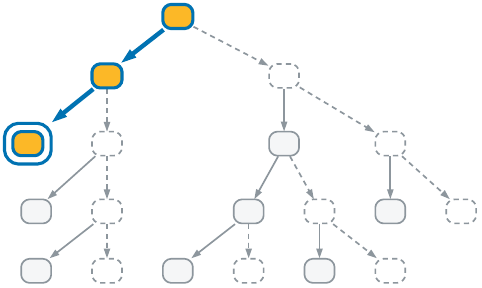}\par
        {\scriptsize MH \(0\rightarrow2\)\par}
    \end{minipage}\hspace{4.0pt}%
    \begin{minipage}[b]{77.5pt}
        \centering
        \includegraphics[scale=0.233]{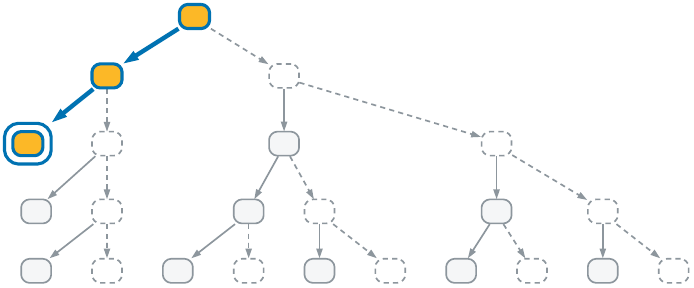}\par
        {\scriptsize MH \(0\rightarrow2\)\par}
    \end{minipage}\hspace{4.0pt}%
    \begin{minipage}[b]{79.6pt}
        \centering
        \includegraphics[scale=0.233]{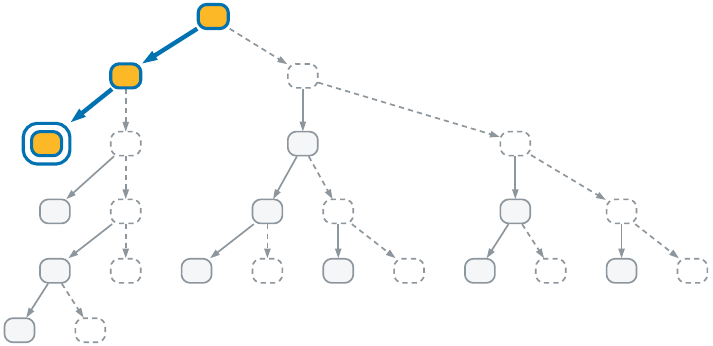}\par
        {\scriptsize MH \(0\rightarrow2\)\par}
    \end{minipage}\hspace{4.0pt}%
    \begin{minipage}[b]{91.2pt}
        \centering
        \includegraphics[scale=0.233]{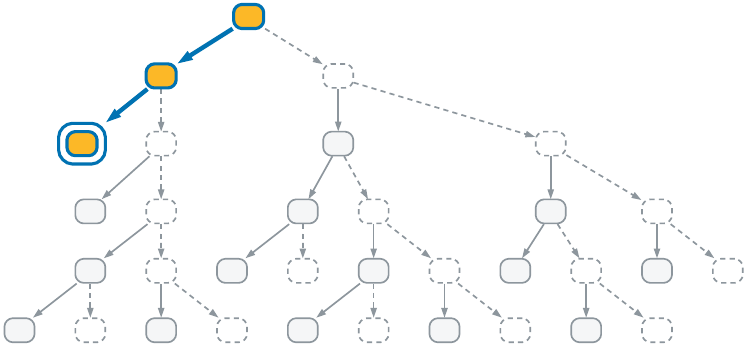}\par
        {\scriptsize MH \(0\rightarrow2\)\par}
    \end{minipage}
    \par\smallskip
    \begin{minipage}[b]{20.0pt}\raggedleft\parbox[b][45.4pt][c]{\linewidth}{\raggedleft\rotatebox[origin=c]{90}{\begin{tabular}[c]{@{}c@{}}\footnotesize Prompt 0\\\footnotesize Predictive\end{tabular}}}\par{\scriptsize\strut\par}\end{minipage}\hspace{4.0pt}%
    \begin{minipage}[b]{45.8pt}
        \centering
        \includegraphics[scale=0.233]{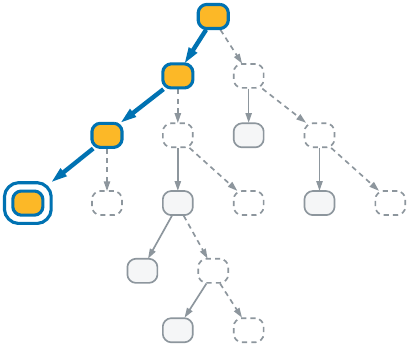}\par
        {\scriptsize MH \(0\rightarrow3\)\par}
    \end{minipage}\hspace{4.0pt}%
    \begin{minipage}[b]{61.7pt}
        \centering
        \includegraphics[scale=0.233]{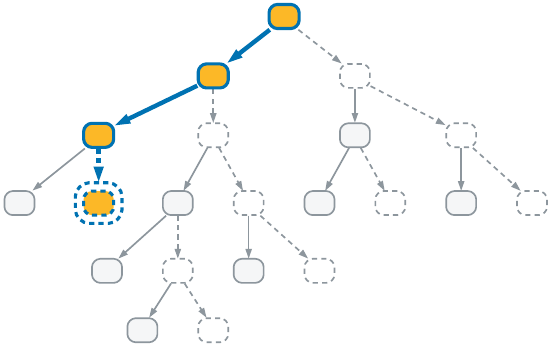}\par
        {\scriptsize MH \(0\rightarrow3\)\par}
    \end{minipage}\hspace{4.0pt}%
    \begin{minipage}[b]{77.5pt}
        \centering
        \includegraphics[scale=0.233]{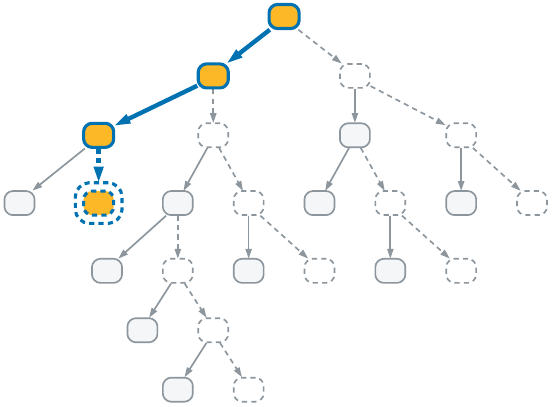}\par
        {\scriptsize MH \(0\rightarrow3\)\par}
    \end{minipage}\hspace{4.0pt}%
    \begin{minipage}[b]{79.6pt}
        \centering
        \includegraphics[scale=0.233]{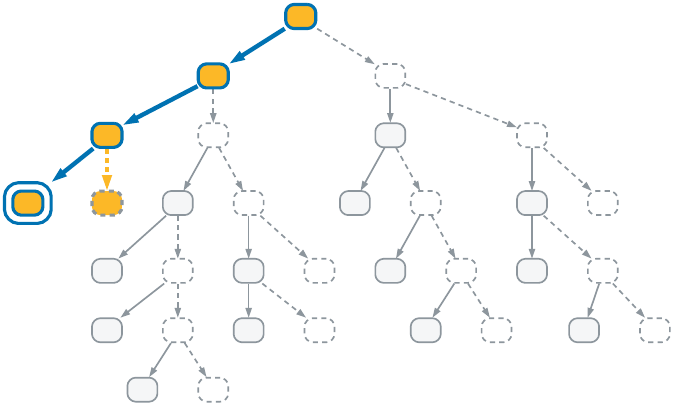}\par
        {\scriptsize MH \(0\rightarrow3\)\par}
    \end{minipage}\hspace{4.0pt}%
    \begin{minipage}[b]{91.2pt}
        \centering
        \includegraphics[scale=0.233]{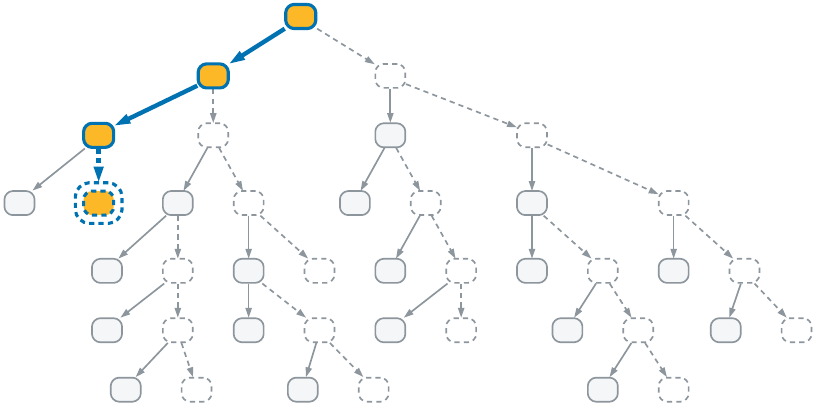}\par
        {\scriptsize MH \(0\rightarrow3\)\par}
    \end{minipage}
    \end{minipage}}

    \caption{Predictive traversal rule against BFS for \method on LCB V6 with Qwen3.5-9B.
    (a--c) The predictive rule is \(1.25\times\)--\(1.37\times\) faster at every prefetch budget because it realizes \(19\%\)--\(33\%\) more MH transitions per call; points in (a, b) are ratios of totals over the 20 prompts, and error bars in (c) show \(\pm1\) SD over prompts.
    (d) First prefetched subtree of prompt 0, a favorable case: 3 transitions against 2 at every budget (prompts 1--3 in Appendix~\ref{apx:traversal-rules}).
    Label MH \(0\rightarrow k\): the sampled path advances the chain from step 0 to \(k\). Solid left edges: acceptance; dashed right edges: rejection; orange: \(\epsilon\)-rounded subtree (\(\epsilon=0.01\)); blue: sampled path (endpoint double-bordered); gray: other prefetched branches.
    }
    \label{fig:traversal-rules}
\end{figure}

\textbf{Case Study 5: Effect of traversal rules.}
We compare the default BFS traversal with the predictive traversal rule, which solves the objective in~\eqref{eq:prefetch-selection-objective} by dynamic programming.
The two rules differ in two ways: the predictive rule weights each candidate by its predicted branch probabilities, and it never selects a request below a deferred one, whereas BFS spends part of its budget on such unreachable requests.
On LCB V6 with Qwen3.5-9B, we run \method with both rules at \(N_{\mathrm{pf}}\in\{8,10,12,16,20\}\) on the same 20 prompts (Appendix~\ref{apx:traversal-rules}).
Appendix~\ref{apx:acceptance-predictors} details the classifier that the predictive rule uses.

From Figure~\ref{fig:traversal-rules}, we find that
\textbf{(1)} the predictive rule is faster than BFS at every budget, by \(1.25\times\)--\(1.37\times\) in cumulative model-call time (panel~(c)), and on 98 of the 100 prompt--budget pairs.
\textbf{(2)} the gain comes from fewer model calls: the predictive rule realizes \(19\%\)--\(33\%\) more transitions per call (panel~(a)), so it needs \(1.19\times\)--\(1.33\times\) fewer calls, while its time per call is \(2.6\%\)--\(4.9\%\) shorter (panel~(b)).
\textbf{(3)} the gain is largest at \(N_{\mathrm{pf}}=20\) (\(1.37\times\)), where transitions per call rise from \(4.91\) to \(6.54\).
Panel~(d) shows the first prefetched subtree of prompt 0 under both rules, and Appendix~\ref{apx:traversal-rules} shows three more prompts.

Three further case studies compare serving engines (Case Study~6, Appendix~\ref{apx:serving-engines}), predict \(\epsilon\)-certainty before generation (Case Study~7, Appendix~\ref{apx:acceptance-predictors}), and compare \method with test-time baselines (Case Study~8, Appendix~\ref{apx:scaling-baselines}); Appendix~\ref{subsec:tree-visualization} shows an example prefetched-subtree trajectory.

\section{Conclusion}
We study how to accelerate MH-based power sampling, whose sequential accept/reject decisions force one model call per MH transition.
We propose \framework, which prefetches proposals for a subtree of future MH states in one batched call and follows the realized path with the original acceptance rule; for PowerMH, this leaves the MH transition kernel unchanged (Proposition~\ref{thm:spec_mh_main}).
Because the part of a prefetched subtree reached by MH is sparse and nearly unary, our predictive traversal rule allocates the prefetch budget by dynamic programming.
In experiments, across five base LLMs and seven datasets, \framework is never slower than its baseline and achieves median speedups of $1.71\times$ over PowerMH, $1.91\times$ over EntropyCut, and $2.33\times$ over MultiTryMH.
The speedup comes from fewer model calls, at a cost of at most $3.1\%$ of the preallocated KV-cache pool, and the log-likelihood and confidence of the outputs pass paired equivalence tests against the baselines in 29 of 38 comparisons.
The predictive traversal rule adds a further \(1.25\times\)--\(1.37\times\) speedup over the default BFS traversal on LCB V6 with Qwen3.5-9B.

\section*{Acknowledgments}
We thank the CoreWeave AI cloud platform for supporting part of the computational resources used in this work. Nan Jiang acknowledges support from the Texas Advanced Computing Center (TACC) under award CCR25054. Panagiotis Theodoropoulos was partly supported by the
Defense Advanced Research Projects Agency (DARPA) through the Artificial Intelligence Quantified
(AIQ) program, under Cooperative Agreement No. HR00112520010. The views, opinions, and/or
findings expressed are those of the authors and should not be interpreted as representing the official
views or policies of the Department of Defense or the U.S. Government.

\bibliography{references}

\begin{thebibliography}{63}
\providecommand{\natexlab}[1]{#1}
\providecommand{\url}[1]{\texttt{#1}}
\expandafter\ifx\csname urlstyle\endcsname\relax
  \providecommand{\doi}[1]{doi: #1}\else
  \providecommand{\doi}{doi: \begingroup \urlstyle{rm}\Url}\fi

\bibitem[Ahmadi et~al.(2026)Ahmadi, Azizi, Potraghloo, Kamal, and Pedram]{ahmadi2026chopthin}
Minoo Ahmadi, Seyedarmin Azizi, Erfan~Baghaei Potraghloo, Mehdi Kamal, and Massoud Pedram.
\newblock {Chopthin-Consensus Power Sampling: A Diversity-Preserving Approach to {LLM} Decoding}.
\newblock In \emph{COLM 2026 Workshop on Efficient Reasoning}, 2026.

\bibitem[Angelino et~al.(2014)Angelino, Kohler, Waterland, Seltzer, and Adams]{DBLP:conf/uai/AngelinoKWSA14}
Elaine Angelino, Eddie Kohler, Amos Waterland, Margo~I. Seltzer, and Ryan~P. Adams.
\newblock {Accelerating {MCMC} via Parallel Predictive Prefetching}.
\newblock In \emph{Proc. of the Conference on Uncertainty in Artificial Intelligence (UAI)}, pages 22--31, 2014.

\bibitem[Arzhantsev et~al.(2026)Arzhantsev, Sakhi, and Chopin]{arzhantsev2026selfconsistency}
Aleksei Arzhantsev, Otmane Sakhi, and Nicolas Chopin.
\newblock {Self-Consistency via Marginal Sharpening}.
\newblock \emph{arXiv preprint arXiv:2605.28142}, 2026.

\bibitem[Austin et~al.(2021)Austin, Odena, Nye, Bosma, Michalewski, Dohan, Jiang, Cai, Terry, Le, and Sutton]{mbpp_dataset}
Jacob Austin, Augustus Odena, Maxwell~I. Nye, Maarten Bosma, Henryk Michalewski, David Dohan, Ellen Jiang, Carrie~J. Cai, Michael Terry, Quoc~V. Le, and Charles Sutton.
\newblock {Program Synthesis with Large Language Models}.
\newblock \emph{arXiv preprint arXiv:2108.07732}, 2021.

\bibitem[Azizi et~al.(2026)Azizi, Potraghloo, Ahmadi, Kundu, and Pedram]{DBLP:arxiv/power-smc}
Seyedarmin Azizi, Erfan~Baghaei Potraghloo, Minoo Ahmadi, Souvik Kundu, and Massoud Pedram.
\newblock {{Power-SMC}: Low-Latency Sequence-Level Power Sampling for Training-Free {LLM} Reasoning}.
\newblock \emph{arXiv preprint arXiv:2602.10273}, 2026.

\bibitem[Brockwell(2006)]{Brockwell01032006}
A.~E. Brockwell.
\newblock {Parallel {Markov} chain {Monte Carlo} Simulation by Pre-Fetching}.
\newblock \emph{Journal of Computational and Graphical Statistics}, 15\penalty0 (1):\penalty0 246--261, 2006.

\bibitem[Cai et~al.(2024)Cai, Li, Geng, Peng, Lee, Chen, and Dao]{DBLP:conf/icml/CaiLGPLCD24}
Tianle Cai, Yuhong Li, Zhengyang Geng, Hongwu Peng, Jason~D. Lee, Deming Chen, and Tri Dao.
\newblock {Medusa: Simple {LLM} Inference Acceleration Framework with Multiple Decoding Heads}.
\newblock In \emph{Proc. of the International Conference on Machine Learning (ICML)}, volume 235, pages 5209--5235, 2024.

\bibitem[Calderhead(2014)]{calderhead2014general}
Ben Calderhead.
\newblock {A General Construction for Parallelizing {Metropolis--Hastings} Algorithms}.
\newblock \emph{Proceedings of the National Academy of Sciences}, 111\penalty0 (49):\penalty0 17408--17413, 2014.

\bibitem[Chen et~al.(2023)Chen, Borgeaud, Irving, Lespiau, Sifre, and Jumper]{chen2023accelerating}
Charlie Chen, Sebastian Borgeaud, Geoffrey Irving, Jean{-}Baptiste Lespiau, Laurent Sifre, and John Jumper.
\newblock {Accelerating Large Language Model Decoding with Speculative Sampling}.
\newblock \emph{arXiv preprint arXiv:2302.01318}, 2023.

\bibitem[Chen et~al.(2021)Chen, Tworek, Jun, Yuan, de~Oliveira~Pinto, Kaplan, Edwards, Burda, Joseph, Brockman, Ray, Puri, Krueger, Petrov, Khlaaf, Sastry, Mishkin, Chan, Gray, Ryder, Pavlov, Power, Kaiser, Bavarian, Winter, Tillet, Such, Cummings, Plappert, Chantzis, Barnes, Herbert-Voss, Guss, Nichol, Paino, Tezak, Tang, Babuschkin, Balaji, Jain, Saunders, Hesse, Carr, Leike, Achiam, Misra, Morikawa, Radford, Knight, Brundage, Murati, Mayer, Welinder, McGrew, Amodei, McCandlish, Sutskever, and Zaremba]{chen2021evaluatingllmcode}
M.~Chen, J.~Tworek, H.~Jun, Q.~Yuan, H.~P. de~Oliveira~Pinto, J.~Kaplan, H.~Edwards, Y.~Burda, N.~Joseph, G.~Brockman, A.~Ray, R.~Puri, G.~Krueger, M.~Petrov, H.~Khlaaf, G.~Sastry, P.~Mishkin, B.~Chan, S.~Gray, N.~Ryder, M.~Pavlov, A.~Power, L.~Kaiser, M.~Bavarian, C.~Winter, P.~Tillet, F.~P. Such, D.~Cummings, M.~Plappert, F.~Chantzis, E.~Barnes, A.~Herbert-Voss, W.~H. Guss, A.~Nichol, A.~Paino, N.~Tezak, J.~Tang, I.~Babuschkin, S.~Balaji, S.~Jain, W.~Saunders, C.~Hesse, A.~N. Carr, J.~Leike, J.~Achiam, V.~Misra, E.~Morikawa, A.~Radford, M.~Knight, M.~Brundage, M.~Murati, K.~Mayer, P.~Welinder, B.~McGrew, D.~Amodei, S.~McCandlish, I.~Sutskever, and W.~Zaremba.
\newblock {Evaluating Large Language Models Trained on Code}.
\newblock \emph{arXiv preprint arXiv:2107.03374}, 2021.

\bibitem[Chen et~al.(2025)Chen, Lu, Zhao, Wang, Yue, Song, and Huang]{DBLP:conf/nips/ChenLZWYSH25}
Zhiqi Chen, Rui Lu, Andrew Zhao, Zhaokai Wang, Yang Yue, Shiji Song, and Gao Huang.
\newblock {Does Reinforcement Learning Really Incentivize Reasoning Capacity in {LLMs} Beyond the Base Model?}
\newblock In \emph{Advances in Neural Information Processing Systems (NeurIPS)}, 2025.

\bibitem[Chen et~al.(2024)Chen, May, Svirschevski, Huang, Ryabinin, Jia, and Chen]{chen2024sequoia}
Zhuoming Chen, Avner May, Ruslan Svirschevski, Yuhsun Huang, Max Ryabinin, Zhihao Jia, and Beidi Chen.
\newblock {{Sequoia}: Scalable and Robust Speculative Decoding}.
\newblock volume~37, 2024.

\bibitem[Du et~al.(2024)Du, Amini, Hennigen, Yu, Lee, Eisner, and Cotterell]{DBLP:conf/icml/DuAHYLEC24}
Li~Du, Afra Amini, Lucas~Torroba Hennigen, Xinyan~Velocity Yu, Holden Lee, Jason Eisner, and Ryan Cotterell.
\newblock {Principled Gradient-Based {MCMC} for Conditional Sampling of Text}.
\newblock In \emph{Proc. of the International Conference on Machine Learning (ICML)}, volume 235, pages 11663--11685, 2024.

\bibitem[Du et~al.(2025)Du, Yang, and Welleck]{DBLP:conf/icml/Du0W25}
Weihua Du, Yiming Yang, and Sean Welleck.
\newblock {Optimizing Temperature for Language Models with Multi-Sample Inference}.
\newblock In \emph{Proc. of the International Conference on Machine Learning (ICML)}, volume 267, 2025.

\bibitem[Emara et~al.(2026)Emara, Barba~da Costa, Chang, Freer, Vieira, Cotterell, and Abdelfattah]{emara2026faster}
Yahya Emara, Mauricio Barba~da Costa, Chi-Chih Chang, Cameron Freer, Tim Vieira, Ryan Cotterell, and Mohamed~S. Abdelfattah.
\newblock {Faster {LLM} Inference via Sequential {Monte Carlo}}.
\newblock \emph{arXiv preprint arXiv:2604.15672}, 2026.

\bibitem[Faria et~al.(2024)Faria, Agrawal, Farinhas, Rei, de~Souza, and Martins]{faria2024quest}
Gon{\c{c}}alo Rui~Alves Faria, Sweta Agrawal, Ant{\'{o}}nio Farinhas, Ricardo Rei, Jos{\'{e}} Guilherme~Camargo de~Souza, and Andr{\'{e}} F.~T. Martins.
\newblock {{QUEST:} Quality-Aware {Metropolis-Hastings} Sampling for Machine Translation}.
\newblock In \emph{Advances in Neural Information Processing Systems (NeurIPS)}, volume~37, pages 89042--89068, 2024.

\bibitem[Forristal et~al.(2023)Forristal, Mireshghallah, Durrett, and Berg{-}Kirkpatrick]{DBLP:conf/conll/ForristalMDB23}
Jarad Forristal, Fatemehsadat Mireshghallah, Greg Durrett, and Taylor Berg{-}Kirkpatrick.
\newblock {A Block {Metropolis-Hastings} Sampler for Controllable Energy-based Text Generation}.
\newblock In \emph{Proc. of the Annual Conference on Computational Natural Language Learning (CoNLL)}, pages 403--413, 2023.

\bibitem[Guo et~al.(2025)Guo, Yang, Zhang, Song, Zhang, Xu, Zhu, Ma, Wang, Bi, et~al.]{guo2025deepseekr1}
Daya Guo, Dejian Yang, Haowei Zhang, Junxiao Song, Ruoyu Zhang, Runxin Xu, Qihao Zhu, Shirong Ma, Peiyi Wang, Xiao Bi, et~al.
\newblock {{DeepSeek-R1}: Incentivizing reasoning capability in {LLMs} via reinforcement learning}.
\newblock \emph{arXiv preprint arXiv:2501.12948}, 2025.

\bibitem[Guo et~al.(2026)Guo, Guo, Meinel, and Yang]{powermh-mtm}
Hong Guo, Nianhui Guo, Christoph Meinel, and Haojin Yang.
\newblock {Sample Where You Struggle: Sharpening Base Model Reasoning via Entropy-Guided Power Sampling}.
\newblock \emph{arXiv preprint arXiv:2606.09926}, 2026.

\bibitem[Hastings(1970)]{hastings1970monte}
W.~K. Hastings.
\newblock {{Monte Carlo} sampling methods using {Markov} chains and their applications}.
\newblock \emph{Biometrika}, 57\penalty0 (1):\penalty0 97--109, 1970.

\bibitem[He et~al.(2025)He, Fried, and Welleck]{he2025rewarding}
Andre He, Daniel Fried, and Sean Welleck.
\newblock {Rewarding the unlikely: Lifting {GRPO} beyond distribution sharpening}.
\newblock \emph{arXiv preprint arXiv:2506.02355}, 2025.

\bibitem[He et~al.(2026)He, Guo, Hern\'{a}ndez-Lobato, and Du]{he2026recipes}
Jiajun He, Zongyu Guo, Jos\'{e}~Miguel Hern\'{a}ndez-Lobato, and Yuanqi Du.
\newblock {Recipes for Steering and Scaling {LLM}s via Sampling}.
\newblock \emph{arXiv preprint arXiv:2608.26120}, 2026.

\bibitem[Hendrycks et~al.(2021{\natexlab{a}})Hendrycks, Burns, Basart, Zou, Mazeika, Song, and Steinhardt]{DBLP:conf/iclr/HendrycksBBZMSS21}
Dan Hendrycks, Collin Burns, Steven Basart, Andy Zou, Mantas Mazeika, Dawn Song, and Jacob Steinhardt.
\newblock {Measuring Massive Multitask Language Understanding}.
\newblock In \emph{Proc. of the International Conference on Learning Representations (ICLR)}, 2021{\natexlab{a}}.

\bibitem[Hendrycks et~al.(2021{\natexlab{b}})Hendrycks, Burns, Kadavath, Arora, Basart, Tang, Song, and Steinhardt]{DBLP:conf/nips/HendrycksBKABTS21}
Dan Hendrycks, Collin Burns, Saurav Kadavath, Akul Arora, Steven Basart, Eric Tang, Dawn Song, and Jacob Steinhardt.
\newblock {Measuring Mathematical Problem Solving With the {MATH} Dataset}.
\newblock In \emph{NeurIPS Datasets and Benchmarks}, 2021{\natexlab{b}}.

\bibitem[Hu et~al.(2025)Hu, Zhang, Han, Jiang, Zhang, and Shum]{hu2025openreasonerzero}
Jingcheng Hu, Yinmin Zhang, Qi~Han, Daxin Jiang, Xiangyu Zhang, and Heung-Yeung Shum.
\newblock {{Open-Reasoner-Zero}: An open source approach to scaling up reinforcement learning on the base model}.
\newblock \emph{arXiv preprint arXiv:2503.24290}, 2025.

\bibitem[Jain et~al.(2025)Jain, Han, Gu, Li, Yan, Zhang, Wang, Solar{-}Lezama, Sen, and Stoica]{DBLP:conf/iclr/JainHGLYZWSSS25}
Naman Jain, King Han, Alex Gu, Wen{-}Ding Li, Fanjia Yan, Tianjun Zhang, Sida Wang, Armando Solar{-}Lezama, Koushik Sen, and Ion Stoica.
\newblock {{LiveCodeBench}: Holistic and Contamination Free Evaluation of Large Language Models for Code}.
\newblock In \emph{Proc. of the International Conference on Learning Representations (ICLR)}, 2025.

\bibitem[Ji et~al.(2026)Ji, Tutunov, Zimmer, and Ammar]{ji2026scalable}
Xiaotong Ji, Rasul Tutunov, Matthieu Zimmer, and Haitham~Bou Ammar.
\newblock {Scalable Power Sampling: Unlocking Efficient, Training-Free Reasoning for {LLM}s via Distribution Sharpening}.
\newblock In \emph{Proc. of the International Conference on Machine Learning (ICML)}, 2026.

\bibitem[Karan and Du(2026)]{karan2026reasoning}
Aayush Karan and Yilun Du.
\newblock {Reasoning with Sampling: Your Base Model is Smarter Than You Think}.
\newblock In \emph{Proc. of the International Conference on Learning Representations (ICLR)}, 2026.

\bibitem[Kwon et~al.(2023)Kwon, Li, Zhuang, Sheng, Zheng, Yu, Gonzalez, Zhang, and Stoica]{DBLP:conf/sosp/KwonLZ0ZY0ZS23}
Woosuk Kwon, Zhuohan Li, Siyuan Zhuang, Ying Sheng, Lianmin Zheng, Cody~Hao Yu, Joseph~E. Gonzalez, Hao Zhang, and Ion Stoica.
\newblock {Efficient Memory Management for Large Language Model Serving with {PagedAttention}}.
\newblock In \emph{Proc. of the {ACM} Symposium on Operating Systems Principles ({SOSP})}, pages 611--626, 2023.

\bibitem[Lakens(2017)]{lakens2017equivalence}
Dani{\"e}l Lakens.
\newblock {Equivalence Tests: A Practical Primer for $t$ Tests, Correlations, and Meta-Analyses}.
\newblock \emph{Social Psychological and Personality Science}, 8\penalty0 (4):\penalty0 355--362, 2017.

\bibitem[Leviathan et~al.(2023)Leviathan, Kalman, and Matias]{DBLP:conf/icml/LeviathanKM23}
Yaniv Leviathan, Matan Kalman, and Yossi Matias.
\newblock {Fast Inference from Transformers via Speculative Decoding}.
\newblock In \emph{Proc. of the International Conference on Machine Learning (ICML)}, volume 202, pages 19274--19286, 2023.

\bibitem[Li et~al.(2025)Li, Karan, and Chen]{li2025blinkofaneyetheory}
Marvin Li, Aayush Karan, and Sitan Chen.
\newblock {Blink of an eye: a simple theory for feature localization in generative models}.
\newblock In \emph{Proc. of the International Conference on Machine Learning (ICML)}, 2025.

\bibitem[Li et~al.(2024{\natexlab{a}})Li, Wei, Zhang, and Zhang]{DBLP:conf/emnlp/LiW0024}
Yuhui Li, Fangyun Wei, Chao Zhang, and Hongyang Zhang.
\newblock {{EAGLE-2:} Faster Inference of Language Models with Dynamic Draft Trees}.
\newblock In \emph{Proc. of the Conference on Empirical Methods in Natural Language Processing (EMNLP)}, pages 7421--7432, 2024{\natexlab{a}}.

\bibitem[Li et~al.(2024{\natexlab{b}})Li, Wei, Zhang, and Zhang]{DBLP:conf/icml/LiW0024}
Yuhui Li, Fangyun Wei, Chao Zhang, and Hongyang Zhang.
\newblock {{EAGLE:} Speculative Sampling Requires Rethinking Feature Uncertainty}.
\newblock In \emph{Proc. of the International Conference on Machine Learning (ICML)}, volume 235, pages 28935--28948, 2024{\natexlab{b}}.

\bibitem[Li et~al.(2022)Li, Choi, Chung, Kushman, Schrittwieser, Leblond, Eccles, Keeling, Gimeno, Lago, Hubert, Choy, de~Masson~d’Autume, Babuschkin, Chen, Huang, Welbl, Gowal, Cherepanov, Molloy, Mankowitz, Robson, Kohli, de~Freitas, Kavukcuoglu, and Vinyals]{journal/science/alphacode22}
Yujia Li, David Choi, Junyoung Chung, Nate Kushman, Julian Schrittwieser, Rémi Leblond, Tom Eccles, James Keeling, Felix Gimeno, Agustin~Dal Lago, Thomas Hubert, Peter Choy, Cyprien de~Masson~d’Autume, Igor Babuschkin, Xinyun Chen, Po-Sen Huang, Johannes Welbl, Sven Gowal, Alexey Cherepanov, James Molloy, Daniel~J. Mankowitz, Esme~Sutherland Robson, Pushmeet Kohli, Nando de~Freitas, Koray Kavukcuoglu, and Oriol Vinyals.
\newblock {Competition-level code generation with {AlphaCode}}.
\newblock \emph{Science}, 378\penalty0 (6624):\penalty0 1092--1097, 2022.

\bibitem[Lightman et~al.(2024)Lightman, Kosaraju, Burda, Edwards, Baker, Lee, Leike, Schulman, Sutskever, and Cobbe]{lightman2024lets}
Hunter Lightman, Vineet Kosaraju, Yura Burda, Harri Edwards, Bowen Baker, Teddy Lee, Jan Leike, John Schulman, Ilya Sutskever, and Karl Cobbe.
\newblock {Let's Verify Step by Step}.
\newblock In \emph{Proc. of the International Conference on Learning Representations (ICLR)}, 2024.

\bibitem[Liu et~al.(2000)Liu, Liang, and Wong]{liu2000multipletry}
Jun~S. Liu, Faming Liang, and Wing~Hung Wong.
\newblock {The Multiple-Try Method and Local Optimization in {Metropolis} Sampling}.
\newblock \emph{Journal of the American Statistical Association}, 95\penalty0 (449):\penalty0 121--134, 2000.

\bibitem[Meng et~al.(2026)Meng, Xie, and Chen]{meng2026depthentropy}
Zibin Meng, Peng Xie, and Kani Chen.
\newblock {Depth-Entropy Guided Sampling for Training-Free {LLM} Reasoning}.
\newblock \emph{arXiv preprint arXiv:2607.09693}, 2026.

\bibitem[Metropolis et~al.(1953)Metropolis, Rosenbluth, Rosenbluth, Teller, and Teller]{metropolis1953equation}
Nicholas Metropolis, Arianna~W. Rosenbluth, Marshall~N. Rosenbluth, Augusta~H. Teller, and Edward Teller.
\newblock {Equation of state calculations by fast computing machines}.
\newblock \emph{Journal of Chemical Physics}, 21\penalty0 (6):\penalty0 1087--1092, 1953.

\bibitem[Miao et~al.(2024)Miao, Oliaro, Zhang, Cheng, Wang, Zhang, Wong, Zhu, Yang, Shi, Shi, Chen, Arfeen, Abhyankar, and Jia]{DBLP:conf/asplos/MiaoOZCWZWZYSSC24}
Xupeng Miao, Gabriele Oliaro, Zhihao Zhang, Xinhao Cheng, Zeyu Wang, Zhengxin Zhang, Rae Ying~Yee Wong, Alan Zhu, Lijie Yang, Xiaoxiang Shi, Chunan Shi, Zhuoming Chen, Daiyaan Arfeen, Reyna Abhyankar, and Zhihao Jia.
\newblock {{SpecInfer}: Accelerating Large Language Model Serving with Tree-based Speculative Inference and Verification}.
\newblock In \emph{Proc. of the {ACM} International Conference on Architectural Support for Programming Languages and Operating Systems ({ASPLOS})}, pages 932--949, 2024.

\bibitem[Mireshghallah et~al.(2022)Mireshghallah, Goyal, and Berg-Kirkpatrick]{mireshghallah2022mix}
Fatemehsadat Mireshghallah, Kartik Goyal, and Taylor Berg-Kirkpatrick.
\newblock {Mix and Match: Learning-Free Controllable Text Generation using Energy Language Models}.
\newblock In \emph{Proc. of the Annual Meeting of the Association for Computational Linguistics (ACL)}, pages 401--415, 2022.

\bibitem[Mittal et~al.(2026)Mittal, Gagnon, and Lajoie]{mittal2026beyond}
Sarthak Mittal, Leo Gagnon, and Guillaume Lajoie.
\newblock {Beyond Distribution Sharpening: The Importance of Task Rewards}.
\newblock In \emph{Proc. of the Annual Conference on Language Modeling (COLM)}, 2026.

\bibitem[Neal(2001)]{neal1998annealedimportance}
Radford~M. Neal.
\newblock {Annealed importance sampling}.
\newblock \emph{Statistics and Computing}, 11\penalty0 (2):\penalty0 125--139, 2001.

\bibitem[Nguyen et~al.(2026)Nguyen, Zimmer, Tutunov, Ji, and Ammar]{nguyen2026futurevalue}
Tu~Nguyen, Matthieu Zimmer, Rasul Tutunov, Xiaotong Ji, and Haitham~Bou Ammar.
\newblock {The Model Knows, the Decoder Finds: Future Value Guided Particle Power Sampling}.
\newblock In \emph{ICML 2026 Workshop on Structured Probabilistic Inference {\&} Generative Modeling}, 2026.

\bibitem[Rein et~al.(2024)Rein, Hou, Cooper~Stickland, Petty, Pang, Dirani, Michael, and Bowman]{conf/colm/rein2024gpqa}
David Rein, Betty~Li Hou, Asa Cooper~Stickland, Jackson Petty, Richard~Yuanzhe Pang, Julien Dirani, Julian Michael, and Samuel~R. Bowman.
\newblock {{GPQA}: A graduate-level {Google}-proof {Q\&A} benchmark}.
\newblock In \emph{Proc. of the Annual Conference on Language Modeling (COLM)}, 2024.

\bibitem[Sambridge(2014)]{sambridge2014paralleltempering}
Malcolm Sambridge.
\newblock {A Parallel Tempering algorithm for probabilistic sampling and optimization}.
\newblock \emph{Geophysical Journal International}, 196\penalty0 (1):\penalty0 357--374, 2014.

\bibitem[Scheufele et~al.(2026)Scheufele, Blohm, and Garg]{scheufele2026temperature}
Hannah Scheufele, Peter Blohm, and Vikas Garg.
\newblock {Temperature Scaling in Discrete Sequence (Language) Models}.
\newblock In \emph{Proc. of the International Conference on Machine Learning (ICML)}, volume 306, 2026.

\bibitem[Schuirmann(1987)]{schuirmann1987tost}
Donald~J. Schuirmann.
\newblock {A comparison of the two one-sided tests procedure and the power approach for assessing the equivalence of average bioavailability}.
\newblock \emph{Journal of Pharmacokinetics and Biopharmaceutics}, 15\penalty0 (6):\penalty0 657--680, 1987.

\bibitem[Shao et~al.(2026)Shao, Li, Xin, Geng, Wang, Oh, Du, Lambert, Min, Krishna, Tsvetkov, Hajishirzi, Koh, and Zettlemoyer]{shao2026spurious}
Rulin Shao, Shuyue~Stella Li, Rui Xin, Scott Geng, Yiping Wang, Sewoong Oh, Simon~Shaolei Du, Nathan Lambert, Sewon Min, Ranjay Krishna, Yulia Tsvetkov, Hannaneh Hajishirzi, Pang~Wei Koh, and Luke Zettlemoyer.
\newblock {Spurious Rewards: Rethinking Training Signals in {RLVR}}.
\newblock In \emph{Proc. of the International Conference on Machine Learning (ICML)}, 2026.

\bibitem[Shao et~al.(2024)Shao, Wang, Zhu, Xu, Song, Zhang, Li, Wu, and Guo]{depseekmath-grpo}
Zhihong Shao, Peiyi Wang, Qihao Zhu, Runxin Xu, Junxiao Song, Mingchuan Zhang, Y.~K. Li, Y.~Wu, and Daya Guo.
\newblock {{DeepSeekMath}: Pushing the Limits of Mathematical Reasoning in Open Language Models}.
\newblock \emph{arXiv preprint arXiv:2402.03300}, 2024.

\bibitem[Stern et~al.(2018)Stern, Shazeer, and Uszkoreit]{DBLP:conf/nips/SternSU18}
Mitchell Stern, Noam Shazeer, and Jakob Uszkoreit.
\newblock {Blockwise Parallel Decoding for Deep Autoregressive Models}.
\newblock In \emph{Advances in Neural Information Processing Systems (NeurIPS)}, pages 10107--10116, 2018.

\bibitem[Stiennon et~al.(2020)Stiennon, Ouyang, Wu, Ziegler, Lowe, Voss, Radford, Amodei, and Christiano]{DBLP:conf/nips/StiennonO0ZLVRA20}
Nisan Stiennon, Long Ouyang, Jeffrey Wu, Daniel~M. Ziegler, Ryan Lowe, Chelsea Voss, Alec Radford, Dario Amodei, and Paul~F. Christiano.
\newblock {Learning to Summarize with Human Feedback}.
\newblock In \emph{Advances in Neural Information Processing Systems (NeurIPS)}, 2020.

\bibitem[Strid(2010)]{strid2010prefetching}
Ingvar Strid.
\newblock {Efficient parallelisation of {Metropolis--Hastings} algorithms using a prefetching approach}.
\newblock \emph{Computational Statistics \& Data Analysis}, 54\penalty0 (11):\penalty0 2814--2835, 2010.

\bibitem[Sun et~al.(2024)Sun, Haider, Zhang, Yang, Qiu, Yin, Wang, Bartlett, and Zanette]{DBLP:conf/nips/SunHZYQYWBZ24}
Hanshi Sun, Momin Haider, Ruiqi Zhang, Huitao Yang, Jiahao Qiu, Ming Yin, Mengdi Wang, Peter~L. Bartlett, and Andrea Zanette.
\newblock {Fast Best-of-N Decoding via Speculative Rejection}.
\newblock In \emph{Advances in Neural Information Processing Systems (NeurIPS)}, volume~37, pages 32630--32652, 2024.

\bibitem[Tomihari and Sato(2026)]{tomihari2026power}
Akiyoshi Tomihari and Issei Sato.
\newblock {Power Distribution Bridges Sampling, Self-Reward {RL}, and Self-Distillation}.
\newblock \emph{arXiv preprint arXiv:2605.04542}, 2026.

\bibitem[Troshin et~al.(2025)Troshin, Mohammed, Meng, Monz, Fokkens, and Niculae]{troshin2025control}
Sergey Troshin, Wafaa Mohammed, Yan Meng, Christof Monz, Antske Fokkens, and Vlad Niculae.
\newblock {Control the Temperature: Selective Sampling for Diverse and High-Quality {LLM} Outputs}.
\newblock In \emph{Proc. of the Annual Conference on Language Modeling (COLM)}, 2025.

\bibitem[Xia et~al.(2024)Xia, Yang, Dong, Wang, Li, Ge, Liu, Li, and Sui]{DBLP:conf/acl/XiaYDW00L0S24}
Heming Xia, Zhe Yang, Qingxiu Dong, Peiyi Wang, Yongqi Li, Tao Ge, Tianyu Liu, Wenjie Li, and Zhifang Sui.
\newblock {Unlocking Efficiency in Large Language Model Inference: {A} Comprehensive Survey of Speculative Decoding}.
\newblock In \emph{Findings of the Association for Computational Linguistics ({ACL})}, pages 7655--7671, 2024.

\bibitem[Ye and Lu(2024)]{DBLP:journals/tai/YeL24}
Guifeng Ye and Shaowen Lu.
\newblock {Prefetching-based Multiproposal {Markov} Chain {Monte Carlo} Algorithm}.
\newblock \emph{{IEEE} Transactions on Artificial Intelligence}, 5\penalty0 (9):\penalty0 4493--4505, 2024.

\bibitem[Zenn and Geiping(2026)]{zenn2026likely}
Johannes Zenn and Jonas Geiping.
\newblock {When are likely answers right? {On} Sequence Probability and Correctness in {LLM}s}.
\newblock In \emph{ICML 2026 Workshop on Structured Probabilistic Inference {\&} Generative Modeling}, 2026.

\bibitem[Zhao et~al.(2024)Zhao, Brekelmans, Makhzani, and Grosse]{DBLP:conf/icml/ZhaoBMG24}
Stephen Zhao, Rob Brekelmans, Alireza Makhzani, and Roger~Baker Grosse.
\newblock {Probabilistic Inference in Language Models via Twisted Sequential {Monte Carlo}}.
\newblock In \emph{Proc. of the International Conference on Machine Learning (ICML)}, volume 235, pages 60704--60748, 2024.

\bibitem[Zheng et~al.(2024)Zheng, Yin, Xie, Sun, Huang, Yu, Cao, Kozyrakis, Stoica, Gonzalez, Barrett, and Sheng]{DBLP:conf/nips/ZhengYXS0YCKSGB24}
Lianmin Zheng, Liangsheng Yin, Zhiqiang Xie, Chuyue Sun, Jeff Huang, Cody~Hao Yu, Shiyi Cao, Christos Kozyrakis, Ion Stoica, Joseph~E. Gonzalez, Clark~W. Barrett, and Ying Sheng.
\newblock {{SGLang}: Efficient Execution of Structured Language Model Programs}.
\newblock In \emph{Advances in Neural Information Processing Systems (NeurIPS)}, 2024.

\bibitem[Zhou et~al.(2026)Zhou, Mehrotra, and Liu]{entropycut}
Felix Zhou, Anay Mehrotra, and Quanquan~C. Liu.
\newblock {Reasoning with Sampling: Cutting at Decision Points}.
\newblock \emph{arXiv preprint arXiv:2605.30327}, 2026.

\bibitem[Zhu et~al.(2024)Zhu, Li, Li, Zhao, Li, Jin, and Mei]{DBLP:conf/aaai/ZhuLLZLJ024}
Yuqi Zhu, Jia Li, Ge~Li, Yunfei Zhao, Jia Li, Zhi Jin, and Hong Mei.
\newblock {Hot or Cold? {Adaptive} Temperature Sampling for Code Generation with Large Language Models}.
\newblock In \emph{Proc. of the AAAI Conference on Artificial Intelligence (AAAI)}, pages 437--445, 2024.

\end{thebibliography}

\newpage
\appendix
\begingroup
\setcounter{tocdepth}{2}
\setlength{\cftbeforetoctitleskip}{0pt}
\setlength{\cftaftertoctitleskip}{6pt}
\setlength{\cftbeforesecskip}{2pt}
\renewcommand{\cftsubsecfont}{\small}
\renewcommand{\cftsubsecpagefont}{\small}
\tableofcontents
\endgroup
\allowdisplaybreaks
\newpage

\section{Extended Related Work}\label{apx:related}

\framework draws on three lines of work.
Power-sampling methods define the target and the samplers that \framework accelerates.
Speculative decoding and MCMC prefetching are two established ways to break serial dependence during sampling.
For each line, we summarize the problem it addresses, the mechanism its methods share, and the trade-offs among them.

\subsection{Sampling from the Power Distribution}

Power sampling draws complete responses from $\pi_\alpha\propto p_0^\alpha$ with $\alpha>1$~\citep{karan2026reasoning}.
Raising the sequence likelihood to a power concentrates probability on responses that the base model finds likely as a whole, the same construction that annealing and parallel tempering use to control how concentrated a target is~\citep{neal1998annealedimportance,sambridge2014paralleltempering}.

\textbf{Why token-level sharpening is not enough.}
Token-level temperature scaling and filtering~\citep{DBLP:conf/aaai/ZhuLLZLJ024,DBLP:conf/icml/Du0W25,troshin2025control} sharpen each next-token distribution and renormalize at every prefix.
The exact next-token conditional of the power target is instead
\begin{equation*}
\pi_\alpha(x_t\mid\mathbf{x}_{0:t-1})\;\propto\;p_0(x_t\mid\mathbf{x}_{0:t-1})^{\alpha}\,Z_\alpha(\mathbf{x}_{0:t}),
\qquad\text{where}\qquad
Z_\alpha(\mathbf{x}_{0:t})=\sum_{\mathbf{x}_{t+1:n}}p_0(\mathbf{x}_{t+1:n}\mid\mathbf{x}_{0:t})^{\alpha}.
\end{equation*}
Token-level temperature $\tau=1/\alpha$ keeps the first factor and drops $Z_\alpha$.
For $\alpha>1$, $Z_\alpha$ is large when the continuations of a prefix concentrate on a few high-likelihood completions, so the power target prefers such prefixes, whereas low-temperature decoding cannot distinguish them.
Because $Z_\alpha$ sums over exponentially many continuations, every power-sampling method must avoid it, approximate it, or change the target.
Existing methods follow three routes.

\textbf{MCMC over complete sequences.}
MH needs the target only up to a constant, so it avoids $Z_\alpha$ entirely: it compares complete sequences through the ratio $\pi_\alpha(\mathbf{y})/\pi_\alpha(\mathbf{x})=\bigl(p_0(\mathbf{y})/p_0(\mathbf{x})\bigr)^{\alpha}$.
The main design choice is the proposal.
Earlier MCMC samplers for controlled generation with energy-based targets edit one position at a time and need many steps to make coordinated changes across a sequence~\citep{mireshghallah2022mix}.
Regenerating a complete sequence or a suffix with an LLM instead makes large, coherent edits whose proposal probabilities come from the same forward pass~\citep{DBLP:conf/conll/ForristalMDB23,faria2024quest}.
PowerMH applies suffix regeneration to $\pi_\alpha$ with a blockwise schedule~\citep{karan2026reasoning}, and later variants keep this structure while changing where cuts are placed or how many candidates each step considers.
EntropyCut~\citep{entropycut} places cuts using token entropy, and MultiTryMH, a variant of the entropy-guided sampler of~\citet{powermh-mtm}, draws several candidate suffixes per step and moves with a multiple-try Metropolis update~\citep{liu2000multipletry}.
Other variants add terms to the target~\citep{meng2026depthentropy}.
These samplers share three costs: each transition regenerates on average about half of the response when the cut is uniform, each stage needs many transitions, and transitions are serial because every proposal is conditioned on the current state.

\textbf{Forward approximations.}
Forward methods generate left to right and estimate $Z_\alpha$ rather than drop it.
Particle methods propagate a population of partial sequences from a tractable proposal, such as low-temperature decoding, reweight them at every step by the ratio of target to proposal, and resample to discard low-weight prefixes.
Without twisting, these intermediate weights ignore the future term $Z_\alpha$; twisted variants learn approximations of $Z_\alpha$ so that the intermediate weights account for it~\citep{DBLP:conf/icml/ZhaoBMG24}.
For power targets, these samplers are consistent as the number of particles grows~\citep{DBLP:arxiv/power-smc,he2026recipes}, and in practice their quality depends on resampling rules that prevent weight degeneracy and particle collapse~\citep{ahmadi2026chopthin}.
Lookahead methods instead estimate $Z_\alpha(\mathbf{x}_{0:t})$ directly by sampling a few continuations for each candidate token, and use the estimate to correct the next-token distribution~\citep{ji2026scalable,nguyen2026futurevalue}.
Both families parallelize naturally across particles or rollouts.
Their bias and variance, however, depend on the number of particles or rollouts, and they never revise a token; resampling only drops or duplicates whole prefixes.

\textbf{Changing or amortizing the target.}
A third route changes what is sampled or moves the cost into training.
Sharpening the sequence likelihood can favor a single high-likelihood reasoning trace over an answer supported by many distinct traces, so marginal sharpening sharpens the distribution over final answers instead~\citep{arzhantsev2026selfconsistency}.
Distillation-style methods fine-tune the model so that ordinary sampling approximates a powered or low-temperature sequence distribution, which removes the inference-time overhead at the price of training~\citep{tomihari2026power,scheufele2026temperature}.
Evidence on when sharpening helps is mixed: raising sequence likelihood through decoding need not improve correctness~\citep{zenn2026likely}, and reward-trained models can gain in ways that sharpening does not recover~\citep{mittal2026beyond}.

\subsection{Tree-based Speculative Decoding}

Tree-based speculative decoding (TreeSD) is the closest counterpart to \framework in the speculative-decoding literature, because it also spends extra parallel computation on work that may be discarded in order to reduce the number of serial model calls.
We first describe the draft-then-verify scheme that TreeSD builds on, then how trees extend it, and finally how \framework differs.

\textbf{Draft-then-verify decoding.}
Autoregressive decoding calls the target model once per token, and these calls cannot be batched because each token depends on the previous one.
Speculative decoding breaks this dependence with a cheap drafter that proposes several future tokens~\citep{DBLP:conf/nips/SternSU18,DBLP:conf/acl/XiaYDW00L0S24}.
The target model then scores all of them in a single forward pass, and a verification rule commits the longest accepted prefix together with one token drawn from a corrected distribution~\citep{DBLP:conf/icml/LeviathanKM23,chen2023accelerating}.
With the rejection-based rule, the committed tokens follow the target model's distribution exactly, so drafter quality affects only speed.
Every pass commits at least one token, and the expected number of tokens committed per pass grows with the drafter's acceptance rate.

\textbf{Drafting a tree of continuations.}
A chain of draft tokens is truncated at its first rejected token, so one early rejection wastes all later draft work.
Tree-based methods instead draft several alternative continuations at each position, and verify all branches in one pass with a tree-structured attention mask, so that each branch attends only to its own ancestors~\citep{DBLP:conf/asplos/MiaoOZCWZWZYSSC24}.
The target model can then commit whichever branch is accepted furthest, which raises the expected number of tokens committed per pass at the cost of scoring more candidate tokens.
In all of these methods, the nodes of the tree are draft \emph{tokens}, the branching reflects uncertainty about what the target model will say next, and a verification pass ends with a single committed prefix.

\textbf{Better drafters.}
A second line of work raises the acceptance rate of the drafter itself.
Extra decoding heads attached to the target model draft several positions at once without a separate drafter model~\citep{DBLP:conf/icml/CaiLGPLCD24}, and autoregression at the level of the target model's features gives a drafter that tracks the target more closely~\citep{DBLP:conf/icml/LiW0024}.
Better drafters shift the balance between how many tokens are drafted and how many are accepted, and so change which tree shapes are worth verifying.

\textbf{Tree construction as budget allocation.}
For a fixed verification budget, i.e., a fixed number of nodes per pass, a tree can take many shapes, so tree construction is a budget-allocation problem.
Early methods use a static shape~\citep{DBLP:conf/asplos/MiaoOZCWZWZYSSC24,DBLP:conf/icml/CaiLGPLCD24}.
Later methods add branches where the drafter is confident, since those nodes are more likely to be accepted~\citep{DBLP:conf/emnlp/LiW0024}, or choose the tree shape by dynamic programming over estimated acceptance rates~\citep{chen2024sequoia}.
These approaches share one principle: under a node budget, expand the nodes whose expected contribution to the committed prefix is largest.

\textbf{Relaxing exact verification.}
Exact rejection is not the only option.
SMC-based variants weight and resample draft blocks instead of truncating at the first rejected token.
They accept a finite-particle approximation in exchange for using more of the drafted work, and have been applied to power targets~\citep{emara2026faster}.
This differs from \framework in that the output is no longer exact for the target, whereas \framework preserves the PowerMH trajectory.

\textbf{Relation to \framework.}
\framework and TreeSD both use a tree to batch evaluations that would otherwise be serial, both leave the law of the underlying sampler unchanged (for \method, the law of the PowerMH chain, not $\pi_\alpha$ itself), and both face the same question of where to spend a limited speculative budget, so TreeSD's budget-allocation arguments carry over to our setting.
The two trees encode different objects, however.
In TreeSD, a branch is an alternative draft-token continuation, and verification commits an accepted token prefix.
In \framework, a branch is an accept/reject outcome of a complete-sequence MH proposal, and following the realized MH decisions reproduces the sequential PowerMH trajectory.
The unit of progress also differs: every visited expanded node in \framework advances the chain by one MH transition, whether the proposal is accepted or rejected, while a TreeSD node contributes only if it lies on the accepted prefix.
Finally, the cost structure differs.
In TreeSD, a cheap drafter proposes the nodes, and only the target model's verification pass is expensive.
\framework has no cheap drafter: every node is a full suffix that the proposal LLM must generate, the same work as one sequential PowerMH proposal.
Expanding the full tree would spend most of this work on nodes off the realized path, so \framework confines speculation to a budgeted subtree.

\subsection{MCMC Prefetching}

Parallel MCMC methods typically run many independent chains, parallelize the evaluation of the target density, or speculate on the future of a single chain.
Prefetching is the third approach~\citep{Brockwell01032006,strid2010prefetching}.
It rests on one observation: from any state, the next $d$ accept/reject outcomes form a binary tree of depth $d$, and the chain follows exactly one root-to-leaf path.
Proposals for every node can be drawn in advance, so the expensive target evaluations at all nodes can run in parallel.
The realized path is then read off with the usual acceptance uniforms, so the chain has exactly the law of sequential MH.

Evaluating the whole tree is wasteful.
It has $2^d-1$ decision nodes, of which only $d$ lie on the realized path, so the speedup obtained from $p$ parallel workers grows only logarithmically in $p$~\citep{Brockwell01032006}.
Later methods therefore evaluate only the nodes most likely to lie on the realized path.
Static prefetching uses the acceptance rate: when acceptance is low, rejection branches dominate, and the tree can be restricted to long rejection chains~\citep{strid2010prefetching}.
Predictive prefetching uses a cheap approximation of the target to estimate each branch probability, and still decides every realized transition with the exact target~\citep{DBLP:conf/uai/AngelinoKWSA14}.

A different approach parallelizes within one transition by drawing several candidates from the current state, whose target evaluations can run in parallel.
Multiple-try Metropolis selects one candidate with probability proportional to its weight and accepts or rejects it with a corrected acceptance probability~\citep{liu2000multipletry}.
Multiproposal MH instead moves according to a Markov chain over the candidate set~\citep{calderhead2014general}.
Both preserve the stationary distribution but change the transition kernel.
Multiproposal steps can be combined with prefetching by expanding a tree of multiproposal steps~\citep{DBLP:journals/tai/YeL24}.
MultiTryMH~\citep{powermh-mtm} applies multiple-try Metropolis to $\pi_\alpha$, drawing its candidate suffixes in parallel; \framework is complementary to this within-transition parallelism, and \methodMTM prefetches across MultiTryMH transitions.

\newpage

\section{Implementation and Correctness of \framework} 
\label{apx:implement}

\subsection{Computing Log-Acceptance Probabilities}
\label{subsec:log-acc-derivation}

We follow the standard steps to derive a simplified expression for the suffix-resampling log-acceptance probability in~\eqref{eq:log-acc-ratio}
by specializing the generic MH acceptance rule in~\eqref{eq:accept-ratio}.

All states share the same length $T$: every proposal is generated to the full horizon and does not stop at the end-of-sequence token.
Fix a current state $\mathbf{x}=\mathbf{x}_{0:T}$ and a cut index $c$.
At this cut, the proposal retains the prefix $\mathbf{x}_{0:c}$ and samples a new suffix
$\mathbf{y}_{c+1:T}\sim q_\tau(\cdot\mid\mathbf{x}_{0:c})$ from the proposal LLM.
The resulting candidate is $\mathbf{y}=(\mathbf{x}_{0:c},\mathbf{y}_{c+1:T})$,
so $\mathbf{y}_{0:c}=\mathbf{x}_{0:c}$.
We write $q(\mathbf{y}\mid\mathbf{x},c)$ for the MH proposal kernel conditional on the cut, to distinguish it from the token-level LLM conditionals $q_\tau$.
PowerMH draws $c\sim\mathrm{Unif}\{0,\ldots,T-1\}$ independently of the state, and the reverse move from $\mathbf{y}$ to $\mathbf{x}$ uses the same cut.
Each cut-conditional kernel with the acceptance probability below is $\pi_\alpha$-reversible, and so is their uniform mixture over $c$.
For this candidate, the acceptance rule in~\eqref{eq:accept-ratio} becomes
\begin{equation}
\label{eq:acc-instantiated}
A(\mathbf{y},\mathbf{x})
=
\min\!\left\{1,\;
\frac{\pi_\alpha(\mathbf{y}_{1:T}\mid\mathbf{x}_0)\,
      q(\mathbf{x}\mid\mathbf{y},c)}
     {\pi_\alpha(\mathbf{x}_{1:T}\mid\mathbf{x}_0)\,
      q(\mathbf{y}\mid\mathbf{x},c)}
\right\}.
\end{equation}
We next simplify the proposal and target ratios in~\eqref{eq:acc-instantiated} separately.

\begin{enumerate}[leftmargin=*]
  \item \textbf{Proposal log-ratio.}
  At the fixed cut $c$, the candidate differs from $\mathbf{x}$ only in the resampled suffix, so the forward proposal probability is
\begin{equation*}
q(\mathbf{y}\mid\mathbf{x},c)
=
q_\tau\bigl(\mathbf{y}_{c+1:T}\mid\mathbf{x}_{0:c}\bigr).
\end{equation*}
The reverse move applies the same cut $c$ to propose $\mathbf{x}$ from $\mathbf{y}$.
It preserves $\mathbf{y}_{0:c}$ and generates the original suffix
$\mathbf{x}_{c+1:T}$.
Using $\mathbf{y}_{0:c}=\mathbf{x}_{0:c}$, the reverse proposal probability becomes
\begin{equation*}
q(\mathbf{x}\mid\mathbf{y},c)
=
q_\tau\bigl(\mathbf{x}_{c+1:T}\mid\mathbf{y}_{0:c}\bigr)
=
q_\tau\bigl(\mathbf{x}_{c+1:T}\mid\mathbf{x}_{0:c}\bigr).
\end{equation*}
Therefore, the log-proposal ratio is
\begin{equation}
\label{eq:proposal-ratio}
\log\frac{q(\mathbf{x}\mid\mathbf{y},c)}{q(\mathbf{y}\mid\mathbf{x},c)}
=
\log q_\tau\bigl(\mathbf{x}_{c+1:T}\mid\mathbf{x}_{0:c}\bigr)
-
\log q_\tau\bigl(\mathbf{y}_{c+1:T}\mid\mathbf{x}_{0:c}\bigr).
\end{equation}

\item \textbf{Target log-ratio.}
By the stage target~\eqref{eq:stage-target} with $\ell B=T$,
$\pi_\alpha(\mathbf{x}_{1:T}\mid\mathbf{x}_0)
\propto p_0(\mathbf{x}_{1:T}\mid\mathbf{x}_0)^\alpha$.
Because the numerator and denominator use the same prompt and the same fixed length $T$,
their common normalizing constant cancels:
\begin{equation*}
\frac{\pi_\alpha(\mathbf{y}_{1:T}\mid\mathbf{x}_0)}
     {\pi_\alpha(\mathbf{x}_{1:T}\mid\mathbf{x}_0)}
=
\left(
\frac{p_0(\mathbf{y}_{1:T}\mid\mathbf{x}_0)}
     {p_0(\mathbf{x}_{1:T}\mid\mathbf{x}_0)}
\right)^{\!\alpha}.
\end{equation*}
Since $\mathbf{y}_{1:c}=\mathbf{x}_{1:c}$, autoregressively factorizing each term on the right-hand side at the cut gives
\begin{align*}
p_0(\mathbf{y}_{1:T}\mid\mathbf{x}_0)
&=
p_0(\mathbf{x}_{1:c}\mid\mathbf{x}_0)\,
p_0(\mathbf{y}_{c+1:T}\mid\mathbf{x}_{0:c}),\\
p_0(\mathbf{x}_{1:T}\mid\mathbf{x}_0)
&=
p_0(\mathbf{x}_{1:c}\mid\mathbf{x}_0)\,
p_0(\mathbf{x}_{c+1:T}\mid\mathbf{x}_{0:c}).
\end{align*}
Canceling the common prefix probability
$p_0(\mathbf{x}_{1:c}\mid\mathbf{x}_0)$ and taking logarithms yields
\begin{equation}
\label{eq:target-ratio}
\log\frac{\pi_\alpha(\mathbf{y}_{1:T}\mid\mathbf{x}_0)}
         {\pi_\alpha(\mathbf{x}_{1:T}\mid\mathbf{x}_0)}
=
\alpha\Bigl(
\log p_0\bigl(\mathbf{y}_{c+1:T}\mid\mathbf{x}_{0:c}\bigr)
-
\log p_0\bigl(\mathbf{x}_{c+1:T}\mid\mathbf{x}_{0:c}\bigr)
\Bigr).
\end{equation}

\item \textbf{Combined log-ratio.}
Taking the logarithm of the acceptance rule in~\eqref{eq:acc-instantiated}, using
$\log\min\{1,r\}=\min\{0,\log r\}$, and substituting~\eqref{eq:proposal-ratio}
and~\eqref{eq:target-ratio} yields
\begin{equation*}
\begin{aligned}
\log A(\mathbf{y},\mathbf{x})
=
\min\Bigl\{0,&\;
\log q_\tau(\mathbf{x}_{c+1:T}\mid\mathbf{x}_{0:c})
-\log q_\tau(\mathbf{y}_{c+1:T}\mid\mathbf{x}_{0:c})\\
&
\quad +\alpha\bigl(\log p_0(\mathbf{y}_{c+1:T}\mid\mathbf{x}_{0:c})
-\log p_0(\mathbf{x}_{c+1:T}\mid\mathbf{x}_{0:c})\bigr)
\Bigr\},
\end{aligned}
\end{equation*}
which is~\eqref{eq:log-acc-ratio} with $c_1$ replaced by $c$.
Each suffix log-probability on the right-hand side is evaluated by summing
token-level conditional log-probabilities from the serving engine.
For example, 
\begin{equation*}
\log q_\tau(\mathbf{y}_{c+1:T}\mid\mathbf{x}_{0:c})
=
\sum_{t=c+1}^{T}
\log q_\tau(y_t\mid\mathbf{x}_{0:c},\mathbf{y}_{c+1:t-1}).
\end{equation*}
Evaluating these quantities in the log domain avoids underflow from multiplying
many small token probabilities and improves numerical stability.

\end{enumerate}

In our implementation, the serving engine of the proposal LLM returns these log-probabilities. A single batched model call generates
candidate suffixes and provides their log-probabilities under both $q_\tau$ and
$p_0$, using the same logits with and without temperature scaling.

\subsection{Exactness of Subtree Prefetching}
\label{apx:proofs}

Reproducing the sequential chain under speculative evaluation is a standard
property of MCMC prefetching~\citep{Brockwell01032006,DBLP:conf/uai/AngelinoKWSA14}.
The following result specializes the standard argument to our suffix-resampling
scheme. We include the proof for completeness, verifying that generating
proposals early under Condition~\ref{cond:prefetch} does not change the
sequential PowerMH trajectory.
Appendix~\ref{app:tree-traversal} then analyzes how many transitions a
prefetched subtree yields and which requests to prefetch under a request budget.

\begin{proposition}
\label{thm:spec_mh_main}
Fix an integer $K\geq0$. For $k=0,\ldots,K$, let $\mathbf{x}_{\mathrm{pf}}^{(k)}$ and
$\mathbf{x}_{\mathrm{MH}}^{(k)}$ denote the states after $k$ committed
transitions of \method and PowerMH, respectively.
Suppose both samplers start from the same state and \method satisfies
the following assumptions:
\begin{enumerate}[label=(A\arabic*), leftmargin=*]
  \item Prefetched proposals satisfy Condition~\ref{cond:prefetch} and
  are generated from the prefix required by the corresponding decision.
  \item Each decision is assigned cut, suffix-sampling, and acceptance
  random inputs that follow the PowerMH laws and are independent within
  and across decisions. Scheduling and deferral preserve these inputs, and every
  proposal required by the realized trajectory is eventually evaluated.
  \item Every committed transition uses its assigned proposal and the
  exact MH acceptance rule in~\eqref{eq:accept-ratio}. Both children are
  retained as possible outcomes; reaching a leaf postpones subsequent
  decisions to a later batch.
\end{enumerate}
Then a coupling that uses the same random inputs at corresponding
decisions preserves both sampling laws and satisfies
\begin{align*}
\mathbf{x}_{\mathrm{pf}}^{(k)} = \mathbf{x}_{\mathrm{MH}}^{(k)},\qquad \text{ for every } k=0,\ldots,K.
\end{align*}
Consequently, if $P_{\mathrm{MH}}$ denotes the PowerMH transition kernel,
the kernel of \method after $K$ committed transitions is $P_{\mathrm{MH}}^K$.
\end{proposition}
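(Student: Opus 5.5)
The plan is a coupling argument by induction on the number $k$ of committed transitions. I will index the random inputs of PowerMH by decision number: for the $k$-th transition ($k=1,\ldots,K$), let $\xi_k=(c_k,\omega_k,u_k)$ be the cut index, the suffix-sampling randomness (e.g., the uniforms driving the token draws from $q_\tau$), and the acceptance uniform. These are i.i.d.\ across $k$ with the PowerMH laws. PowerMH is then a deterministic recursion $\mathbf{x}_{\mathrm{MH}}^{(k)}=F(\mathbf{x}_{\mathrm{MH}}^{(k-1)},\xi_k)$. Here $F(\mathbf{x},\xi)$ first forms $\mathbf{y}=(\mathbf{x}_{0:c},G(\mathbf{x}_{0:c},\omega))$, where $G$ is the autoregressive suffix generator. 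It then returns $\mathbf{y}$ if $u\le A(\mathbf{y},\mathbf{x})$ and $\mathbf{x}$ otherwise.

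For \method, the nodes of the prefetching tree are indexed by their accept/reject histories, and the decision at depth $k$ along any root-to-node path is the $k$-th transition. The coupling feeds input $\xi_k$ to whichever node is visited as the $k$-th decision on the realized path. This is legitimate by (A2): inputs are assigned per decision, follow the PowerMH laws, are independent, and are preserved by scheduling and deferral, so the marginal law of \method's inputs is unchanged by the coupling.

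With this in place, I will prove $\mathbf{x}_{\mathrm{pf}}^{(k)}=\mathbf{x}_{\mathrm{MH}}^{(k)}$ by induction. The base case $k=0$ holds by the common starting state. For the step, suppose the states agree after $k-1$ transitions, and let $v$ be the node reached in \method's tree, so $v$ represents $\mathbf{x}^{(k-1)}$. Its proposal uses cut $c_k$ and is generated at some batch, possibly an earlier one, from a prefix that (A1) and Condition~\ref{cond:prefetch} guarantee equals $\mathbf{x}^{(k-1)}_{0:c_k}$.

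The key step is to show that this prefix is the correct one, namely the prefix of the true current state. If $v$ is reached through rejections, its state equals that of its last accepted ancestor. If it is reached through an acceptance child whose parent used cut $c\ge c_k$, the displayed autoregressive identity after Condition~\ref{cond:prefetch} shows that the parent's state and the new state share the first $c_k$ tokens. Iterating along the path back to the node in the batch where the prefix was actually available gives equality of prefixes. Since the suffix is a deterministic function $G(\mathbf{x}^{(k-1)}_{0:c_k},\omega_k)$ of the prefix and $\omega_k$, the prefetched candidate equals PowerMH's $\mathbf{y}$. By (A3), the same acceptance rule with the same $u_k$ selects the same child, so $\mathbf{x}_{\mathrm{pf}}^{(k)}=\mathbf{x}_{\mathrm{MH}}^{(k)}$.

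Two bookkeeping points remain. First, when the walk reaches a leaf or a deferred node, (A3) and (A2) say the next decision is evaluated in a later batch with its preserved input, from the now-realized state. The induction step then applies verbatim, with the prefix taken directly from the current state. Second, every needed proposal is eventually evaluated, so all $K$ transitions are committed.

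The equality of trajectories on a common probability space, together with the fact that each sampler's inputs individually have their original laws, gives equality in law of the whole sequence $(\mathbf{x}_{\mathrm{pf}}^{(k)})_{k\le K}$ and $(\mathbf{x}_{\mathrm{MH}}^{(k)})_{k\le K}$. Hence the $K$-step kernel of \method is $P_{\mathrm{MH}}^K$.

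The main obstacle is the prefix-consistency step: making precise that a proposal generated before its parent's outcome is known conditions on exactly the right prefix whichever branch is realized. This must hold for rejection branches and across chains of accepted cuts. I would handle it by a small lemma, proved by induction on depth within a batch: for each node $w$ with request cut $c_w$, the state of $w$ agrees with the batch root on the first $c_w$ tokens along every ancestor chain. This uses monotonicity of cuts along acceptance edges from Condition~\ref{cond:prefetch}, and the fact that rejection edges do not change the state.

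A subtler point is that the independence in (A2) must not be broken by the data-dependent choice of which node is deferred. I would address it by noting that the inputs are attached to decision indices along the realized path before any randomness is used, so adaptive scheduling only changes when, not with what, a decision is computed.
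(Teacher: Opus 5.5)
Your proposal is correct and follows essentially the same route as the paper: a coupling through shared per-decision random inputs, and induction on the number of committed transitions. It also obtains prefix consistency from Condition~\ref{cond:prefetch} and the autoregressive identity (rejections leave the state unchanged, deferred requests reuse their preserved inputs), and identical decisions from the shared acceptance uniform. The only difference is cosmetic: the paper attaches independent inputs to every node of the full accept/reject tree and notes that reaching a node depends only on its ancestors' inputs, which is the clean way to make your ``feed $\xi_k$ to whichever node is visited'' step precise.
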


\begin{proof}
Assign independent PowerMH random inputs to each decision of the full
accept/reject tree, and couple the two executions through these inputs.
Whether a decision is reached in the eventual committed trajectory
depends only on its ancestors' inputs, so its own inputs retain their
original conditional law. Inclusion in the current batch may depend on
scheduling; this changes only when the proposal is evaluated.
Preserving deferred inputs and eventually evaluating every required
proposal makes the coupling valid across batch boundaries.

Suppose the executions agree after $k<K$ committed transitions.
By Condition~\ref{cond:prefetch} and the prefix identity established in
Section~\ref{subsec:method-llm}, any intervening accepted proposal preserves
the prefix required by a feasible prefetched request. Rejections leave the
state unchanged. A deferred request is evaluated from the required prefix
when it becomes available. Thus, both executions use the same prefix in the
suffix proposal defined by~\eqref{eq:suffix-q}, and their common suffix-sampling
randomness produces the same candidate.

The current states and candidates therefore coincide, giving the same MH
acceptance probability in~\eqref{eq:accept-ratio}. The shared acceptance uniform
yields the same decision and next state. Since the initial states agree,
induction establishes equality through transition $K$. Each committed
transition consequently has kernel $P_{\mathrm{MH}}$, and the $K$-step
kernel is $P_{\mathrm{MH}}^K$.
\end{proof}

\textbf{Remark (scope).}
The proof uses PowerMH only through the laws of the per-decision random
inputs in (A2) and the acceptance rule in (A3). In PowerMH the cut law is
uniform and does not depend on the state, so every cut in the batch, and the
feasibility check of Condition~\ref{cond:prefetch}, can be drawn before any
proposal is generated.

\subsection{Examples of Prefix Sharing and MH Transitions through the Prefetched Subtree}
\label{subsec:prefetch-examples}

\begin{figure}[!ht]
  \centering
  \resizebox{\linewidth}{!}{%
  \begin{tikzpicture}[paperfig]
    \begin{scope}[yshift=1.625cm]
      \def\rY{-1.3}
      \def\rZ{-2.6}
      \node[seqlabel] at (-0.35,{\rowh/2})
        {cached $\mathbf{x}$\\node $v_0$};
      \node[seqlabel] at (-0.35,{\rY+\rowh/2})
        {proposal $\mathbf{y}$\\ request $(v_1,c_1)$};
      \node[seqlabel] at (-0.35,{\rZ+\rowh/2 -0.1})
        {proposal $\mathbf{z}$\\ request $(v_3,c_2)$};

      \drawSequenceBars
      \drawSequenceReuseBrackets

      \coordinate (xymerge) at ({\barw+0.08},-0.425);
      \draw[NeutralArrowColor,thick,rounded corners=3pt]
        (\barw,{\rowh/2}) -- ({\barw+0.08},{\rowh/2})
        -- ({\barw+0.08},{\rY+\rowh/2})
        -- (\barw,{\rY+\rowh/2});

      \coordinate (yzmerge) at ({\barw+0.18},-1.725);
      \draw[NeutralArrowColor,thick,rounded corners=3pt]
        (\barw,{\rY+\rowh/2}) -- ({\barw+0.18},{\rY+\rowh/2})
        -- ({\barw+0.18},{\rZ+\rowh/2})
        -- (\barw,{\rZ+\rowh/2});

      \node[panel] at ({\barw/2},{\rZ-0.95})
        {\textbf{(a)} Batched suffix-resampling requests.};
    \end{scope}

    \begin{scope}[xshift=10.6cm,yshift=1.85cm]
      \node[treenode,fill=InactiveNodeFillColor] (n0) at (-0.5,0)
        {$(v_0,c_0)$};

      \node[treenode,newAfill] (n1) at (-2.2,-1.3)
        {$(v_1,c_1)$};
      \node[treenode,rejectednode] (n2) at (1.2,-1.3)
        {$(v_2,c_0)$};

      \node[treenode,newBfill] (n3) at (-3.5,-2.6)
        {$(v_3,c_2)$};
      \node[treenode,rejectednode] (n4) at (-0.9,-2.6)
        {$(v_4,c_1)$};

      \draw[rej] (n0) -- (n2);
      \draw[acc] (n0) -- (n1)
        node[edgelabel,midway, left] (p01label) {$A(\mathbf{y},\mathbf{x})$};

      \draw[rej] (n1) -- (n4);
      \draw[acc] (n1) -- (n3)
        node[edgelabel,midway, left] (p13label) {$A(\mathbf{z},\mathbf{y})$};

      \node[panel] at (-1.15,-3.775)
        {\textbf{(b)} Prefetching tree.};
    \end{scope}

    \draw[maparrow,NeutralArrowColor]
      (xymerge) |- (p01label.west);
    \draw[maparrow,NeutralArrowColor]
      (yzmerge) |- (p13label.west);

    \begin{scope}[xshift=4.0cm,yshift=-3.4cm,xscale=1.5,yscale=1.3]
    \def\kvw{1.0}
    \def\kvh{0.4}

    \foreach \sty [count=\i from 0] in {prefixfill,segmentfill,cachedfill} {
      \fill[plate]  ({2.7+0.2*\i},-0.45) rectangle ++(0.2,0.2);
      \fill[\sty]   ({2.7+0.2*\i},-0.45) rectangle ++(0.2,0.2);
      \draw[kvbox]  ({2.7+0.2*\i},-0.45) rectangle ++(0.2,0.2);
    }
    \node[annot,anchor=west] at (3.35,-0.35) {cached};

    \foreach \sty [count=\i from 0] in {newAfill,newBfill} {
      \fill[\sty]   ({2.7+0.3*\i},-0.9) rectangle ++(0.3,0.2);
      \draw[kvbox]  ({2.7+0.3*\i},-0.9) rectangle ++(0.3,0.2);
    }
    \node[annot,anchor=west] at (3.35,-0.8) {newly added};

    \coordinate (root)      at (-3.1,-1.45);
    \coordinate (sharedtwo) at (-1.5,-1.45);
    \coordinate (sharedone) at (0.2,-0.6);
    \coordinate (leaftwo)   at (2,0.22);
    \coordinate (leaffour)  at (2,-1.4);
    \coordinate (leafsix)   at (0.35,-2.3);

    \draw[kvlink] (root) -- (sharedtwo);
    \draw[kvlink] (sharedtwo) -- (leafsix);
    \draw[kvlink] (sharedtwo) -- (sharedone);
    \draw[kvlink] (sharedone) -- (leaftwo);
    \draw[kvlink] (sharedone) -- (leaffour);

    \node[kvpoint] at (root)      {};
    \node[kvpoint] at (sharedtwo) {};
    \node[kvpoint] at (sharedone) {};
    \node[kvleaf]  at (leaftwo)   {};
    \node[kvleaf]  at (leaffour)  {};
    \node[kvleaf]  at (leafsix)   {};

    \node[anchor=north] at (2,0.2)     {$\mathbf{x}$};
    \node[anchor=north] at (2,-1.5)    {$\mathbf{y}$};
    \node[anchor=west]  at (0.48,-2.3) {$\mathbf{z}$};

    \fill[plate]      (-2.92,-1.65) rectangle ++(\kvw,\kvh);
    \fill[prefixfill] (-2.92,-1.65) rectangle ++(\kvw,\kvh);
    \draw[kvbox]      (-2.92,-1.65) rectangle ++(\kvw,\kvh);
    \node[annot,text=SharedPrefixColor] at ({-2.92+\kvw/2},-1.45) {$\mathbf{x}_{0:c_2}$};
    \node[annot,anchor=south,text=SharedPrefixColor] at ({-2.92+\kvw/2},-1.2) {reused twice};

    \fill[plate]       (-1.2,-1.15) rectangle ++(\kvw,\kvh);
    \fill[segmentfill] (-1.2,-1.15) rectangle ++(\kvw,\kvh);
    \draw[kvbox]       (-1.2,-1.15) rectangle ++(\kvw,\kvh);
    \node[annot,text=SegmentColor] at ({-1.2+\kvw/2},-0.97) {$\mathbf{x}_{c_2+1:c_1}$};
    \node[annot,anchor=south,text=SegmentColor] at ({-1.2+\kvw/2},-0.7) {reused once};

    \fill[newBfill] (-1.2,-2.1) rectangle ++(\kvw,\kvh);
    \draw[kvbox]    (-1.2,-2.1) rectangle ++(\kvw,\kvh);

    \fill[plate]      (0.4,-0.3) rectangle ++(\kvw,\kvh);
    \fill[cachedfill] (0.4,-0.3) rectangle ++(\kvw,\kvh);
    \draw[kvbox]      (0.4,-0.3) rectangle ++(\kvw,\kvh);
    \node[annot,text=FigureTextColor] at ({0.4+\kvw/2},-0.1) {$\mathbf{x}_{c_1+1:T}$};

    \fill[newAfill] (0.4,-1.185) rectangle ++(\kvw,\kvh);
    \draw[kvbox]    (0.4,-1.185) rectangle ++(\kvw,\kvh);

    \node[panel] at (0.65,-2.75)
      {\textbf{(c)} Radix-tree layout of the KV cache.};
    \end{scope}
  \end{tikzpicture}%
}
  \vspace{-0.5em}
  \caption{Two batched suffix-resampling requests, their prefetching
  tree, and their shared KV cache.
  \textbf{(a)} Both requests reuse the cached prefix
  \textcolor{SharedPrefixColor}{$\mathbf{x}_{0:c_2}$}, and the request for
  $\mathbf{y}$ also reuses
  \textcolor{SegmentColor}{$\mathbf{x}_{c_2+1:c_1}$}.
  \textbf{(b)} Each acceptance (left) edge carries the MH acceptance
  probability computed from the pair of sequences it connects in (a).
  \textbf{(c)} Radix-tree representation of the shared KV-cache segments.}
  \label{fig:prefetch-example}
\end{figure}

\begin{example}
  \label{ex:prefix-sharing}
  Suppose $\mathbf{x}_{0:T}$ is cached and $c_2\le c_1$.
  In Figure~\ref{fig:prefetch-example}(a), the requests for $\mathbf{y}$ and
  $\mathbf{z}$ condition on $\mathbf{x}_{0:c_1}$ and $\mathbf{x}_{0:c_2}$,
  respectively. Both requests reuse the cached prefix
  \textcolor{SharedPrefixColor}{$\mathbf{x}_{0:c_2}$}; the request for
  $\mathbf{y}$ also reuses
  \textcolor{SegmentColor}{$\mathbf{x}_{c_2+1:c_1}$}.
  Figure~\ref{fig:prefetch-example}(c) represents these shared KV-cache segments
  as a radix tree, as in SGLang~\citep{DBLP:conf/nips/ZhengYXS0YCKSGB24}.
  vLLM~\citep{DBLP:conf/sosp/KwonLZ0ZY0ZS23} instead reuses full, fixed-size KV-cache blocks, indexed by
  hashes that incorporate each block's token IDs and the preceding block's hash.
  Both implementations avoid repeated computation of cached prefix states
  while preserving the proposal distribution and autoregressive decoding of
  each requested suffix.
\end{example}

\begin{example}\label{ex:random-walk}
  After generating the resampled sequences in Example~\ref{ex:acc-ratio-request},
  we associate each prefetched proposal with its MH log-acceptance probability.
  In Figure~\ref{fig:prefetch-example}(a), request $(v_1,c_1)$ produces
  $\mathbf{y}$ by resampling the suffix of the cached state $\mathbf{x}$ after
  cut index $c_1$.
  Using~\eqref{eq:log-acc-ratio} with cut index $c_1$, we compute
  $\log A(\mathbf{y},\mathbf{x})$ and associate it with the acceptance edge
  $v_0\to v_1$ in Figure~\ref{fig:prefetch-example}(b).
  Likewise, request $(v_3,c_2)$ proposes $\mathbf{z}$ from $\mathbf{y}$.
  We compute $\log A(\mathbf{z},\mathbf{y})$ using cut index $c_2$ and
  associate it with the acceptance edge $v_1\to v_3$.

  We simulate the sequential MH decisions by traversing the prefetched subtree
  in Figure~\ref{fig:prefetch-example}(b) from its root.
  At node $v_0$, draw $u\sim\mathrm{Unif}(0,1)$ and select the next node as
  \begin{equation*}
  v_{\mathrm{next}}=
  \begin{cases}
  v_1, & \text{if } u\le A(\mathbf{y},\mathbf{x}),\\
  v_2, & \text{otherwise},
  \end{cases}
  \end{equation*}
  where $A(\mathbf{y},\mathbf{x})$ is the MH acceptance probability of
  $\mathbf{y}$ from $\mathbf{x}$.
  The realization in Figure~\ref{fig:pipeline} accepts $\mathbf{y}$, so the traversal proceeds
  from $v_0$ to $v_1$.
  At $v_1$, draw $u'\sim\mathrm{Unif}(0,1)$ independently.
  In this realization, $u'>A(\mathbf{z},\mathbf{y})$, so the second proposal $\mathbf{z}$ is rejected, yielding
  the tree path $v_0\to v_1\to v_4$ and the corresponding MH trajectory
  \begin{equation*}
  \mathbf{x}^{(0)}=\mathbf{x}
  \to \mathbf{x}^{(1)}=\mathbf{y}
  \to \mathbf{x}^{(2)}=\mathbf{y}.
  \end{equation*}
  Together with Examples~\ref{example:k2-traj},~\ref{ex:compose-cuts},
  and~\ref{ex:acc-ratio-request}, this example shows how \method batches two suffix-resampling
  requests into one model call while preserving sequential MH decisions.
  Along the illustrated path, this batched call realizes two MH transitions that
  would require two serial model calls in PowerMH.
\end{example}

\newpage

\section{Prefetch Tree Selection under a Request Budget}
\label{app:tree-traversal}

Appendix~\ref{app:realized-prefetched-path} fixes the notation for the prefetching tree.
Appendix~\ref{subsec:transition-count-objective} derives the expected number of MH transitions completed by a prefetched subtree.
Appendix~\ref{subsec:predictive-dp} turns this count into an allocation objective with predicted branch probabilities, and Appendix~\ref{subsec:dp-allocation} solves it by dynamic programming.

\subsection{Setup and Notation}
\label{app:realized-prefetched-path}

We use the prefetching tree of Section~\ref{subsec:method-llm}, with nodes $v_0,v_1,\ldots$ and root $v_0$ holding the current MH state.
When only the parent--child relation matters, we write $v$ for a node and $v^{+}$ and $v^{-}$ for its acceptance and rejection children, as in Section~\ref{subsec:efficiency}.
Suppose that $k$ of the $K$ transitions of the current block are complete (Algorithm~\ref{alg:subtree-mh}), and let $K'=K-k$ be the number that remain.
We cap the tree at depth $K'$ and write $\mathcal T_{K'}$ for the resulting finite tree.

\begin{itemize}[leftmargin=*]
    \item \emph{Expandable node.} A node $v_i\in\mathcal T_{K'}$ with $d(v_i)<K'$ is expandable if the request at its acceptance child $v_j$ satisfies Condition~\ref{cond:prefetch}, that is, $c_j\leq c_i$, where $c_i$ and $c_j$ are the cut indices of $v_i$ and $v_j$.
    \item \emph{Candidate node.} An expandable node whose strict ancestors are all expandable; $\mathcal I_{\max}$ denotes the set of candidate nodes. Once the proposal at $v^{+}$ is generated, the chain moves from a candidate node $v$ to $v^{+}$ if the proposal is accepted and to $v^{-}$ if it is rejected.
    \item \emph{Ancestor-closed.} A set $\mathcal I\subseteq\mathcal I_{\max}$ is ancestor-closed if, for every $v\in\mathcal I$, every strict ancestor of $v$ also lies in $\mathcal I$.
    \item \emph{Selection.} A selection is any ancestor-closed $\mathcal I\subseteq\mathcal I_{\max}$; its elements are the expanded nodes.
    \item \emph{Prefetched subtree.} The root $v_0$ together with both children of every $v\in\mathcal I$. Its internal nodes are exactly the nodes in $\mathcal I$, and its leaves are its nodes outside $\mathcal I$; for $\mathcal I=\varnothing$, it consists of $v_0$ alone, which is then its only leaf.
\end{itemize}
Because a selection is ancestor-closed, a selection of at most $N_{\mathrm{pf}}$ nodes contains no node deeper than $N_{\mathrm{pf}}-1$, so the depth cap is inactive when $K'\geq N_{\mathrm{pf}}$.

\begin{definition}
\label{def:finite-tree-continuation}
Fix a realization of the proposals and of their acceptance probabilities, with $\epsilon\in(0,\tfrac12)$ as in Section~\ref{subsec:efficiency}.

\textbf{Rounded branch probabilities.} At each node $v$,
\begin{equation}
\label{eq:rounded-branch-probability}
\begin{aligned}
h(v^{+};\epsilon)&:=
\begin{cases}
0,&A(v^{+},v)\leq\epsilon,\\
1,&A(v^{+},v)\geq1-\epsilon,\\
A(v^{+},v),&\textup{otherwise},
\end{cases}\\
h(v^{-};\epsilon)&:=1-h(v^{+};\epsilon).
\end{aligned}
\end{equation}
\textbf{Traversal.} Fix a set $\mathcal I$ of nodes of $\mathcal T_{K'}$, for example a selection.
A traversal follows one child of each node of $\mathcal I$ that it reaches: from a reached node $v\in\mathcal I$, it moves to $v^{+}$ with probability $h(v^{+};\epsilon)$ and to $v^{-}$ with probability $h(v^{-};\epsilon)$, independently across nodes, and it stops at the first node outside $\mathcal I$.
The traversal \emph{completes a transition} at each node of $\mathcal I$ that it reaches, whichever child it then moves to.
Write $F_{\mathcal I}(v;\epsilon)$ for the expected number of transitions completed by a traversal started at $v$.

\textbf{Reaching weights.} The reaching weight of a node is the product of the branch probabilities along its path from the root:
\begin{equation}
\label{eq:reaching-weight}
R(v_0;\epsilon)=1,\qquad
R(v^{\pm};\epsilon)=R(v;\epsilon)\,h(v^{\pm};\epsilon)
\quad\textup{for every node }v.
\end{equation}
The traversal, $F_{\mathcal I}$, and $R$ use only branch probabilities in $[0,1]$ with $h(v^{-};\epsilon)=1-h(v^{+};\epsilon)$; the lemmas below therefore also hold for any such branch probabilities, in particular for the predicted $\widehat h$ of Appendix~\ref{subsec:predictive-dp}.
\end{definition}

\subsection{The Transition-Count Objective}
\label{subsec:transition-count-objective}

Following prior prefetching analyses \citep{strid2010prefetching}, we score a selection $\mathcal I$ by the expected number of MH transitions it completes and by the number of requests it issues.
By Definition~\ref{def:finite-tree-continuation}, a traversal completes one transition at each node of $\mathcal I$ that it visits, whether the proposal there is accepted or rejected.
Lemma~\ref{lem:prefetch-expectation} writes the expected count from the root, $F_{\mathcal I}(v_0;\epsilon)$, as a sum of reaching weights, and Lemma~\ref{lem:prefetch-recursion} gives the recursive form used by the dynamic program.

\begin{lemma}[Expected transition count]
\label{lem:prefetch-expectation}
For any ancestor-closed set $\mathcal I$ of nodes of $\mathcal T_{K'}$ (in particular, any selection), with reaching weights $R(v;\epsilon)$ as in~\eqref{eq:reaching-weight},
\begin{equation}
\label{eq:prefetch-transition-count}
F_{\mathcal I}(v_0;\epsilon)=\sum_{v\in\mathcal I}R(v;\epsilon).
\end{equation}
\end{lemma}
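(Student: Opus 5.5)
The plan is to write the transition count as a sum of indicator variables and apply linearity of expectation. For a traversal started at $v_0$, let $N$ denote the number of transitions completed. By Definition~\ref{def:finite-tree-continuation}, a transition is completed exactly at each node of $\mathcal I$ that the traversal reaches. Hence $N=\sum_{v\in\mathcal I}\1\{\text{traversal reaches }v\}$, and $F_{\mathcal I}(v_0;\epsilon)=\sum_{v\in\mathcal I}\Pr(\text{traversal reaches }v)$. This sum is finite because $\mathcal T_{K'}$ has bounded depth. It therefore suffices to show $\Pr(\text{reach }v)=R(v;\epsilon)$ for every $v\in\mathcal I$.

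The key step is this identity. Fix $v\in\mathcal I$ and let $v_0=u_0,u_1,\ldots,u_m=v$ be its root path, where each $u_{i+1}$ is either $u_i^{+}$ or $u_i^{-}$. Because $\mathcal I$ is ancestor-closed, every $u_i$ with $i<m$ lies in $\mathcal I$. The traversal therefore cannot stop before $v$ unless it leaves this path. Concretely, it reaches $v$ exactly when, for each $i<m$, the child chosen at $u_i$ is $u_{i+1}$. Each of these choices is made with probability $h(u_{i+1};\epsilon)$, and the choices are independent across nodes. So $\Pr(\text{reach }v)=\prod_{i<m}h(u_{i+1};\epsilon)$, which unrolls the recursion~\eqref{eq:reaching-weight} into $R(v;\epsilon)$. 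I would formalize this by induction on depth: $\Pr(\text{reach }v^{\pm})=\Pr(\text{reach }v)\,h(v^{\pm};\epsilon)$ for $v\in\mathcal I$, with base case $\Pr(\text{reach }v_0)=1$.

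Summing these probabilities over $v\in\mathcal I$ gives~\eqref{eq:prefetch-transition-count}. One degenerate case needs care: if $\mathcal I=\varnothing$, the traversal stops at $v_0$ immediately and both sides equal zero, which is consistent.

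The only delicate point is this last one, where ancestor-closedness is used. Without it, a node of $\mathcal I$ could have an ancestor outside $\mathcal I$. The traversal would then stop at that ancestor, so $\Pr(\text{reach }v)$ would be $0$ rather than $R(v;\epsilon)$. I will make explicit that the stopping rule ("stops at the first node outside $\mathcal I$") never fires strictly above a node of $\mathcal I$. Everything else is routine linearity of expectation. The argument uses only that the $h$ values lie in $[0,1]$ and that sibling probabilities sum to one, so it covers the predicted $\widehat h$ as well.
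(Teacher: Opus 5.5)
Your proposal is correct and follows essentially the same argument as the paper: you write the count as a sum of visit indicators over $\mathcal I$, apply linearity of expectation, and use ancestor closure so the traversal cannot stop on the root path, making the visit probability equal to the product of branch probabilities $R(v;\epsilon)$. You also handle the case $\mathcal I=\varnothing$ separately, as the paper does; your explicit induction on depth is just a more formal rendering of the paper's one-line product argument.
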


\begin{proof}
A traversal from $v_0$ follows a single path, so it visits each node at most once, and its transition count equals the number of nodes in $\mathcal I$ it visits.
By linearity of expectation,
\[
F_{\mathcal I}(v_0;\epsilon)
=
\sum_{v\in\mathcal I}
P(\text{the traversal visits }v).
\]
Fix $v\in\mathcal I$.
By ancestor closure, every strict ancestor of $v$ lies in $\mathcal I$, so the traversal cannot stop on the path from $v_0$ to $v$.
It therefore visits $v$ exactly when it takes every branch on that path, an event whose probability is the product of the branch probabilities along the path, that is, $R(v;\epsilon)$ by~\eqref{eq:reaching-weight}.
Summing over $v\in\mathcal I$ gives~\eqref{eq:prefetch-transition-count}; for $\mathcal I=\varnothing$, both sides are zero.
\end{proof}

\begin{lemma}[Recursive form]
\label{lem:prefetch-recursion}
For any set $\mathcal I$ of nodes of $\mathcal T_{K'}$, $F_{\mathcal I}(v;\epsilon)=0$ for $v\notin\mathcal I$, and
\begin{equation}
\label{eq:prefetch-continuation}
\begin{aligned}
F_{\mathcal I}(v;\epsilon)
&=1+h(v^{+};\epsilon)F_{\mathcal I}(v^{+};\epsilon)+h(v^{-};\epsilon)F_{\mathcal I}(v^{-};\epsilon),
\qquad v\in\mathcal I.
\end{aligned}
\end{equation}
\end{lemma}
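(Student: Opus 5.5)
The plan is a first-step analysis of the traversal in Definition~\ref{def:finite-tree-continuation}, started at an arbitrary node $v$ rather than at the root. Write $N_{\mathcal I}(v)$ for the random number of transitions completed by a traversal started at $v$, so that $F_{\mathcal I}(v;\epsilon)=\mathbb{E}[N_{\mathcal I}(v)]$. The statement concerns an arbitrary set $\mathcal I$, not only an ancestor-closed one. So I will work directly from the traversal rule and not invoke Lemma~\ref{lem:prefetch-expectation}; that lemma would in any case only describe traversals from $v_0$.

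First, take $v\notin\mathcal I$. The traversal stops at the first node outside $\mathcal I$, and $v$ itself is such a node. It therefore stops immediately, completes no transition, and $N_{\mathcal I}(v)=0$ deterministically, so $F_{\mathcal I}(v;\epsilon)=0$.

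Next, take $v\in\mathcal I$. The traversal completes exactly one transition at $v$. It then moves to $v^{+}$ with probability $h(v^{+};\epsilon)$ or to $v^{-}$ with probability $h(v^{-};\epsilon)$, and these sum to one by \eqref{eq:rounded-branch-probability}. Call the chosen child $w$; from $w$ onward the process follows the same rule, so $N_{\mathcal I}(v)=1+N'$ where $N'$ counts the transitions completed after leaving $v$. The branch choices at different nodes are independent. Hence, conditional on moving to $w$, the remaining path uses only the independent choices at descendants of $w$, and $N'$ has the law of $N_{\mathcal I}(w)$. The tree is a tree, so the continuation from $w$ never revisits $v$. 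Applying the law of total expectation over the first branch gives \eqref{eq:prefetch-continuation}. If a child lies outside $\mathcal I$, its term vanishes by the first case. If $v$ sits at depth $K'$, its children lie outside $\mathcal T_{K'}$ and are taken to be stopping nodes, so their $F$-value is zero.

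The main obstacle is justifying that the conditional continuation from $w$ is a fresh traversal started at $w$, which is a Markov or strong-Markov-type property of the traversal. It follows from the independence of the branch choices across nodes and from the fact that the traversal is a deterministic function of the choices at the nodes it visits. Finiteness of every expectation also needs a word: depth is capped at $K'$, so $N_{\mathcal I}(v)\le K'$. If one prefers to avoid conditioning arguments, a cleaner route is induction on the height of $v$ in the finite tree $\mathcal T_{K'}$, writing $F_{\mathcal I}(v;\epsilon)$ as a sum over the finitely many root-to-stop paths from $v$. Splitting those paths by their first edge then gives the recursion, because each path probability factors as the branch probability at $v$ times the path probability from the child.
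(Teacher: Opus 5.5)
Your proposal is correct and follows the paper's own argument. Both proofs treat $v\notin\mathcal I$ as an immediate stop, and for $v\in\mathcal I$ both count the transition at $v$ and then use independence of the branch choices to restart the traversal at the chosen child. The paper additionally splits into the cases $h(v^{+};\epsilon)\in\{0,1\}$ and $0<h(v^{+};\epsilon)<1$; your single total-expectation step covers all three at once, because a zero-probability branch contributes $0\cdot F_{\mathcal I}(w;\epsilon)=0$ by your boundedness remark.
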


\begin{figure}[!ht]
  \centering
    \begin{tikzpicture}[paperfig]
    \node[treenode,fill=InactiveNodeFillColor] (v) at (0,0.35) {$v$};
    \node[treenode,newAfill] (vacc) at (-1.5,-1.05) {$v^{+}$};
    \node[treenode,rejectednode] (vrej) at (1.5,-1.05) {$v^{-}$};

    \draw[acc] (v) -- (vacc)
      node[edgelabel,pos=0.4,anchor=south east,xshift=-2pt]
      {$h(v^{+};\epsilon)=0$};
    \draw[rej] (v) -- (vrej)
      node[edgelabel,pos=0.4,anchor=south west,xshift=2pt]
      {$h(v^{-};\epsilon)=1$};

    \node[annot,color=PrunedMarkColor,fill=FigureBackgroundColor] (vacc_l)
      at (-2.1,-1.9) {\scalebox{1.5}{\xmark}};
    \node[annot,color=PrunedMarkColor,fill=FigureBackgroundColor] (vacc_r)
      at (-0.9,-1.9) {\scalebox{1.5}{\xmark}};
    \draw[acc] (vacc) -- (vacc_l);
    \draw[rej] (vacc) -- (vacc_r);
    \node[annot] at (1.5,-1.85) {$F_{\mathcal I}(v^{-};\epsilon)$};

    \node[panel] at (0,-2.45)
      {\textbf{(a)} $\epsilon$-certain rejection};
  \end{tikzpicture}
  \hfill
  \begin{tikzpicture}[paperfig]
    \node[treenode,fill=InactiveNodeFillColor] (v) at (0,0.35) {$v$};
    \node[treenode,newAfill] (vacc) at (-1.5,-1.05) {$v^{+}$};
    \node[treenode,rejectednode] (vrej) at (1.5,-1.05) {$v^{-}$};

    \draw[acc] (v) -- (vacc)
      node[edgelabel,pos=0.4,anchor=south east,xshift=-2pt]
      {$h(v^{+};\epsilon)=1$};
    \draw[rej] (v) -- (vrej)
      node[edgelabel,pos=0.4,anchor=south west,xshift=2pt]
      {$h(v^{-};\epsilon)=0$};

    \node[annot] at (-1.5,-1.85) {$F_{\mathcal I}(v^{+};\epsilon)$};
    \node[annot,color=PrunedMarkColor,fill=FigureBackgroundColor] (vrej_l)
      at (0.9,-1.9) {\scalebox{1.5}{\xmark}};
    \node[annot,color=PrunedMarkColor,fill=FigureBackgroundColor] (vrej_r)
      at (2.1,-1.9) {\scalebox{1.5}{\xmark}};
    \draw[acc] (vrej) -- (vrej_l);
    \draw[rej] (vrej) -- (vrej_r);

    \node[panel] at (0,-2.45)
      {\textbf{(b)} $\epsilon$-certain acceptance};
  \end{tikzpicture}
  \hfill
  \begin{tikzpicture}[paperfig]
    \node[treenode,fill=InactiveNodeFillColor] (v) at (0,0.35) {$v$};
    \node[treenode,newAfill] (vacc) at (-1.5,-1.05) {$v^{+}$};
    \node[treenode,rejectednode] (vrej) at (1.5,-1.05) {$v^{-}$};

    \draw[acc] (v) -- (vacc)
      node[edgelabel,pos=0.4,anchor=south east,xshift=-2pt]
      {$h(v^{+};\epsilon)$};
    \draw[rej] (v) -- (vrej)
      node[edgelabel,pos=0.4,anchor=south west,xshift=2pt]
      {$h(v^{-};\epsilon)$};

    \node[annot] at (-1.5,-1.85) {$F_{\mathcal I}(v^{+};\epsilon)$};
    \node[annot] at (1.5,-1.85) {$F_{\mathcal I}(v^{-};\epsilon)$};

    \node[panel] at (0,-2.45)
      {\textbf{(c)} $\epsilon$-uncertain decision};
  \end{tikzpicture}
  \caption{Visualization of
  \eqref{eq:prefetch-continuation}: (a) certain rejection,
  (b) certain acceptance, and (c) an uncertain rounded decision.
  A \textcolor{PrunedMarkColor}{\xmark} marks a pruned edge.
  The proposal at $v^{+}$ is required in every case.}
  \label{fig:finite-tree-continuation-recurrence}
\end{figure}

\begin{proof}
A traversal started at $v\notin\mathcal I$ stops immediately and completes no transition.
A traversal started at $v\in\mathcal I$ completes one transition at $v$ and, since the choices at different nodes are independent, then proceeds as a traversal started at the child it moves to.
By~\eqref{eq:rounded-branch-probability}, the MH decision at $v$ falls into one of three cases (Figure~\ref{fig:finite-tree-continuation-recurrence}):

\begin{itemize}[leftmargin=*]
    \item \emph{Certain rejection} ($h(v^{+};\epsilon)=0$, panel~(a)). The traversal moves to $v^{-}$ with probability one, so $F_{\mathcal I}(v;\epsilon)=1+F_{\mathcal I}(v^{-};\epsilon)$, which is~\eqref{eq:prefetch-continuation} with a vanishing $v^{+}$ term.
    \item \emph{Certain acceptance} ($h(v^{+};\epsilon)=1$, panel~(b)). The traversal moves to $v^{+}$ with probability one, so $F_{\mathcal I}(v;\epsilon)=1+F_{\mathcal I}(v^{+};\epsilon)$, which is~\eqref{eq:prefetch-continuation} with a vanishing $v^{-}$ term.
    \item \emph{Uncertain decision} ($0<h(v^{+};\epsilon)<1$, panel~(c)). The traversal moves to $v^{+}$ with probability $h(v^{+};\epsilon)$ and to $v^{-}$ with probability $h(v^{-};\epsilon)$; conditioning on this outcome gives~\eqref{eq:prefetch-continuation}.
\end{itemize}
\end{proof}

Lemma~\ref{lem:prefetch-recursion} is the basis of the dynamic program in Proposition~\ref{prop:predictive-dp-optimality}: by~\eqref{eq:prefetch-continuation}, $F_{\mathcal I}(v;\epsilon)$ depends on $\mathcal I$ only through its parts below $v^{+}$ and $v^{-}$, so an optimal selection below $v$ combines optimal selections below its two children.

\textbf{Remark.}
The request count depends only on $\mathcal I$, not on the traversal: expanding a node $v$ issues exactly one request, for the proposal at $v^{+}$, while the rejection child $v^{-}$ keeps the state of $v$ and needs no proposal.
A selection $\mathcal I$ thus issues $|\mathcal I|$ requests and yields a prefetched subtree of $2|\mathcal I|+1$ nodes.
Rounding does not lower this cost: at a certain rejection, $h(v^{+};\epsilon)=0$ makes the acceptance branch unreachable, yet expanding $v$ still costs the request for the proposal at $v^{+}$. Rounding only prunes the low-probability branch below $v$, so that no further requests are spent there.

\subsection{Predictive Subtree Allocation}
\label{subsec:predictive-dp}

We want to expand the selection $\mathcal I$ that maximizes the expected transition count $F_{\mathcal I}(v_0;\epsilon)$ within the request budget.
This count depends on the branch probabilities $h(v^{\pm};\epsilon)$, which by~\eqref{eq:rounded-branch-probability} are determined by the acceptance probability $A(v^{+},v)$.
However, $A(v^{+},v)$ is known only after the proposal at $v^{+}$ has been generated and scored, so it is unavailable while we assemble the request batch.
We therefore use a classifier that, from information available before generation, predicts the probability that each candidate node is an $\epsilon$-certain reject, an $\epsilon$-certain accept, or $\epsilon$-uncertain (Appendix~\ref{apx:acceptance-predictors} reports its empirical results).
Below, we convert these class probabilities into predicted branch probabilities $\widehat h$ and formulate the allocation objective with $\widehat h$ in place of $h$.

\textbf{Notation.}
Throughout this subsection, $A:=A(v^{+},v)$ denotes the acceptance probability of the proposal at $v^{+}$, viewed as a random variable, and conditioning on $v$ means conditioning on the information available before allocation.
As in Section~\ref{subsec:efficiency}, $v$ is an $\epsilon$-certain reject if $A\leq\epsilon$, an $\epsilon$-certain accept if $A\geq1-\epsilon$, and $\epsilon$-uncertain otherwise; $P(v\text{ is $\epsilon$-certain reject})$ denotes the probability of this event given the information available before allocation, and similarly for the other two classes.

\textbf{Computing $\widehat h$ from the classifier.}
For each candidate node $v$, the classifier takes the features of $v$ that are available before generation (in our implementation, the cut fraction of the request at $v^{+}$) and outputs, through a softmax (Appendix~\ref{apx:acceptance-predictors}), a predicted probability for each of the three classes rather than only the most likely class:
\begin{gather*}
\widehat P(v\text{ is $\epsilon$-certain reject}),\qquad
\widehat P(v\text{ is $\epsilon$-certain accept}),\qquad\text{and}\\
\widehat P(v\text{ is $\epsilon$-uncertain})=1-\widehat P(v\text{ is $\epsilon$-certain reject})-\widehat P(v\text{ is $\epsilon$-certain accept}).
\end{gather*}
We then compute the \emph{predicted branch probabilities} as
\begin{equation}
\label{eq:predictive-rounded-branch-probabilities}
\begin{aligned}
\widehat h(v^{+};\epsilon)
&=\tfrac12\bigl[1+\widehat P(v\text{ is $\epsilon$-certain accept})
                 -\widehat P(v\text{ is $\epsilon$-certain reject})\bigr],\\
\widehat h(v^{-};\epsilon)&=1-\widehat h(v^{+};\epsilon).
\end{aligned}
\end{equation}
That is, $\widehat h(v^{+};\epsilon)$ is the average of $1$ (certain accept), $0$ (certain reject), and $\tfrac12$ (uncertain, the midpoint of $[\epsilon,1-\epsilon]$), weighted by the predicted class probabilities; it is the value $h^{\star}(v^{+};\epsilon)$ of Lemma~\ref{lem:midpoint-branch-probability} with the predicted class probabilities in place of the exact ones.

\begin{lemma}[Midpoint branch probability]
\label{lem:midpoint-branch-probability}
Fix a node $v$. Its exact class probabilities bound the mean rounded branch probability:
\begin{equation}
\label{eq:mean-rounded-factor-interval}
\begin{aligned}
\mathbb E[h(v^{+};\epsilon)\mid v]
&\geq P(v\textup{ is $\epsilon$-certain accept})+\epsilon\,P(v\textup{ is $\epsilon$-uncertain}),\\
\mathbb E[h(v^{+};\epsilon)\mid v]
&\leq P(v\textup{ is $\epsilon$-certain accept})+(1-\epsilon)\,P(v\textup{ is $\epsilon$-uncertain}).
\end{aligned}
\end{equation}
Define the \emph{error} of a candidate value $x\in[0,1]$ as
$\bigl|x-\mathbb E[h(v^{+};\epsilon)\mid v]\bigr|$,
and its \emph{worst-case error} as the largest such error over all means
allowed by \eqref{eq:mean-rounded-factor-interval}. The worst-case error
is smallest at the midpoint of this interval,
\begin{equation}
\label{eq:midpoint-choice}
h^{\star}(v^{+};\epsilon)
=\tfrac12\bigl[1+P(v\textup{ is $\epsilon$-certain accept})-P(v\textup{ is $\epsilon$-certain reject})\bigr].
\end{equation}
\end{lemma}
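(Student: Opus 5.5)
The plan is to split the conditional expectation of $h(v^{+};\epsilon)$ over the three $\epsilon$-certainty classes, bound its value on each class, and then solve the resulting one-dimensional minimax problem.

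By~\eqref{eq:rounded-branch-probability}, $h(v^{+};\epsilon)$ is a deterministic function of $A$, so the law of total expectation gives
\begin{equation*}
\mathbb E[h(v^{+};\epsilon)\mid v]
= 1\cdot P(\text{accept})+0\cdot P(\text{reject})
+\mathbb E\bigl[A\,\mathbf 1\{\epsilon<A<1-\epsilon\}\mid v\bigr],
\end{equation*}
where ``accept'' and ``reject'' are shorthand for the two $\epsilon$-certain classes. On the uncertain event, $A\in(\epsilon,1-\epsilon)$. Hence the last term lies between $\epsilon\,P(\text{uncertain})$ and $(1-\epsilon)\,P(\text{uncertain})$, which gives~\eqref{eq:mean-rounded-factor-interval}. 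If $P(\text{uncertain})=0$, both bounds coincide and the interval is a single point.

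Write the resulting interval as $[a,b]$. Every value in it is attainable as a mean: put all uncertain mass at a single point $A=t$ and let $t$ range over $(\epsilon,1-\epsilon)$. The closed endpoints are only approached in the limit, which does not affect the supremum. For a candidate $x$, the worst-case error is then $\max\{|x-a|,|x-b|\}$. This is a convex piecewise-linear function of $x$, and it is minimized uniquely at $x=(a+b)/2$, where its value is $(b-a)/2$. Any other $x$ is farther from one endpoint.

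Finally, compute the midpoint:
\begin{equation*}
\tfrac12(a+b)=P(\text{accept})+\tfrac12 P(\text{uncertain}).
\end{equation*}
Substituting $P(\text{uncertain})=1-P(\text{accept})-P(\text{reject})$ gives $\tfrac12\bigl[1+P(\text{accept})-P(\text{reject})\bigr]$, which is~\eqref{eq:midpoint-choice}. The $\epsilon$ terms cancel in this sum. The only subtle step is the interpretation of ``worst case over all allowed means'': the minimax claim needs the allowed set to be (the closure of) the full interval, not just a subset of it, so I would state the attainability argument above explicitly. The rest is routine.
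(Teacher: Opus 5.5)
Your proposal is correct and follows the paper's proof essentially step for step. You decompose $\mathbb E[h(v^{+};\epsilon)\mid v]$ over the three classes, bound the uncertain-class contribution between $\epsilon$ and $1-\epsilon$ times the uncertain-class probability, and use that $\max\{|x-a|,|x-b|\}\ge (b-a)/2$ with equality only at the midpoint. Your attainability remark is extra care the paper does not need, since it takes the allowed means to be the whole closed interval given by the two inequalities; it is harmless, though, and keeps the minimax claim valid under the stricter reading of ``allowed.''
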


\begin{proof}
By~\eqref{eq:rounded-branch-probability}, $h(v^{+};\epsilon)$ is $1$ on the certain-accept class, $0$ on the certain-reject class, and $A$ on the uncertain class, so
\[
\mathbb E[h(v^{+};\epsilon)\mid v]
=P(v\text{ is $\epsilon$-certain accept})
+P(v\text{ is $\epsilon$-uncertain})\,\mathbb E\bigl[A\bigm|v\text{ is $\epsilon$-uncertain}\bigr].
\]
The conditional mean of $A$ on the uncertain class lies in $[\epsilon,1-\epsilon]$, which gives~\eqref{eq:mean-rounded-factor-interval}.
For a quantity known only to lie in an interval $[a,b]$, an estimate $x$ has worst-case error $\max\{|x-a|,|x-b|\}\geq(b-a)/2$, with equality exactly at $x=(a+b)/2$.
The midpoint of the interval in~\eqref{eq:mean-rounded-factor-interval} is
\[
\begin{aligned}
&P(v\text{ is $\epsilon$-certain accept})+\tfrac12\,P(v\text{ is $\epsilon$-uncertain})\\
&\qquad=\tfrac12\bigl[1+P(v\text{ is $\epsilon$-certain accept})-P(v\text{ is $\epsilon$-certain reject})\bigr],
\end{aligned}
\]
where the equality uses that the three class probabilities sum to one. This is~\eqref{eq:midpoint-choice}.
\end{proof}

\textbf{Objective formulation.}
With $\widehat h$ in place of $h$, Lemma~\ref{lem:prefetch-expectation} gives the allocation objective: choose the selection $\mathcal I$ that maximizes the predicted expected transition count within the request budget,
\begin{equation}
\label{eq:prefetch-selection-objective}
\begin{aligned}
\max_{\mathcal I}\quad
&\widehat F_{\mathcal I}(v_0;\epsilon)
 =\sum_{v\in\mathcal I}\widehat R(v;\epsilon)\\
\text{s.t.}\quad
&\mathcal I\subseteq\mathcal I_{\max}\ \text{ancestor-closed},\\
&|\mathcal I|\leq N_{\mathrm{pf}},
\end{aligned}
\end{equation}
where
\begin{itemize}
\item $\widehat R$ and $\widehat F$ are the reaching weights and the expected transition count of Definition~\ref{def:finite-tree-continuation}, computed with the predicted branch probabilities $\widehat h$ in~\eqref{eq:predictive-rounded-branch-probabilities} in place of the unknown $h$;
\item the candidate set $\mathcal I_{\max}$, determined by Condition~\ref{cond:prefetch} and the depth cap $K'$, ensures that every selected request can be generated in a single batch;
\item ancestor closure requires every ancestor of a selected node to be selected as well.
\end{itemize}
A node with $\widehat R(v;\epsilon)=0$ would contribute nothing to the objective but still cost a request; this requires some $\widehat h$ on its path to be exactly $0$ or $1$, which our softmax classifier never outputs, since all three of its class probabilities are positive.

\textbf{Why rounded estimates.}
The exact count $F_{\mathcal I}(v_0;0)$ requires the value of $A$, which is unknown until the proposal is generated, and an estimate of $A$ itself would carry error at every decision.
Rounding removes the need to estimate $A$ on the $\epsilon$-certain decisions, at a bias of at most $\epsilon$ per branch probability, but it does not remove the error of the class probabilities themselves.
On the $\epsilon$-certain decisions, $h(v^{+};\epsilon)$ is exactly $0$ or $1$ and is determined by the class alone; on the $\epsilon$-uncertain ones, given exact class probabilities, the midpoint estimate is off by at most $\tfrac12(1-2\epsilon)\,P(v\text{ is $\epsilon$-uncertain})$ (Lemma~\ref{lem:midpoint-branch-probability}).
Since only $5\%$--$33\%$ of decisions are $\epsilon$-uncertain (Figure~\ref{fig:epsilon-certainty-datasets}), the midpoint error affects a minority of decisions; the error of the classifier adds to it on every decision.

\subsection{Optimal Budget Allocation via Dynamic Programming}
\label{subsec:dp-allocation}

Define $V(v,b;\epsilon)$ as the largest predicted transition count attainable from $v$ with at most $b$ requests,
\begin{equation}
\label{eq:predictive-dp-value}
V(v,b;\epsilon)
=
\max_{\substack{\mathcal I_v\text{ feasible below }v,\\
                |\mathcal I_v|\leq b}}
\widehat F_{\mathcal I_v}(v;\epsilon),
\end{equation}
where a set of candidate nodes $\mathcal I_v$ is \emph{feasible below $v$} if
$\mathcal I_v\subseteq\mathcal I_{\max}$, every element of
$\mathcal I_v$ lies in the subtree rooted at $v$, and every candidate node on
the path from $v$ to an element of $\mathcal I_v$ also lies in
$\mathcal I_v$.

The optimization in \eqref{eq:predictive-dp-value} decomposes the budget at each node: one request expands $v$ itself, and the remaining $b-1$ are split between its two child subtrees before the outcome at $v$ is known (Figure~\ref{fig:dp-budget-split}).
This decomposition is inspired by the tree-based speculative decoding of Sequoia~\citep{chen2024sequoia}.

\begin{proposition}[Optimal predictive allocation]
\label{prop:predictive-dp-optimality}
Fix $\epsilon$ and the predicted branch probabilities
$\widehat h(v^{+};\epsilon)$ and $\widehat h(v^{-};\epsilon)$ in~\eqref{eq:predictive-rounded-branch-probabilities}. For every node $v$ and every budget $b\geq0$,
\begin{equation}
\label{eq:predictive-dp-bellman}
V(v,b;\epsilon)=
\begin{cases}
0, & \textup{if } v\notin\mathcal I_{\max},\\[2pt]
0, & \textup{if } b=0,\\[2pt]
1+\max\limits_{0\leq m\leq b-1}\big\{
 \widehat h(v^{+};\epsilon)V(v^{+},m;\epsilon)+\widehat h(v^{-};\epsilon)V(v^{-},b-1-m;\epsilon)\big\},
& \textup{otherwise}.
\end{cases}
\end{equation}
The maximum in
\eqref{eq:prefetch-selection-objective} equals
$V(v_0,N_{\mathrm{pf}};\epsilon)$.
\end{proposition}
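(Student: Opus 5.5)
The argument is a standard Bellman decomposition on the binary tree: induct on the height of $v$ in the finite tree $\mathcal T_{K'}$ (equivalently, on $K'-d(v)$), using Lemma~\ref{lem:prefetch-recursion} with $\widehat h$ in place of $h$. The Definition~\ref{def:finite-tree-continuation} remark allows that substitution, since the lemmas hold for any branch probabilities in $[0,1]$ summing to one at each node. The plan is to show, for every node $v$ and budget $b$, that the right-hand side of \eqref{eq:predictive-dp-bellman} is both an upper bound and achievable for the maximization in \eqref{eq:predictive-dp-value}. The final claim then follows by taking $v=v_0$ and $b=N_{\mathrm{pf}}$. The main check there is that ``feasible below $v_0$'' coincides with ``ancestor-closed subset of $\mathcal I_{\max}$''. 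This holds because every candidate node has all strict ancestors candidate (expandable), so the path condition is exactly ancestor closure.

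\textbf{Base cases.} If $b=0$, the only feasible set is $\varnothing$, and then $\widehat F_{\varnothing}(v)=0$. If $v\notin\mathcal I_{\max}$, then no descendant of $v$ is a candidate either, because a candidate requires all strict ancestors to be expandable, and $v$ itself is not expandable, or one of its ancestors is not, or its depth is capped. Hence every feasible set below $v$ is empty and $V=0$. One point needs care: a feasible set below $v$ must contain $v$ whenever it is nonempty and $v\in\mathcal I_{\max}$. This follows because the path from $v$ to any element starts at $v$, which is a candidate node.

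\textbf{Inductive step} ($v\in\mathcal I_{\max}$, $b\geq1$). First, the empty set gives $0$, and this is dominated by the recursion value, which is at least $1$ (take $m=0$; the $V$ terms are $\geq 0$). So we may restrict to nonempty $\mathcal I_v$, which contain $v$. Write $\mathcal I_v=\{v\}\cup\mathcal I^{+}\cup\mathcal I^{-}$, where $\mathcal I^{\pm}$ are the parts lying in the subtrees of $v^{\pm}$. Each $\mathcal I^{\pm}$ is feasible below $v^{\pm}$, since the path from $v^{\pm}$ to an element is a subpath of the path from $v$. The sizes satisfy $|\mathcal I^{+}|+|\mathcal I^{-}|\leq b-1$.

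By Lemma~\ref{lem:prefetch-recursion}, $\widehat F_{\mathcal I_v}(v)=1+\widehat h(v^{+})\widehat F_{\mathcal I_v}(v^{+})+\widehat h(v^{-})\widehat F_{\mathcal I_v}(v^{-})$. A traversal from $v^{\pm}$ never leaves that subtree, so $\widehat F_{\mathcal I_v}(v^{\pm})=\widehat F_{\mathcal I^{\pm}}(v^{\pm})$. This locality is the step I expect to need the most explicit justification, though it follows directly from the traversal definition. Bounding each term by $V(v^{\pm},\cdot)$ and setting $m=|\mathcal I^{+}|$ gives the upper bound; here I use monotonicity of $V$ in $b$ to absorb the slack $\leq b-1$ into $b-1-m$. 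Conversely, take optimal $\mathcal I^{+}$ and $\mathcal I^{-}$ for $(v^{+},m^\star)$ and $(v^{-},b-1-m^\star)$, and form $\{v\}\cup\mathcal I^{+}\cup\mathcal I^{-}$. This set is feasible below $v$, because the path from $v$ to any element passes through $v$ and then through a path feasible below the child. It respects the budget, and by the same decomposition it attains the recursion value. Combining both directions with nonnegativity of $\widehat h$ closes the induction.
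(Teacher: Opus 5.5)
Your proposal is correct and follows the paper's proof essentially step for step. The shared steps are the boundary cases, the upper bound from splitting a nonempty selection into $\{v\}$ and its restrictions below $v^{+}$ and $v^{-}$ via Lemma~\ref{lem:prefetch-recursion}, attainability by gluing optimal child selections, and the identification of feasibility below $v_0$ with ancestor closure. You are in fact more explicit than the paper on two points it leaves implicit: that no descendant of a non-candidate node is a candidate, which is what makes the case $v\notin\mathcal I_{\max}$ admit only the empty selection, and that $\widehat F_{\mathcal I_v}(v^{\pm})$ depends only on the part of $\mathcal I_v$ below $v^{\pm}$.
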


\begin{figure}[!ht]
  \centering
  \begin{tikzpicture}[paperfig,
  dpnode/.style={treenode,minimum width=0.8cm},
  accsubtree/.style={draw=TreeEdgeColor,fill=NewSuffixAFillColor!20,thick},
  rejsubtree/.style={draw=RejectEdgeColor,fill=FigureBackgroundColor,thick,dashed}]
  \draw[accsubtree] (-4.6,-1.3) -- (-6.6,-2.6) -- (-2.6,-2.6) -- cycle;
  \draw[rejsubtree] (4.6,-1.3) -- (2.6,-2.6) -- (6.6,-2.6) -- cycle;
  \node[dpnode,fill=InactiveNodeFillColor,inner ysep=2.5pt] (v) at (0,0)
    {\vphantom{h}$v$\\one request};
  \node at (0,0.75) {$n$ available requests};
  \node[dpnode,newAfill] (l) at (-4.6,-1.3) {$v^{+}$};
  \node[dpnode,rejectednode,fill=FigureBackgroundColor] (r) at (4.6,-1.3) {$v^{-}$};
  \draw[acc] (v) --
    node[edgelabel,anchor=south east,align=right,xshift=-2pt]
      {accept with predicted\\probability $\widehat h(v^{+};\epsilon)$} (l);
  \draw[rej] (v) --
    node[edgelabel,anchor=south west,align=left,xshift=2pt]
      {reject with predicted\\probability $\widehat h(v^{-};\epsilon)$} (r);
  \node[anchor=base] at (-4.6,-2.45) {left subtree};
  \node[anchor=base] at (4.6,-2.45) {right subtree};
  \node[align=center] at (-4.6,-3.0)
    {$m$ continuation requests};
  \node[align=center] at (4.6,-3.0)
    {$n-1-m$ continuation requests};
\end{tikzpicture}
  \caption{Budget decomposition visualization for Proposition~\ref{prop:predictive-dp-optimality}.}
  \label{fig:dp-budget-split}
\end{figure}

\begin{proof}
\emph{Boundary cases.}
A nonempty selection feasible below $v$ contains $v$, so it needs both
$v\in\mathcal I_{\max}$ and $b\geq1$. Either boundary case therefore
leaves only the empty selection, giving $V(v,b;\epsilon)=0$.
Fix $v\in\mathcal I_{\max}$ and $b\geq1$ below.

\emph{Upper bound.}
Expanding $v$ alone has value one by Lemma~\ref{lem:prefetch-recursion}, so
an optimal $\mathcal I_v$ is nonempty and hence contains $v$, which costs
one request. Its remaining nodes lie in the two child
subtrees, and its restriction to each subtree is feasible below that child. If
$m$ of them lie below $v^{+}$ and at most $b-1-m$ below $v^{-}$, where
$0\leq m\leq b-1$, then Lemma~\ref{lem:prefetch-recursion} with $\widehat h$ in place of $h$ and
the optimality of each child value
give
\[
\widehat F_{\mathcal I_v}(v;\epsilon)
\leq1+\widehat h(v^{+};\epsilon)V(v^{+},m;\epsilon)
     +\widehat h(v^{-};\epsilon)V(v^{-},b-1-m;\epsilon).
\]
Maximizing over $m$ gives the upper bound in
\eqref{eq:predictive-dp-bellman}.

\emph{Attainability.}
Conversely, combine a maximizing $m$ with optimal selections below $v^{+}$
and $v^{-}$ at capacities $m$ and $b-1-m$. Together with the expansion of
$v$ they are feasible below $v$ and use at most $b$ requests, and
Lemma~\ref{lem:prefetch-recursion} gives them exactly the maximized value.
The bound is attained, which proves \eqref{eq:predictive-dp-bellman};
induction from the leaves of the finite tree $\mathcal T_{K'}$ then
determines every $V(v,b;\epsilon)$.

Finally, feasibility below $v_0$ is ancestor closure within
$\mathcal I_{\max}$, so $V(v_0,N_{\mathrm{pf}};\epsilon)$ is the
optimum of \eqref{eq:prefetch-selection-objective}.
\end{proof}

\textbf{Remark.}
Proposition~\ref{prop:predictive-dp-optimality} requires the predicted branch probabilities $\widehat h$ at every candidate node.
Estimating them reduces to classification (Appendix~\ref{apx:acceptance-predictors}): by~\eqref{eq:rounded-branch-probability}, $h(v^{+};\epsilon)$ is exactly zero or one on the two $\epsilon$-certain classes, so a classifier only needs to predict the class probabilities of each candidate node, and Lemma~\ref{lem:midpoint-branch-probability} converts them into $\widehat h(v^{+};\epsilon)$ without modeling $A$ within the uncertain class.

\newpage

\section{Experimental Settings}
\label{apx:exp-set}

\subsection{Baselines, Datasets, and Base LLMs}

\noindent\textbf{MH-based samplers.}
We apply \framework to three MH-based samplers that target the same power distribution, and compare each sampler with its \framework variant:
\begin{itemize}[leftmargin=*, nosep]
    \item \textbf{PowerMH}~\citep{karan2026reasoning} builds a continuation block by block. At each block, it extends the current prefix with the proposal LLM and then runs several MH updates that target the power-sharpened distribution over the extended prefix. Each update draws a cut position uniformly at random, keeps the tokens before it, resamples the suffix from the proposal LLM, and accepts or rejects the candidate with the MH correction.

    \item \textbf{EntropyCut}~\citep{entropycut} modifies PowerMH by choosing cut positions where the base model's next-token entropy increases; these entropy jumps serve as a proxy for consequential reasoning decisions. It regenerates the suffix from the chosen position and applies the MH correction, so the power target is preserved.

    \item \textbf{MultiTryMH} replaces each single-proposal PowerMH update with a Multiple-Try Metropolis update~\citep{liu2000multipletry}. We use a simplified version of Entropy-Guided Power Sampling (EGPS)~\citep{powermh-mtm}, namely the ``EGPS (MultiTryMH, with random pos)'' variant in their Figure~3: we omit entropy-guided cut selection and draw the cut position uniformly at random.
    This baseline preserves the power target while considering several candidate suffixes in each MH transition.
\end{itemize}

\begin{table}[!ht]
    \centering
    \caption{Datasets and their full sizes; every run uses only the first 20 prompts of each dataset.}
    \label{tab:dataset-summary}
    \small
    \newcommand{\hf}[2][datasets/]{{\footnotesize\href{https://huggingface.co/#1#2}{\nolinkurl{#2}}}}
    \begin{tabular}{@{}lllr@{}}
        \toprule
        \textbf{Category} & \textbf{Dataset} & \textbf{Source (Hugging Face)} & \textbf{Instances} \\
        \midrule
        \multirow{2}{*}{\makecell[l]{Mathematical reasoning}} & MATH500 & \hf{HuggingFaceH4/MATH-500} & 500 \\
         & AIME 24\&25 & \makecell[l]{\hf{Maxwell-Jia/AIME_2024}\\\hf{MathArena/aime_2025}} & 45 \\
        \midrule
        \multirow{3}{*}{\makecell[l]{Code generation}} & HumanEval & \hf{openai/openai_humaneval} & 164 \\
         & MBPP & \hf{google-research-datasets/mbpp} & 257 \\
         & LCB V6 & \hf{livecodebench/code_generation_lite} & 175 \\
        \midrule
        \multirow{2}{*}{\makecell[l]{Scientific and general knowledge}} & GPQA & \hf{Idavidrein/gpqa} & 198 \\
         & MMLU & \hf{cais/mmlu} & 14{,}042 \\
        \bottomrule
    \end{tabular}
\end{table}

\noindent\textbf{Datasets.}
We evaluate on seven datasets in three task categories (Table~\ref{tab:dataset-summary}): mathematical reasoning (MATH500~\citep{lightman2024lets}, a 500-problem subset of MATH~\citep{DBLP:conf/nips/HendrycksBKABTS21}, and AIME 24\&25), code generation (HumanEval~\citep{chen2021evaluatingllmcode}, MBPP~\citep{mbpp_dataset}, and LiveCodeBench V6 (LCB V6)~\citep{DBLP:conf/iclr/JainHGLYZWSSS25}), and scientific and general knowledge (GPQA Diamond (GPQA)~\citep{conf/colm/rein2024gpqa} and MMLU~\citep{DBLP:conf/iclr/HendrycksBBZMSS21}).

\begin{table}[!ht]
    \centering
    \caption{Base LLMs and their sources.}
    \label{tab:base-llm-summary}
    \small
    \newcommand{\hf}[1]{{\footnotesize\href{https://huggingface.co/#1}{\nolinkurl{#1}}}}
    \begin{tabular}{@{}llll@{}}
        \toprule
        \textbf{Family} & \textbf{Base LLM} & \textbf{Source (Hugging Face)} & \textbf{Released} \\
        \midrule
        \multirow{2}{*}{Qwen3} & Qwen3-4B & \hf{Qwen/Qwen3-4B} & Apr 2025 \\
         & Qwen3-8B & \hf{Qwen/Qwen3-8B} & Apr 2025 \\
        \midrule
        \multirow{2}{*}{Qwen3.5} & Qwen3.5-4B & \hf{Qwen/Qwen3.5-4B} & Feb 2026 \\
         & Qwen3.5-9B & \hf{Qwen/Qwen3.5-9B} & Feb 2026 \\
        \midrule
        Gemma4 & Gemma4-12B-it & \hf{google/gemma-4-12B-it} & May 2026 \\
        \bottomrule
    \end{tabular}
\end{table}

\noindent\textbf{Base LLMs.}
We evaluate five base LLMs (Table~\ref{tab:base-llm-summary}), where ``base LLM'' denotes the backbone \(p_0\).
Each is a recent open-weight release from a widely adopted model family: two parameter scales from each of the Qwen3 and Qwen3.5 generations, and the larger Gemma4-12B-it, so the models span 4B to 12B parameters.
None is a pretrained-only checkpoint: the four Qwen models are the default releases rather than the \texttt{-Base} variants and ship with chat templates that support a thinking mode, and Gemma4-12B-it is instruction-tuned.

\subsection{Evaluation Metrics}
\label{apx:metrics}

\textbf{Execution Efficiency}  measures the cost of running the same MH sampler for a fixed number of transitions.
\begin{itemize}[leftmargin=*]
\item \textit{MH transitions per model call.}
The number of MH transitions realized per model call; higher means more chain progress per call.
On each prompt, a method that completes \(K\) realized MH transitions realizes \(K/\mathrm{\#\ of\ model\ calls}\) transitions per call.

\item \textit{Wall-clock time and speedup.}
We record the wall-clock time of every model call.
On each prompt, a method's \(\mathrm{time\ per\ call}\) is the mean duration of its timed model calls, and \(\mathrm{call\ time}=\mathrm{\#\ of\ model\ calls}\times\mathrm{time\ per\ call}\) is its cumulative model-call time.
Every reported speedup is a ratio of means over the prompts \(i=1,\ldots,N\): the mean cumulative model-call time of the sequential sampler divided by that of its \framework variant,
\begin{equation}
    \operatorname{Speedup}
    = \frac{\frac{1}{N}\sum_{i} \mathrm{call\ time}_i(\text{sampler})}
           {\frac{1}{N}\sum_{i} \mathrm{call\ time}_i(\text{\framework variant})}.
    \label{eq:paired-call-speedup}
\end{equation}
We decompose it with ratios of means as well: the \emph{increase in time per call} is the mean time per call of the \framework variant divided by that of the sampler, and the \emph{reduction in model calls} is the speedup multiplied by this increase, so that the speedup equals the reduction in calls divided by the increase in time per call (Figure~\ref{fig:teaser}).
\end{itemize}

\textbf{Sample Quality} measures whether \framework changes the samples after the same number of transitions.
\begin{itemize}[leftmargin=*]
    \item \textit{Likelihood and confidence.}
    Prior work evaluates MCMC samplers for text by comparing their sampled distributions with reference distributions~\citep{DBLP:conf/conll/ForristalMDB23,DBLP:conf/icml/DuAHYLEC24}.
    For power sharpening, \citet{karan2026reasoning} characterize sampled responses by their length-normalized base-model likelihood and confidence.
    Let \(\mathbf{x}^{(K)}_{0:n}=(\mathbf{x}_0,\mathbf{x}^{(K)}_{1:n})\) be the terminal state after exactly \(K\) realized MH transitions: the prompt \(\mathbf{x}_0\) followed by a response of length \(n\). Its length-normalized log-likelihood is
    \begin{equation}
    \operatorname{Log-Likelihood}(\mathbf{x}^{(K)})
    =
    \frac{1}{n}\log p_0(\mathbf{x}^{(K)}_{1:n}\mid\mathbf{x}_0),
    \label{eq:normalized-loglik}
    \end{equation}
    and its confidence is the mean negative entropy of the base model's next-token distributions along the sequence:
    \begin{equation}
    \operatorname{Confidence}(\mathbf{x}^{(K)})
    =
    \frac{1}{n}\sum_{t=1}^{n}\sum_{u\in\mathcal{V}}
    p_0(u\mid\mathbf{x}^{(K)}_{0:t-1})
    \log p_0(u\mid\mathbf{x}^{(K)}_{0:t-1}),
    \label{eq:confidence}
    \end{equation}
    where \(\mathcal{V}\) is the vocabulary.
    Both statistics are computed under the same base model \(p_0\), so they are comparable across samplers.
    Higher log-likelihood means a higher mean token log-probability under \(p_0\), and higher confidence means lower predictive uncertainty along the sequence.

    \item \textit{Paired TOST equivalence test.}
    This test checks whether \framework changes the mean of each statistic by less than a small margin \(\delta\).
    For each sampler and its \framework variant, we match prompts and seeds, run both for the same \(K\), and take the per-prompt difference \(d_i\) of each statistic (\framework variant minus sequential sampler), with mean \(\mu_d\).
    The two one-sided tests (TOST) procedure~\citep{schuirmann1987tost,lakens2017equivalence} tests
    \begin{equation}
    H_0:|\mu_d|\geq\delta
    \qquad\text{against}\qquad
    H_1:|\mu_d|<\delta
    \end{equation}
    with two one-sided paired \(t\)-tests; \(p_{\mathrm{eq}}\) is the larger of their two \(p\)-values.
    We set \(\delta\) to 0.2 standard deviations of the statistic across prompts under the sequential sampler, a conventional small effect, and declare equivalence when \(p_{\mathrm{eq}}<0.05\), that is, when the 90\% confidence interval for \(\mu_d\) lies within \((-\delta,\delta)\).
\end{itemize}

\textbf{Hardware-Resource Efficiency} measures how much of the KV-cache pool a sampler occupies and how much prefix computation it reuses.
\begin{itemize}[leftmargin=*]
    \item \textit{Peak KV-pool use.}
    We report the peak fraction of occupied blocks in the pool, where higher use indicates greater cache demand.

    \item \textit{Prefix-cache hit rate.}
    We measure prefix reuse as the cumulative token hit rate: the fraction of queried prefix tokens whose KV states are already cached.
    A higher hit rate indicates greater reuse of prefix computation.

    \item \textit{Prefix-cache miss tokens per transition.}
    We divide the number of prefix-cache miss tokens by the number of realized MH transitions.
    This complements the hit rate by measuring the absolute uncached workload per transition.
\end{itemize}

\subsection{Implementation Environment}
Table~\ref{tab:hard-soft-configuration} summarizes the experimental software and hardware environment.
We implement PowerMH, EntropyCut, MultiTryMH, and their \framework variants on top of vLLM, using custom engine patches.
For Case Study~6 (Appendix~\ref{apx:serving-engines}), we also implement PowerMH and \method on SGLang.

\textbf{vLLM engine patches.}
Each MH transition needs the log-probability of a proposed suffix under both the proposal $q_{\tau}$ and the base model $p_0$, and EntropyCut also needs the next-token entropy of $p_0$ at every position.
The unmodified vLLM engine returns log-probabilities from only one version of the logits, either before or after temperature scaling; this choice is an engine-wide setting (\texttt{logprobs\_mode}), not a per-request option.
We therefore patch the vLLM~V1 engine, run with \texttt{logprobs\_mode=processed\_logprobs}, to return all of these quantities from the forward pass that generates each token.

Before temperature scaling, penalties, or any other logit processing, the patched sampler keeps a float32 copy of the logits.
For each sampled token $x_t$, it returns
(i) the proposal log-probability $\log q_{\tau}(x_t\mid\mathbf{x}_{0:t-1})$ from the processed logits, as in unmodified vLLM;
(ii) the base-model log-probability $\log p_0(x_t\mid\mathbf{x}_{0:t-1})$ from the unprocessed copy; and
(iii) the full-vocabulary entropy of $p_0$, computed from the unprocessed copy with the natural logarithm,
\begin{equation*}
H_t
=
-\sum_{u\in\mathcal{V}} p_0(u\mid\mathbf{x}_{0:t-1})\log p_0(u\mid\mathbf{x}_{0:t-1}),
\end{equation*}

\subsection{Hyperparameter Configurations}
\label{apx:hyperparameters}
Table~\ref{tab:hyperparameter-configuration} lists the default power-sampling settings and the settings of our case studies.
Following \citet{karan2026reasoning}, we set the proposal temperature to $\tau=1/\alpha$ for every sampler except MultiTryMH and \methodMTM, whose candidates use the temperatures in Table~\ref{tab:hyperparameter-configuration}(a); all runs use seed 10086.

\begin{table}[!h]
    \centering
        \caption{Software and hardware configuration.}
        \label{tab:hard-soft-configuration}
        \small
        \begin{tabular}{ll}
            \toprule
            \textbf{Software} & \textbf{Version / configuration} \\
            \midrule
    Python & 3.13.0 \\
    PyTorch & 2.13.0  \\
    vLLM & 0.27.1 (V1 engine) with custom engine patches \\
    Transformers & 5.14.1  \\
            \bottomrule
            \multicolumn{2}{c}{\textbf{(a)} Software configuration} \\
            \addlinespace[4pt]
            \toprule
    \textbf{Hardware} & \textbf{Specification} \\
            \midrule
    GPU & 1$\times$ NVIDIA H100 96\,GB (GH200 Grace Hopper Superchip) \\
    CPU & NVIDIA Grace CPU, 72 cores (3.456 GHz) \\
    Memory & 480 GB LPDDR5X unified memory \\
            \bottomrule
            \multicolumn{2}{c}{\textbf{(b)} Hardware configuration} \\
        \end{tabular}
\end{table}

\begin{table}[!ht]
    \centering
    \caption{Hyperparameter configurations.}
    \label{tab:hyperparameter-configuration}
    \small
    \begin{tabular}{@{}ll@{}}
        \toprule
        \textbf{Parameter} & \textbf{Setting} \\
        \midrule
Response length & $n=1024$ tokens (fixed; no early stop at EOS) \\
Block size & a single block, $B=n=1024$ \\
Power exponent & $\alpha=4.0$ \\
Proposal LLM & $q_{\tau}=$ base model sampled at temperature $\tau$ \\
Proposal temperature & $\tau=1/\alpha=0.25$ \\
Realized MH transitions & $K=100$ \\
Prompts per dataset & first 20 prompts \\
Seed & 10086 \\
Traversal rule & default BFS  \\
Prefetch budget & $N_{\mathrm{pf}}=20$ \\
        \midrule
EntropyCut cut-law exponent & $\beta=4.0$ (dimensionless) \\
MultiTryMH candidates per transition & 4, each at a temperature drawn from $\{0.25,0.5,1.0\}$ \\
        \bottomrule
        \multicolumn{2}{c}{\textbf{(a)} Default power-sampling configuration} \\
        \addlinespace[4pt]
        \toprule
        \textbf{Parameter} & \textbf{Setting} \\
        \midrule
Base model and dataset & $p_0=$ Qwen3.5-9B on LCB (Case Studies~1, 2, 5--8) \\
Realized MH transitions & $K\in\{10,20,30\}$ (Case Study~8) \\
Certainty threshold & $\epsilon=0.01$ (Case Studies~4, 5, 7) \\
Prefetch budget (Case Study~2,5) & $N_{\mathrm{pf}}\in\{8,10,12,16,20\}$ \\
        \bottomrule
        \multicolumn{2}{c}{\textbf{(b)} Case study configuration} \\
    \end{tabular}
\end{table}

\section{Additional Experimental Results}
\label{apx:extra-exp}

\FloatBarrier
\subsection{Full results: Acceleration across base LLMs and datasets}
\label{apx:steps-time-compare}

We pair PowerMH, EntropyCut, and MultiTryMH with their \framework variants on all five base LLMs and seven datasets.
Tables~\ref{tab:all-paired-mean-metrics}, \ref{tab:entropycut-mean-metrics}, and~\ref{tab:multitry-mean-metrics} report the mean and standard deviation of each metric for the three samplers.

\begin{table}[!htbp]
    \caption{PowerMH execution metrics across base LLMs.}
    \label{tab:all-paired-mean-metrics}

    \adjustbox{width=\linewidth}{%
    \begin{tabular}{llcccc}
        \toprule
        Dataset & Base LLM
        & \makecell{Total Time (s) \(\downarrow\)}
        & \makecell{Transitions\\per call \(\uparrow\)}
        & \makecell{Time/call (s) \(\downarrow\)}
        & Speedup \\
        \midrule
        \multirow{5}{*}{\rotatebox[origin=c]{90}{\makecell{\textbf{MATH500}\\(Fig.~\ref{fig:paired-bfs-math500})}}}
        & {Qwen3-4B}
        & \(327.83_{\pm20.11} / 155.68_{\pm21.84}\)
        & \(1 / 4.42_{\pm0.60}\)
        & \(3.28_{\pm0.20} / 6.77_{\pm0.09}\)
        & \(2.11\times\) \\
        & {Qwen3-8B}
        & \(448.28_{\pm27.52} / 200.65_{\pm31.21}\)
        & \(1 / 4.37_{\pm0.68}\)
        & \(4.48_{\pm0.28} / 8.58_{\pm0.14}\)
        & \(2.23\times\) \\
        & {Qwen3.5-4B}
        & \(371.97_{\pm22.70} / 249.10_{\pm70.00}\)
        & \(1 / 3.57_{\pm0.99}\)
        & \(3.72_{\pm0.23} / 8.29_{\pm0.11}\)
        & \(1.49\times\) \\
        & {Qwen3.5-9B}
        & \(499.24_{\pm30.53} / 242.93_{\pm45.34}\)
        & \(1 / 4.50_{\pm0.75}\)
        & \(4.99_{\pm0.31} / 10.63_{\pm0.20}\)
        & \(2.06\times\) \\
        & {Gemma4-12B-it}
        & \(687.48_{\pm41.01} / 395.55_{\pm77.86}\)
        & \(1 / 3.96_{\pm0.68}\)
        & \(6.87_{\pm0.41} / 15.21_{\pm0.29}\)
        & \(1.74\times\) \\
        \midrule
        \multirow{5}{*}{\rotatebox[origin=c]{90}{\textbf{AIME 24\&25}}}
        & {Qwen3-4B}
        & \(318.76_{\pm19.59} / 150.66_{\pm32.25}\)
        & \(1 / 4.66_{\pm0.86}\)
        & \(3.19_{\pm0.20} / 6.78_{\pm0.11}\)
        & \(2.12\times\) \\
        & {Qwen3-8B}
        & \(447.87_{\pm27.50} / 195.48_{\pm34.63}\)
        & \(1 / 4.48_{\pm0.71}\)
        & \(4.48_{\pm0.28} / 8.53_{\pm0.15}\)
        & \(2.29\times\) \\
        & {Qwen3.5-4B}
        & \(360.89_{\pm22.43} / 217.57_{\pm37.98}\)
        & \(1 / 3.99_{\pm0.62}\)
        & \(3.61_{\pm0.22} / 8.47_{\pm0.16}\)
        & \(1.66\times\) \\
        & {Qwen3.5-9B}
        & \(498.37_{\pm30.48} / 239.31_{\pm44.47}\)
        & \(1 / 4.56_{\pm0.72}\)
        & \(4.98_{\pm0.30} / 10.63_{\pm0.31}\)
        & \(2.08\times\) \\
        & {Gemma4-12B-it}
        & \(686.87_{\pm38.96} / 389.43_{\pm57.27}\)
        & \(1 / 3.93_{\pm0.53}\)
        & \(6.87_{\pm0.39} / 15.04_{\pm0.23}\)
        & \(1.76\times\) \\
        \midrule
        \multirow{5}{*}{\rotatebox[origin=c]{90}{\textbf{MBPP}}}
        & {Qwen3-4B}
        & \(327.45_{\pm20.07} / 296.53_{\pm65.99}\)
        & \(1 / 2.42_{\pm0.65}\)
        & \(3.27_{\pm0.20} / 6.81_{\pm0.08}\)
        & \(1.10\times\) \\
        & {Qwen3-8B}
        & \(447.85_{\pm27.46} / 446.25_{\pm41.30}\)
        & \(1 / 1.96_{\pm0.19}\)
        & \(4.48_{\pm0.27} / 8.68_{\pm0.07}\)
        & \(1.00\times\) \\
        & {Qwen3.5-4B}
        & \(370.03_{\pm22.56} / 221.28_{\pm69.63}\)
        & \(1 / 4.13_{\pm0.94}\)
        & \(3.70_{\pm0.23} / 8.56_{\pm0.30}\)
        & \(1.67\times\) \\
        & {Qwen3.5-9B}
        & \(497.47_{\pm30.41} / 309.52_{\pm88.73}\)
        & \(1 / 3.74_{\pm0.84}\)
        & \(4.97_{\pm0.30} / 10.92_{\pm0.32}\)
        & \(1.61\times\) \\
        & {Gemma4-12B-it}
        & \(684.80_{\pm38.84} / 475.98_{\pm177.87}\)
        & \(1 / 3.64_{\pm1.29}\)
        & \(6.85_{\pm0.39} / 15.29_{\pm0.15}\)
        & \(1.44\times\) \\
        \midrule
        \multirow{5}{*}{\rotatebox[origin=c]{90}{\textbf{GPQA}}}
        & {Qwen3-4B}
        & \(323.68_{\pm21.46} / 198.61_{\pm42.79}\)
        & \(1 / 3.55_{\pm0.59}\)
        & \(3.24_{\pm0.21} / 6.83_{\pm0.08}\)
        & \(1.63\times\) \\
        & {Qwen3-8B}
        & \(442.49_{\pm29.28} / 208.59_{\pm77.96}\)
        & \(1 / 4.40_{\pm0.91}\)
        & \(4.42_{\pm0.29} / 8.55_{\pm0.12}\)
        & \(2.12\times\) \\
        & {Qwen3.5-4B}
        & \(365.82_{\pm24.10} / 217.07_{\pm79.59}\)
        & \(1 / 4.16_{\pm0.92}\)
        & \(3.66_{\pm0.24} / 8.38_{\pm0.20}\)
        & \(1.69\times\) \\
        & {Qwen3.5-9B}
        & \(491.59_{\pm32.48} / 239.92_{\pm79.83}\)
        & \(1 / 4.73_{\pm1.12}\)
        & \(4.92_{\pm0.32} / 10.57_{\pm0.25}\)
        & \(2.05\times\) \\
        & {Gemma4-12B-it}
        & \(687.19_{\pm39.51} / 411.00_{\pm127.19}\)
        & \(1 / 3.90_{\pm0.81}\)
        & \(6.87_{\pm0.40} / 15.12_{\pm0.21}\)
        & \(1.67\times\) \\
        \midrule
        \multirow{5}{*}{\rotatebox[origin=c]{90}{\textbf{HumanEval}}}
        & {Qwen3-4B}
        & \(327.25_{\pm20.07} / 184.93_{\pm54.81}\)
        & \(1 / 3.89_{\pm1.04}\)
        & \(3.27_{\pm0.20} / 6.70_{\pm0.11}\)
        & \(1.77\times\) \\
        & {Qwen3-8B}
        & \(447.42_{\pm27.44} / 247.81_{\pm73.78}\)
        & \(1 / 3.72_{\pm0.95}\)
        & \(4.47_{\pm0.27} / 8.59_{\pm0.09}\)
        & \(1.81\times\) \\
        & {Qwen3.5-4B}
        & \(369.89_{\pm22.61} / 231.97_{\pm54.39}\)
        & \(1 / 3.79_{\pm0.92}\)
        & \(3.70_{\pm0.23} / 8.36_{\pm0.24}\)
        & \(1.59\times\) \\
        & {Qwen3.5-9B}
        & \(498.37_{\pm30.48} / 412.74_{\pm78.17}\)
        & \(1 / 2.67_{\pm0.46}\)
        & \(4.98_{\pm0.30} / 10.72_{\pm0.21}\)
        & \(1.21\times\) \\
        & {Gemma4-12B-it}
        & \(687.16_{\pm39.54} / 400.87_{\pm82.35}\)
        & \(1 / 3.93_{\pm0.74}\)
        & \(6.87_{\pm0.40} / 15.19_{\pm0.17}\)
        & \(1.71\times\) \\
        \midrule
        \multirow{5}{*}{\rotatebox[origin=c]{90}{\textbf{MMLU}}}
        & {Qwen3-4B}
        & \(327.51_{\pm20.06} / 205.84_{\pm62.37}\)
        & \(1 / 3.59_{\pm1.06}\)
        & \(3.28_{\pm0.20} / 6.82_{\pm0.11}\)
        & \(1.59\times\) \\
        & {Qwen3-8B}
        & \(447.91_{\pm27.44} / 298.07_{\pm123.34}\)
        & \(1 / 3.29_{\pm1.17}\)
        & \(4.48_{\pm0.27} / 8.57_{\pm0.13}\)
        & \(1.50\times\) \\
        & {Qwen3.5-4B}
        & \(370.51_{\pm22.60} / 210.37_{\pm47.85}\)
        & \(1 / 4.12_{\pm0.81}\)
        & \(3.71_{\pm0.23} / 8.32_{\pm0.17}\)
        & \(1.76\times\) \\
        & {Qwen3.5-9B}
        & \(493.44_{\pm31.14} / 418.01_{\pm122.83}\)
        & \(1 / 2.72_{\pm0.78}\)
        & \(4.93_{\pm0.31} / 10.52_{\pm0.15}\)
        & \(1.18\times\) \\
        & {Gemma4-12B-it}
        & \(686.89_{\pm39.47} / 452.21_{\pm61.30}\)
        & \(1 / 3.40_{\pm0.61}\)
        & \(6.87_{\pm0.39} / 15.03_{\pm0.21}\)
        & \(1.52\times\) \\
        \midrule
        \multirow{5}{*}{\rotatebox[origin=c]{90}{\textbf{LCB V6}}}
        & {Qwen3-4B}
        & \(318.79_{\pm19.56} / 156.14_{\pm35.60}\)
        & \(1 / 4.92_{\pm0.76}\)
        & \(3.19_{\pm0.20} / 7.44_{\pm0.14}\)
        & \(2.04\times\) \\
        & {Qwen3-8B}
        & \(432.26_{\pm25.14} / 210.55_{\pm51.09}\)
        & \(1 / 4.85_{\pm0.90}\)
        & \(4.32_{\pm0.25} / 9.77_{\pm0.15}\)
        & \(2.05\times\) \\
        & {Qwen3.5-4B}
        & \(368.07_{\pm20.61} / 182.61_{\pm21.80}\)
        & \(1 / 4.95_{\pm0.58}\)
        & \(3.68_{\pm0.21} / 8.92_{\pm0.28}\)
        & \(2.02\times\) \\
        & {Qwen3.5-9B}
        & \(539.24_{\pm30.92} / 375.12_{\pm35.08}\)
        & \(1 / 4.96_{\pm0.47}\)
        & \(5.39_{\pm0.31} / 18.45_{\pm0.51}\)
        & \(1.44\times\) \\
        & {Gemma4-12B-it}
        & \(685.34_{\pm39.07} / 351.02_{\pm53.96}\)
        & \(1 / 4.47_{\pm0.68}\)
        & \(6.85_{\pm0.39} / 15.35_{\pm0.29}\)
        & \(1.95\times\) \\
        \bottomrule
    \end{tabular}%
    }
\end{table}

\begin{figure}[!htbp]
    \centering
    \includegraphics[width=.92\linewidth]{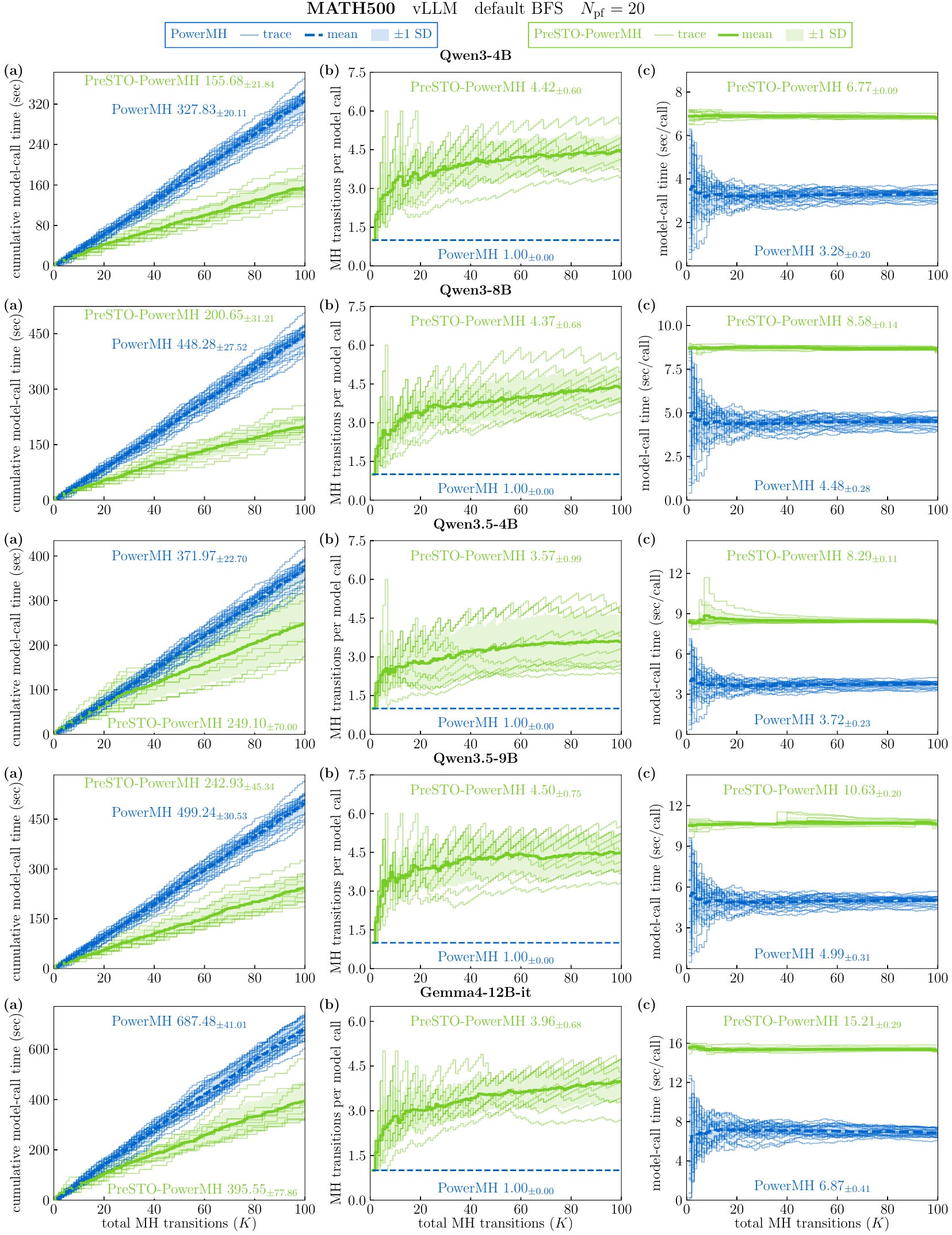}
    \caption{PowerMH and \method on MATH500.}
    \label{fig:paired-bfs-math500}
\end{figure}

\begin{table}[!htbp]
    \caption{EntropyCut execution metrics across base LLMs.}   
     \label{tab:entropycut-mean-metrics}
    \resizebox{\linewidth}{!}{%
    \begin{tabular}{llcccc}
        \toprule
        Dataset & Base LLM
        & \makecell{Total Time (s) \(\downarrow\)}
        & \makecell{Transitions\\per call \(\uparrow\)}
        & \makecell{Time/call (s) \(\downarrow\)}
        & Speedup \\
        \midrule
        \multirow{5}{*}{\rotatebox[origin=c]{90}{\makecell{\textbf{MATH500}\\(Fig.~\ref{fig:paired-entropycut-math500})}}}
        & {Qwen3-4B}
        & \(277.41_{\pm45.48} / 157.85_{\pm20.51}\)
        & \(1 / 4.26_{\pm0.54}\)
        & \(2.77_{\pm0.45} / 6.62_{\pm0.23}\)
        & \(1.76\times\) \\
        & {Qwen3-8B}
        & \(336.64_{\pm58.87} / 192.00_{\pm24.30}\)
        & \(1 / 4.34_{\pm0.47}\)
        & \(3.37_{\pm0.59} / 8.23_{\pm0.27}\)
        & \(1.75\times\) \\
        & {Qwen3.5-4B}
        & \(365.78_{\pm80.61} / 233.47_{\pm73.77}\)
        & \(1 / 3.81_{\pm0.99}\)
        & \(3.66_{\pm0.81} / 8.22_{\pm0.47}\)
        & \(1.57\times\) \\
        & {Qwen3.5-9B}
        & \(478.02_{\pm74.36} / 272.29_{\pm59.03}\)
        & \(1 / 3.84_{\pm0.65}\)
        & \(4.78_{\pm0.74} / 10.11_{\pm0.52}\)
        & \(1.76\times\) \\
        & {Gemma4-12B-it}
        & \(980.09_{\pm240.16} / 431.85_{\pm175.94}\)
        & \(1 / 4.07_{\pm2.00}\)
        & \(9.80_{\pm2.40} / 15.06_{\pm1.37}\)
        & \(2.27\times\) \\
        \midrule
        \multirow{5}{*}{\rotatebox[origin=c]{90}{\textbf{AIME 24\&25}}}
        & {Qwen3-4B}
        & \(255.04_{\pm37.63} / 150.54_{\pm16.85}\)
        & \(1 / 4.57_{\pm0.49}\)
        & \(2.55_{\pm0.38} / 6.81_{\pm0.18}\)
        & \(1.69\times\) \\
        & {Qwen3-8B}
        & \(345.86_{\pm62.77} / 192.57_{\pm26.84}\)
        & \(1 / 4.37_{\pm0.56}\)
        & \(3.46_{\pm0.63} / 8.28_{\pm0.30}\)
        & \(1.80\times\) \\
        & {Qwen3.5-4B}
        & \(405.21_{\pm104.47} / 196.78_{\pm41.26}\)
        & \(1 / 4.29_{\pm0.64}\)
        & \(4.05_{\pm1.04} / 8.20_{\pm0.66}\)
        & \(2.06\times\) \\
        & {Qwen3.5-9B}
        & \(490.26_{\pm82.75} / 277.80_{\pm65.23}\)
        & \(1 / 3.83_{\pm0.75}\)
        & \(4.90_{\pm0.83} / 10.21_{\pm0.51}\)
        & \(1.76\times\) \\
        & {Gemma4-12B-it}
        & \(684.74_{\pm165.29} / 449.67_{\pm125.04}\)
        & \(1 / 3.55_{\pm0.85}\)
        & \(6.85_{\pm1.65} / 15.01_{\pm0.69}\)
        & \(1.52\times\) \\
        \midrule
        \multirow{5}{*}{\rotatebox[origin=c]{90}{\textbf{MBPP}}}
        & {Qwen3-4B}
        & \(260.73_{\pm60.29} / 218.41_{\pm48.68}\)
        & \(1 / 3.51_{\pm1.25}\)
        & \(2.61_{\pm0.60} / 7.15_{\pm0.20}\)
        & \(1.19\times\) \\
        & {Qwen3-8B}
        & \(367.12_{\pm82.44} / 256.66_{\pm96.78}\)
        & \(1 / 4.00_{\pm1.42}\)
        & \(3.67_{\pm0.82} / 9.02_{\pm0.14}\)
        & \(1.43\times\) \\
        & {Qwen3.5-4B}
        & \(509.39_{\pm79.35} / 207.31_{\pm22.63}\)
        & \(1 / 4.13_{\pm0.37}\)
        & \(5.09_{\pm0.79} / 8.49_{\pm0.27}\)
        & \(2.46\times\) \\
        & {Qwen3.5-9B}
        & \(762.07_{\pm70.82} / 267.03_{\pm54.05}\)
        & \(1 / 4.30_{\pm0.92}\)
        & \(7.62_{\pm0.71} / 11.03_{\pm0.17}\)
        & \(2.85\times\) \\
        & {Gemma4-12B-it}
        & \(1022.57_{\pm156.01} / 486.63_{\pm107.74}\)
        & \(1 / 3.25_{\pm0.73}\)
        & \(10.23_{\pm1.56} / 15.12_{\pm1.06}\)
        & \(2.10\times\) \\
        \midrule
        \multirow{5}{*}{\rotatebox[origin=c]{90}{\textbf{GPQA}}}
        & {Qwen3-4B}
        & \(372.97_{\pm69.04} / 178.64_{\pm32.15}\)
        & \(1 / 4.30_{\pm0.67}\)
        & \(3.73_{\pm0.69} / 7.49_{\pm0.42}\)
        & \(2.09\times\) \\
        & {Qwen3-8B}
        & \(377.49_{\pm64.81} / 210.70_{\pm24.84}\)
        & \(1 / 4.39_{\pm0.50}\)
        & \(3.77_{\pm0.65} / 9.14_{\pm0.38}\)
        & \(1.79\times\) \\
        & {Qwen3.5-4B}
        & \(397.44_{\pm97.55} / 203.25_{\pm52.71}\)
        & \(1 / 4.36_{\pm0.77}\)
        & \(3.97_{\pm0.98} / 8.50_{\pm0.49}\)
        & \(1.96\times\) \\
        & {Qwen3.5-9B}
        & \(529.97_{\pm101.82} / 269.92_{\pm75.09}\)
        & \(1 / 4.21_{\pm0.75}\)
        & \(5.30_{\pm1.02} / 10.87_{\pm0.70}\)
        & \(1.96\times\) \\
        & {Gemma4-12B-it}
        & \(943.24_{\pm225.03} / 435.88_{\pm116.35}\)
        & \(1 / 3.64_{\pm0.72}\)
        & \(9.43_{\pm2.25} / 15.17_{\pm1.47}\)
        & \(2.16\times\) \\
        \midrule
        \multirow{5}{*}{\rotatebox[origin=c]{90}{\textbf{HumanEval}}}
        & {Qwen3-4B}
        & \(380.83_{\pm108.02} / 203.08_{\pm60.14}\)
        & \(1 / 3.68_{\pm1.37}\)
        & \(3.81_{\pm1.08} / 6.83_{\pm0.77}\)
        & \(1.88\times\) \\
        & {Qwen3-8B}
        & \(441.98_{\pm154.66} / 215.14_{\pm88.83}\)
        & \(1 / 4.44_{\pm1.79}\)
        & \(4.42_{\pm1.55} / 8.30_{\pm1.04}\)
        & \(2.05\times\) \\
        & {Qwen3.5-4B}
        & \(435.56_{\pm105.27} / 211.17_{\pm46.11}\)
        & \(1 / 3.71_{\pm0.76}\)
        & \(4.36_{\pm1.05} / 7.59_{\pm1.06}\)
        & \(2.06\times\) \\
        & {Qwen3.5-9B}
        & \(767.76_{\pm90.41} / 268.95_{\pm108.36}\)
        & \(1 / 4.78_{\pm2.47}\)
        & \(7.68_{\pm0.90} / 10.56_{\pm0.44}\)
        & \(2.85\times\) \\
        & {Gemma4-12B-it}
        & \(930.27_{\pm168.82} / 401.23_{\pm108.00}\)
        & \(1 / 3.76_{\pm1.02}\)
        & \(9.30_{\pm1.69} / 14.13_{\pm1.84}\)
        & \(2.32\times\) \\
        \midrule
        \multirow{5}{*}{\rotatebox[origin=c]{90}{\textbf{MMLU}}}
        & {Qwen3-4B}
        & \(390.39_{\pm105.60} / 192.26_{\pm42.67}\)
        & \(1 / 4.01_{\pm0.73}\)
        & \(3.90_{\pm1.06} / 7.44_{\pm0.34}\)
        & \(2.03\times\) \\
        & {Qwen3-8B}
        & \(612.03_{\pm184.09} / 278.39_{\pm93.41}\)
        & \(1 / 3.90_{\pm1.15}\)
        & \(6.12_{\pm1.84} / 9.88_{\pm0.42}\)
        & \(2.20\times\) \\
        & {Qwen3.5-4B}
        & \(395.79_{\pm86.46} / 240.24_{\pm86.12}\)
        & \(1 / 3.75_{\pm0.83}\)
        & \(3.96_{\pm0.86} / 8.39_{\pm0.83}\)
        & \(1.65\times\) \\
        & {Qwen3.5-9B}
        & \(841.38_{\pm140.68} / 352.80_{\pm95.07}\)
        & \(1 / 3.49_{\pm0.87}\)
        & \(8.41_{\pm1.41} / 11.54_{\pm0.40}\)
        & \(2.38\times\) \\
        & {Gemma4-12B-it}
        & \(887.63_{\pm220.26} / 376.96_{\pm167.32}\)
        & \(1 / 4.50_{\pm2.02}\)
        & \(8.88_{\pm2.20} / 14.11_{\pm1.67}\)
        & \(2.35\times\) \\
        \midrule
        \multirow{5}{*}{\rotatebox[origin=c]{90}{\textbf{LCB V6}}}
        & {Qwen3-4B}
        & \(250.62_{\pm56.09} / 151.82_{\pm26.30}\)
        & \(1 / 4.19_{\pm0.62}\)
        & \(2.51_{\pm0.56} / 6.20_{\pm0.30}\)
        & \(1.65\times\) \\
        & {Qwen3-8B}
        & \(340.08_{\pm82.22} / 220.38_{\pm65.17}\)
        & \(1 / 3.92_{\pm0.90}\)
        & \(3.40_{\pm0.82} / 8.15_{\pm0.73}\)
        & \(1.54\times\) \\
        & {Qwen3.5-4B}
        & \(337.14_{\pm37.91} / 198.35_{\pm48.78}\)
        & \(1 / 4.31_{\pm0.59}\)
        & \(3.37_{\pm0.38} / 8.36_{\pm1.40}\)
        & \(1.70\times\) \\
        & {Qwen3.5-9B}
        & \(463.80_{\pm52.32} / 242.89_{\pm35.93}\)
        & \(1 / 4.22_{\pm0.58}\)
        & \(4.64_{\pm0.52} / 10.08_{\pm0.47}\)
        & \(1.91\times\) \\
        & {Gemma4-12B-it}
        & \(844.27_{\pm204.00} / 472.94_{\pm164.09}\)
        & \(1 / 3.56_{\pm0.99}\)
        & \(8.44_{\pm2.04} / 15.34_{\pm0.77}\)
        & \(1.79\times\) \\
        \bottomrule
    \end{tabular}%
    }
\end{table}

\begin{figure}[!htbp]
    \centering
    \includegraphics[width=.92\linewidth]{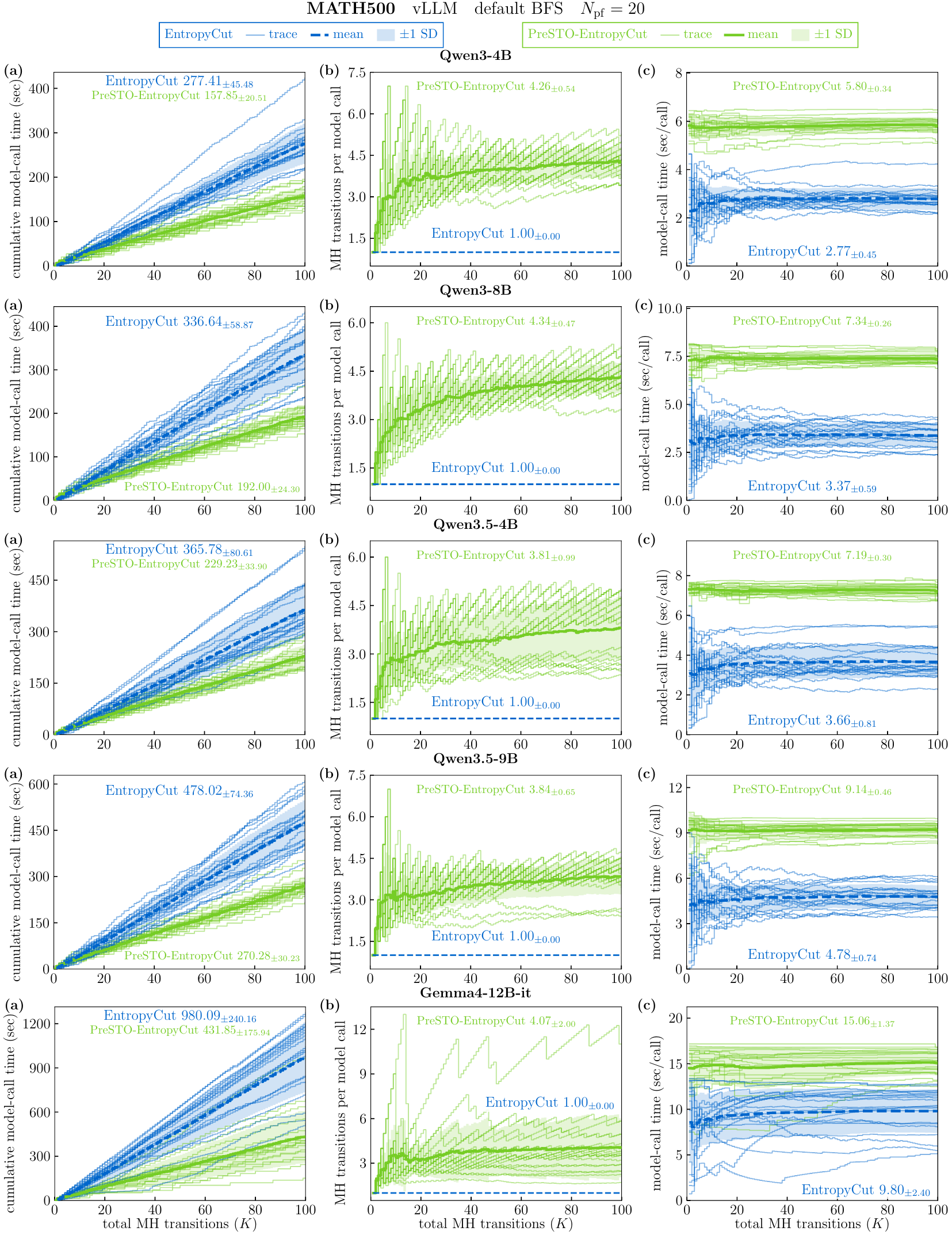}
    \caption{EntropyCut and \methodEnt\ on MATH500.}
    \label{fig:paired-entropycut-math500}
\end{figure}

\begin{table}[!p]
    \caption{MultiTryMH execution metrics across base LLMs.}
    \label{tab:multitry-mean-metrics}

    \resizebox{\linewidth}{!}{%
    \begin{tabular}{llcccc}
        \toprule
        Dataset & Base LLM
        & \makecell{Total Time (s) \(\downarrow\)}
        & \makecell{Transitions\\per call \(\uparrow\)}
        & \makecell{Time/call (s) \(\downarrow\)}
        & Speedup \\
        \midrule
        \multirow{5}{*}{\rotatebox[origin=c]{90}{\textbf{MATH500}}}
        & {Qwen3-4B}
        & \(741.55_{\pm35.32} / 322.90_{\pm17.49}\)
        & \(1 / 2.02_{\pm0.11}\)
        & \(7.42_{\pm0.35} / 6.49_{\pm0.17}\)
        & \(2.30\times\) \\
        & {Qwen3-8B}
        & \(865.84_{\pm42.90} / 417.08_{\pm21.21}\)
        & \(1 / 2.04_{\pm0.10}\)
        & \(8.66_{\pm0.43} / 8.50_{\pm0.18}\)
        & \(2.08\times\) \\
        & {Qwen3.5-4B}
        & \(2174.72_{\pm142.35} / 375.23_{\pm14.43}\)
        & \(1 / 2.05_{\pm0.08}\)
        & \(21.75_{\pm1.42} / 7.68_{\pm0.16}\)
        & \(5.80\times\) \\
        & {Qwen3.5-9B}
        & \(3246.66_{\pm262.85} / 463.26_{\pm25.28}\)
        & \(1 / 2.05_{\pm0.11}\)
        & \(32.47_{\pm2.63} / 9.48_{\pm0.15}\)
        & \(7.01\times\) \\
        & {Gemma4-12B-it}
        & \(1558.70_{\pm79.33} / 686.49_{\pm45.49}\)
        & \(1 / 1.86_{\pm0.13}\)
        & \(15.59_{\pm0.79} / 12.75_{\pm0.20}\)
        & \(2.27\times\) \\
        \midrule
        \multirow{5}{*}{\rotatebox[origin=c]{90}{\textbf{AIME 24\&25}}}
        & {Qwen3-4B}
        & \(633.31_{\pm31.36} / 313.16_{\pm17.48}\)
        & \(1 / 2.03_{\pm0.11}\)
        & \(6.33_{\pm0.31} / 6.33_{\pm0.18}\)
        & \(2.02\times\) \\
        & {Qwen3-8B}
        & \(867.08_{\pm41.03} / 429.22_{\pm17.70}\)
        & \(1 / 1.99_{\pm0.06}\)
        & \(8.67_{\pm0.41} / 8.52_{\pm0.20}\)
        & \(2.02\times\) \\
        & {Qwen3.5-4B}
        & \(2337.06_{\pm148.90} / 381.74_{\pm16.29}\)
        & \(1 / 2.04_{\pm0.09}\)
        & \(23.37_{\pm1.49} / 7.79_{\pm0.19}\)
        & \(6.12\times\) \\
        & {Qwen3.5-9B}
        & \(3522.15_{\pm239.42} / 510.54_{\pm27.05}\)
        & \(1 / 2.03_{\pm0.11}\)
        & \(35.22_{\pm2.39} / 10.35_{\pm0.24}\)
        & \(6.90\times\) \\
        & {Gemma4-12B-it}
        & \(1705.50_{\pm87.72} / 672.72_{\pm34.82}\)
        & \(1 / 2.05_{\pm0.11}\)
        & \(17.06_{\pm0.88} / 13.79_{\pm0.28}\)
        & \(2.54\times\) \\
        \midrule
        \multirow{5}{*}{\rotatebox[origin=c]{90}{\textbf{MBPP}}}
        & {Qwen3-4B}
        & \(747.32_{\pm38.12} / 317.69_{\pm16.22}\)
        & \(1 / 2.00_{\pm0.10}\)
        & \(7.47_{\pm0.38} / 6.35_{\pm0.16}\)
        & \(2.35\times\) \\
        & {Qwen3-8B}
        & \(857.30_{\pm42.10} / 434.58_{\pm29.62}\)
        & \(1 / 1.96_{\pm0.12}\)
        & \(8.57_{\pm0.42} / 8.50_{\pm0.23}\)
        & \(1.97\times\) \\
        & {Qwen3.5-4B}
        & \(2083.62_{\pm128.92} / 389.74_{\pm22.83}\)
        & \(1 / 1.97_{\pm0.13}\)
        & \(20.84_{\pm1.29} / 7.66_{\pm0.15}\)
        & \(5.35\times\) \\
        & {Qwen3.5-9B}
        & \(3088.45_{\pm181.21} / 520.56_{\pm38.97}\)
        & \(1 / 1.83_{\pm0.13}\)
        & \(30.88_{\pm1.81} / 9.46_{\pm0.16}\)
        & \(5.93\times\) \\
        & {Gemma4-12B-it}
        & \(1485.77_{\pm74.38} / 682.29_{\pm31.67}\)
        & \(1 / 1.82_{\pm0.09}\)
        & \(14.86_{\pm0.74} / 12.41_{\pm0.26}\)
        & \(2.18\times\) \\
        \midrule
        \multirow{5}{*}{\rotatebox[origin=c]{90}{\textbf{GPQA}}}
        & {Qwen3-4B}
        & \(760.17_{\pm39.39} / 324.25_{\pm18.52}\)
        & \(1 / 1.97_{\pm0.13}\)
        & \(7.60_{\pm0.39} / 6.38_{\pm0.18}\)
        & \(2.34\times\) \\
        & {Qwen3-8B}
        & \(873.76_{\pm43.90} / 427.11_{\pm25.97}\)
        & \(1 / 2.01_{\pm0.14}\)
        & \(8.74_{\pm0.44} / 8.54_{\pm0.19}\)
        & \(2.05\times\) \\
        & {Qwen3.5-4B}
        & \(2135.71_{\pm126.60} / 378.95_{\pm16.08}\)
        & \(1 / 2.03_{\pm0.10}\)
        & \(21.36_{\pm1.27} / 7.69_{\pm0.14}\)
        & \(5.64\times\) \\
        & {Qwen3.5-9B}
        & \(3197.39_{\pm221.97} / 467.95_{\pm19.99}\)
        & \(1 / 2.04_{\pm0.09}\)
        & \(31.97_{\pm2.22} / 9.51_{\pm0.17}\)
        & \(6.83\times\) \\
        & {Gemma4-12B-it}
        & \(1444.83_{\pm69.75} / 678.05_{\pm50.57}\)
        & \(1 / 1.87_{\pm0.15}\)
        & \(14.45_{\pm0.70} / 12.61_{\pm0.27}\)
        & \(2.13\times\) \\
        \midrule
        \multirow{5}{*}{\rotatebox[origin=c]{90}{\textbf{HumanEval}}}
        & {Qwen3-4B}
        & \(753.01_{\pm39.06} / 341.15_{\pm23.26}\)
        & \(1 / 1.94_{\pm0.14}\)
        & \(7.53_{\pm0.39} / 6.59_{\pm0.18}\)
        & \(2.21\times\) \\
        & {Qwen3-8B}
        & \(867.99_{\pm45.68} / 455.50_{\pm29.38}\)
        & \(1 / 1.89_{\pm0.14}\)
        & \(8.68_{\pm0.46} / 8.56_{\pm0.20}\)
        & \(1.91\times\) \\
        & {Qwen3.5-4B}
        & \(2179.86_{\pm113.33} / 390.14_{\pm23.08}\)
        & \(1 / 1.97_{\pm0.13}\)
        & \(21.80_{\pm1.13} / 7.66_{\pm0.19}\)
        & \(5.59\times\) \\
        & {Qwen3.5-9B}
        & \(3248.00_{\pm177.40} / 525.50_{\pm26.42}\)
        & \(1 / 1.78_{\pm0.09}\)
        & \(32.48_{\pm1.77} / 9.32_{\pm0.18}\)
        & \(6.18\times\) \\
        & {Gemma4-12B-it}
        & \(1473.37_{\pm73.60} / 680.25_{\pm59.15}\)
        & \(1 / 1.89_{\pm0.11}\)
        & \(14.73_{\pm0.74} / 12.81_{\pm1.17}\)
        & \(2.17\times\) \\
        \midrule
        \multirow{5}{*}{\rotatebox[origin=c]{90}{\textbf{MMLU}}}
        & {Qwen3-4B}
        & \(736.92_{\pm35.99} / 345.12_{\pm28.39}\)
        & \(1 / 1.85_{\pm0.17}\)
        & \(7.37_{\pm0.36} / 6.35_{\pm0.22}\)
        & \(2.14\times\) \\
        & {Qwen3-8B}
        & \(858.38_{\pm44.28} / 446.25_{\pm28.97}\)
        & \(1 / 1.90_{\pm0.15}\)
        & \(8.58_{\pm0.44} / 8.44_{\pm0.19}\)
        & \(1.92\times\) \\
        & {Qwen3.5-4B}
        & \(2445.00_{\pm125.15} / 384.31_{\pm22.86}\)
        & \(1 / 2.01_{\pm0.12}\)
        & \(24.45_{\pm1.25} / 7.70_{\pm0.15}\)
        & \(6.36\times\) \\
        & {Qwen3.5-9B}
        & \(3764.61_{\pm197.19} / 544.38_{\pm54.16}\)
        & \(1 / 1.82_{\pm0.21}\)
        & \(37.65_{\pm1.97} / 9.81_{\pm0.21}\)
        & \(6.92\times\) \\
        & {Gemma4-12B-it}
        & \(1673.76_{\pm80.51} / 717.93_{\pm43.53}\)
        & \(1 / 1.92_{\pm0.12}\)
        & \(16.74_{\pm0.81} / 13.74_{\pm0.21}\)
        & \(2.33\times\) \\
        \midrule
        \multirow{5}{*}{\rotatebox[origin=c]{90}{\textbf{LCB V6}}}
        & {Qwen3-4B}
        & \(781.77_{\pm48.77} / 337.69_{\pm8.76}\)
        & \(1 / 1.96_{\pm0.06}\)
        & \(7.82_{\pm0.49} / 6.61_{\pm0.16}\)
        & \(2.32\times\) \\
        & {Qwen3-8B}
        & \(908.99_{\pm62.75} / 436.38_{\pm22.48}\)
        & \(1 / 2.00_{\pm0.13}\)
        & \(9.09_{\pm0.63} / 8.69_{\pm0.25}\)
        & \(2.08\times\) \\
        & {Qwen3.5-4B}
        & \(2197.08_{\pm189.19} / 381.58_{\pm12.13}\)
        & \(1 / 2.04_{\pm0.09}\)
        & \(21.97_{\pm1.89} / 7.79_{\pm0.22}\)
        & \(5.76\times\) \\
        & {Qwen3.5-9B}
        & \(3260.87_{\pm315.74} / 474.97_{\pm21.88}\)
        & \(1 / 2.03_{\pm0.09}\)
        & \(32.61_{\pm3.16} / 9.62_{\pm0.22}\)
        & \(6.87\times\) \\
        & {Gemma4-12B-it}
        & \(1494.62_{\pm85.17} / 673.93_{\pm96.53}\)
        & \(1 / 1.93_{\pm0.19}\)
        & \(14.95_{\pm0.85} / 12.88_{\pm0.87}\)
        & \(2.22\times\) \\
        \bottomrule
    \end{tabular}%
    }
\end{table}

\FloatBarrier
\subsection{Case Study 2: Effect of prefetch budget}
\label{apx:prefetch-budget-traces}

\textbf{Setting.}
On LCB V6 with Qwen3.5-9B under the default BFS traversal at \(K=100\), we compare \method at \(N_{\mathrm{pf}}\in\{8,10,12,16,20\}\) with PowerMH.
The \method runs are the BFS runs of Case Study~5 and the PowerMH run is the vLLM run of Case Study~6; all share one host and were recorded within three days (Appendix~\ref{apx:lcb-provenance}).
These runs did not record KV-pool use or prefix-cache statistics, and no LCB V6 sweep of \methodEnt is available.

\textbf{Results.}
Figure~\ref{fig:prefetch-budget-traces} shows the traces of every budget over the \(K=100\) realized MH transitions, and confirms the findings of Section~\ref{subsec:case-study}.
\begin{itemize}[leftmargin=*, nosep]
    \item \textit{Transitions per call grow with the budget.} They rise from \(3.59\) at \(N_{\mathrm{pf}}=8\) to \(4.91\) at the default \(N_{\mathrm{pf}}=20\) (Table~\ref{tab:prefetch-budget-impact}).
    \item \textit{Each call slows with the budget, so the speedup grows more slowly.} A \method call takes \(16.2\)--\(18.4\) seconds, versus \(5.4\) seconds for a PowerMH step, so the speedup over PowerMH rises from \(1.20\times\) at \(N_{\mathrm{pf}}=8\) to \(1.44\times\) at \(N_{\mathrm{pf}}=20\), the largest budget tested.
\end{itemize}

\begin{figure}[!htbp]
    \centering
    \includegraphics[width=\linewidth,trim=0 0 0 14bp,clip]{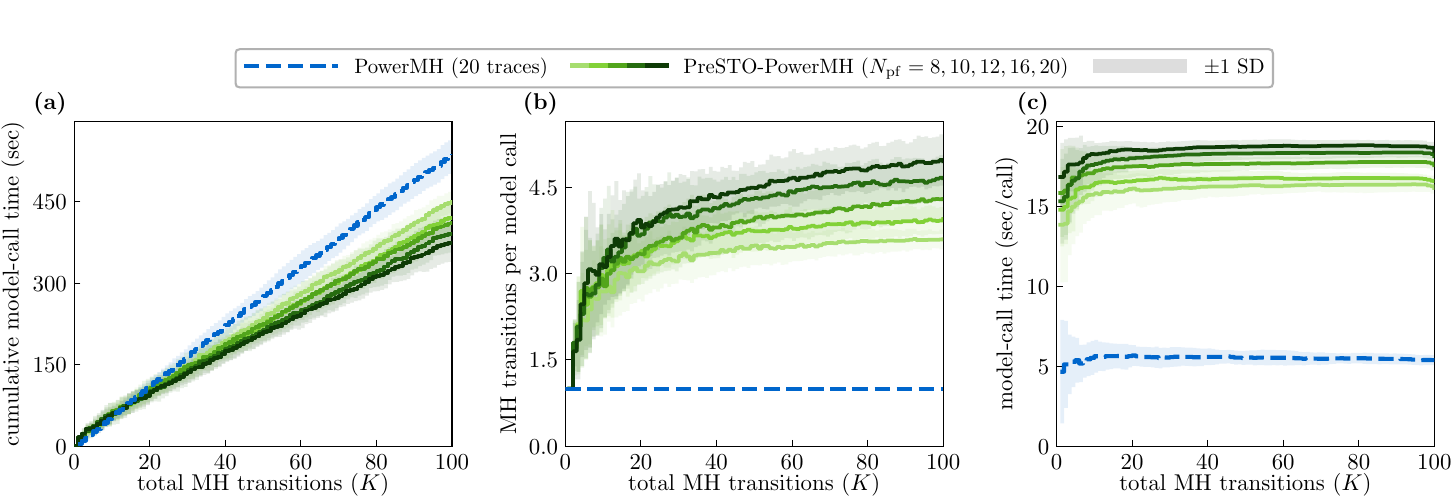}
    \caption{\textbf{Per-transition traces across prefetch budgets on LCB V6 with Qwen3.5-9B.}
    Solid curves show the \framework mean for each budget, from light (small \(N_{\mathrm{pf}}\)) to dark (large \(N_{\mathrm{pf}}\)).
    Dashed curves show PowerMH, and shaded bands show \(\pm 1\) SD over traces.}
    \label{fig:prefetch-budget-traces}
\end{figure}

\subsection{Case Study 3: Likelihood and confidence}
\label{apx:passes}

\textbf{Setting.}
Following \citet{karan2026reasoning}, we describe each terminal trace by its token-normalized log-likelihood (Eq.~\ref{eq:normalized-loglik}) and confidence (Eq.~\ref{eq:confidence}) under \(p_0\).
We compare each sampler with its \framework variant at \(N_{\mathrm{pf}}=20\) on matched dataset--base-LLM pairs: PowerMH against \method, and EntropyCut against \methodEnt.

For each pair and statistic, we apply the paired TOST of Appendix~\ref{apx:exp-set} to the per-prompt difference, \framework minus baseline, with margin \(\delta=0.2\) baseline standard deviations across prompts. For each sampler, we also pool the standardized differences (each difference divided by that standard deviation) over its pairs with a fixed-effect average and test the pooled difference against the same margin.

We compare both statistics only within each base LLM, because changing \(p_0\) changes the scoring distribution itself.

\begin{figure}[!htbp]
    \centering
    \includegraphics[width=\linewidth]{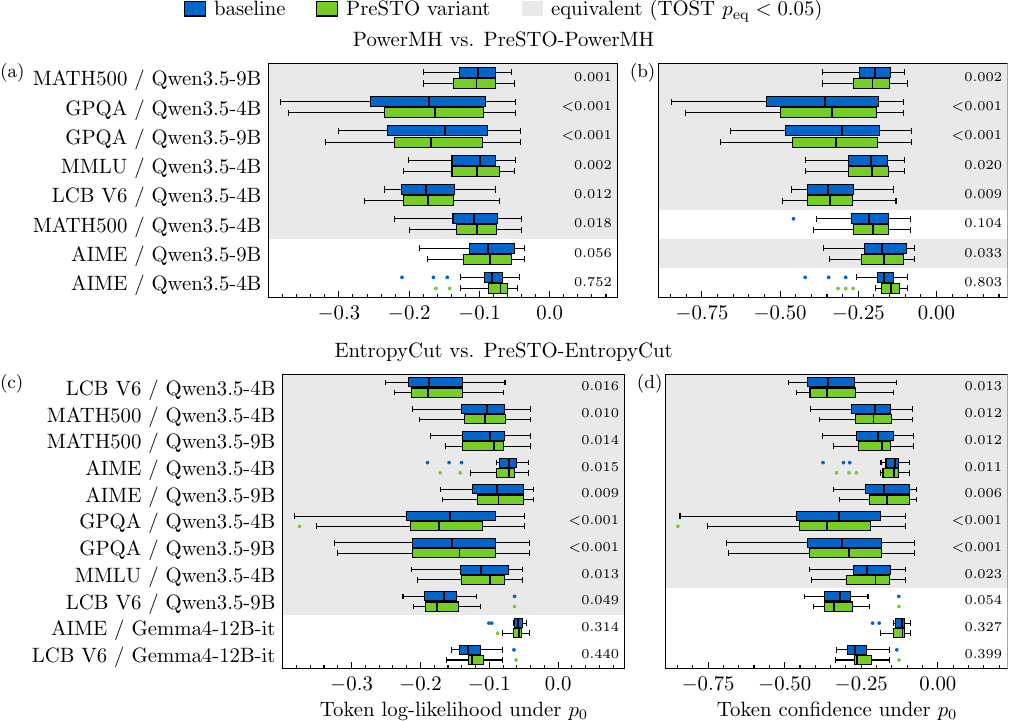}
    \caption{\textbf{Likelihood and confidence distributions with and without \framework.}
    Boxes show the per-prompt token-normalized log-likelihood (a, c) and confidence (b, d) of terminal traces under \(p_0\) for the eight PowerMH pairs (top) and the eleven EntropyCut pairs (bottom), each against its \framework variant.
    Shaded rows are equivalent under the paired TOST (\(p_{\mathrm{eq}}<0.05\)).}
    \label{fig:likelihood-confidence-distributions}
\end{figure}

\textbf{Results.}
Figure~\ref{fig:likelihood-confidence-distributions} shows all retained pairs and confirms the findings of Section~\ref{subsec:case-study}.
\begin{itemize}[leftmargin=*, nosep]
    \item \textit{Equivalence holds for most tests.} 29 of the 38 tests establish equivalence: 12 of 16 for PowerMH and 17 of 22 for EntropyCut (shaded rows). For PowerMH with Qwen3.5-4B, these include 7 of the 10 tests of Figure~\ref{fig:likelihood-confidence}.
    \item \textit{Means nearly coincide.} The median absolute gap between the \framework and baseline means is \(0.002\) in log-likelihood for both samplers, and \(0.007\) for PowerMH and \(0.004\) for EntropyCut in confidence, against across-prompt standard deviations of about \(0.05\) and \(0.10\).
\end{itemize}

\subsection{Case Study 4: Distribution of \texorpdfstring{\(\epsilon\)}{epsilon}-certainty}
\label{subsec:edge-certainty}

\textbf{Setting.}
This case study extends the certainty analysis of Section~\ref{subsec:efficiency} to more base LLMs, more datasets, and the EntropyCut proposal rule.
With \(\epsilon=0.01\), an MH decision with acceptance probability \(A\) is an \(\epsilon\)-certain reject if \(A\leq\epsilon\), an \(\epsilon\)-certain accept if \(A\geq1-\epsilon\), and \(\epsilon\)-uncertain otherwise.

The histograms count both outgoing edges of every scored decision, with transition probabilities \(A\) and \(1-A\), and call an edge \(\epsilon\)-certain when its transition probability is at most \(\epsilon\) or at least \(1-\epsilon\); the two outer categories therefore have equal mass, and their combined share equals the fraction of \(\epsilon\)-certain decisions.

\begin{figure}[!htbp]
    \centering
    \includegraphics[width=\linewidth]{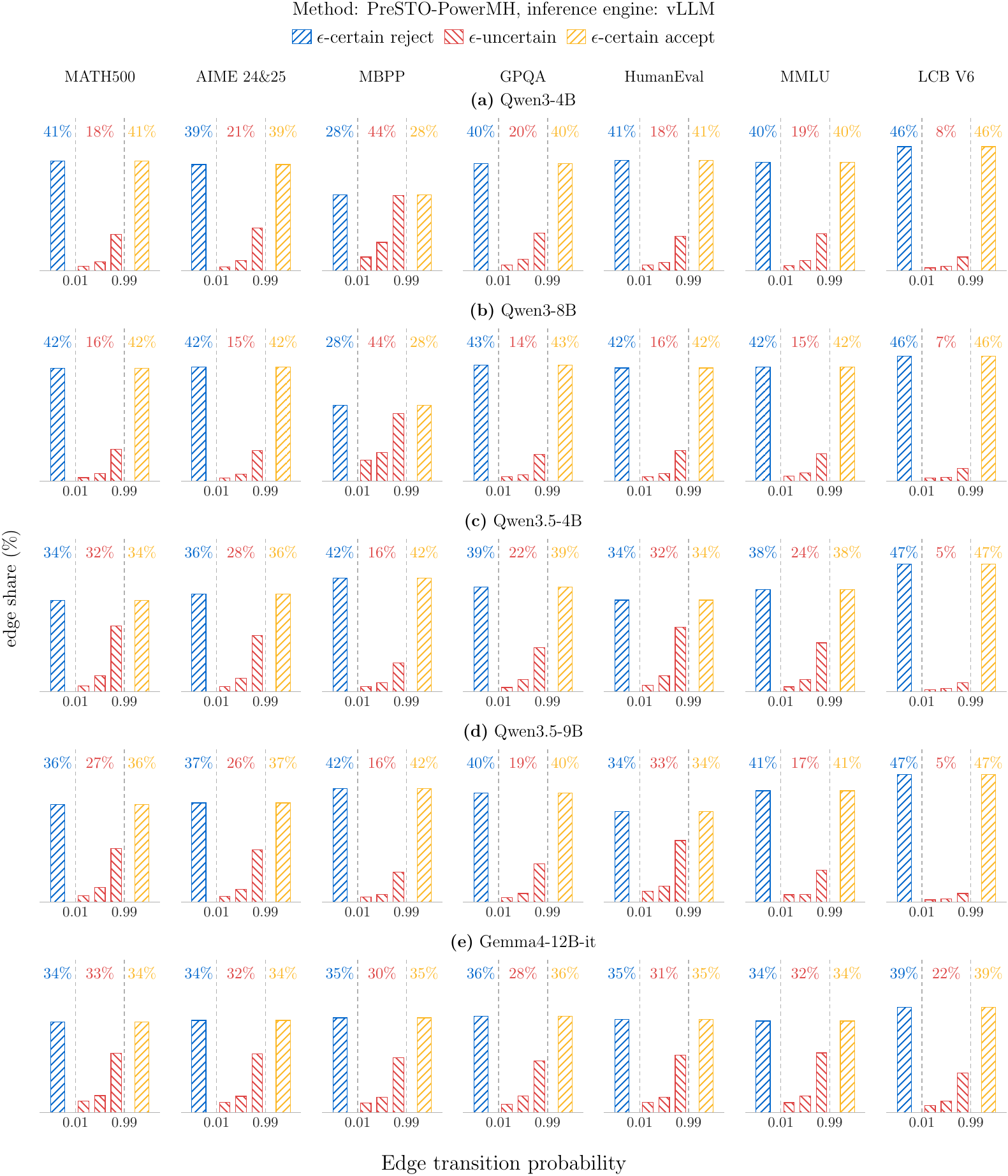}
    \caption{Edge certainty under \method across models and datasets.}
    \label{fig:epsilon-certainty-N20}
\end{figure}
\begin{figure}[!htbp]
  \centering
  \includegraphics[width=\linewidth]{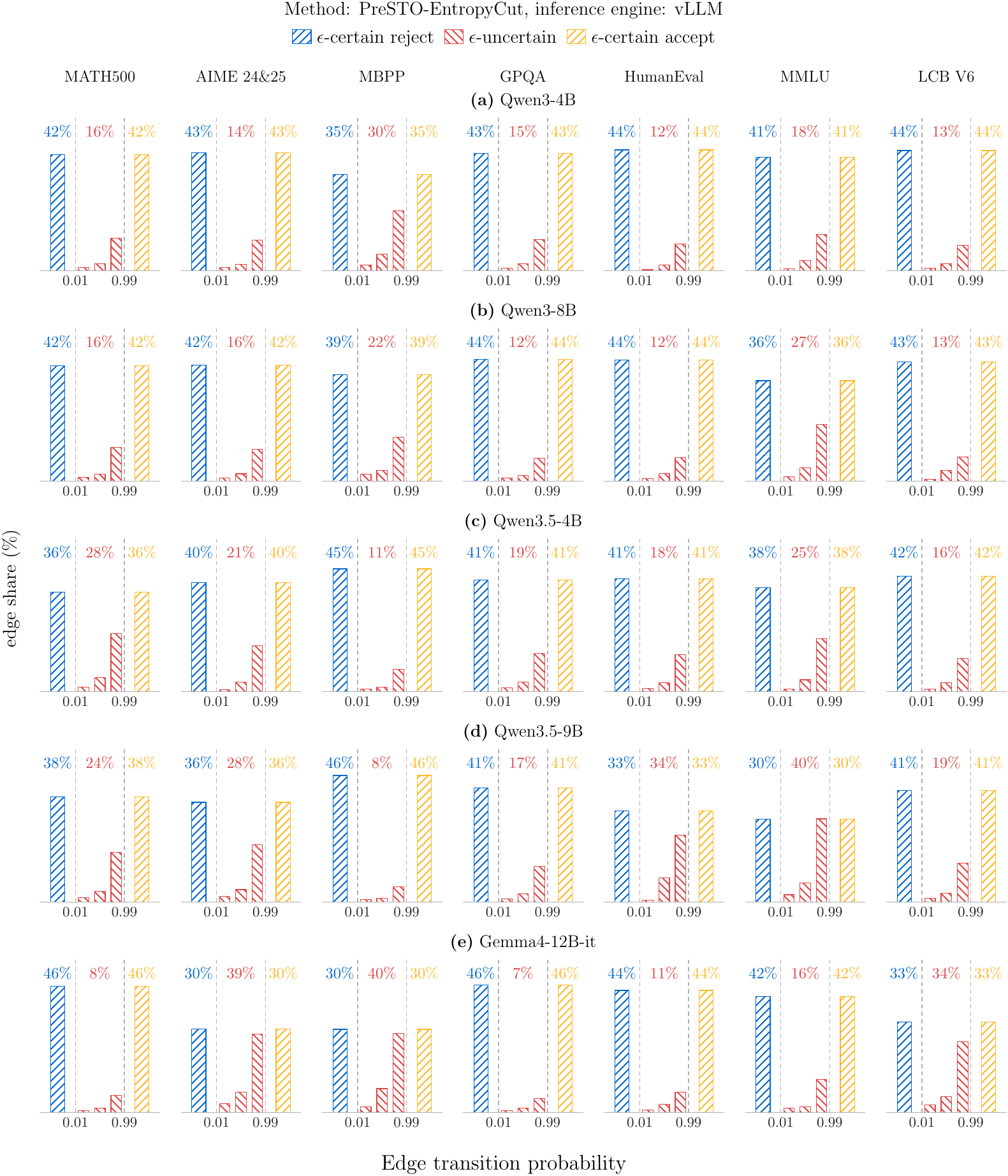}
  \caption{Edge certainty under \methodEnt across models and datasets.}
  \label{fig:entropycut-certainty}
\end{figure}

\textbf{Results.}
Figures~\ref{fig:epsilon-certainty-N20} and~\ref{fig:entropycut-certainty} confirm the findings of Section~\ref{subsec:case-study}. We find that {Most decisions are \(\epsilon\)-certain.} Under \method, \(56\%\)--\(95\%\) of edges are \(\epsilon\)-certain for the Qwen models and \(67\%\)--\(78\%\) for Gemma4-12B-it; under \methodEnt, \(60\%\)--\(92\%\) and \(60\%\)--\(93\%\).

\FloatBarrier
\subsection{Case Study 5: Effect of traversal rules}
\label{apx:traversal-rules}

We compare the predictive traversal rule (Appendix~\ref{subsec:predictive-dp}) with the default BFS traversal; Appendix~\ref{apx:acceptance-predictors} evaluates the classifier that supplies its class probabilities.

\textbf{Setting.}
We run \method with the default BFS traversal and with the predictive traversal rule on LCB V6 with Qwen3.5-9B, using \(N_{\mathrm{pf}}\in\{8,10,12,16,20\}\).
The classifier of the predictive rule is the cut-fraction logistic regression of Appendix~\ref{apx:acceptance-predictors}.

\begin{table}[!htbp]
    \centering
    \caption{\textbf{Predictive traversal rule against the default BFS traversal.} Speedup is the total model-call time of BFS divided by that of the predictive rule.}
    \label{tab:traversal-rules-dp-vs-bfs}
    \resizebox{\linewidth}{!}{%
    \begin{tabular}{@{}c cc cc cc cc c@{}}
        \toprule
        & \multicolumn{2}{c}{Model calls} & \multicolumn{2}{c}{Transitions per call} & \multicolumn{2}{c}{Time per call (s)} & \multicolumn{2}{c}{Model-call time (s)} & \\
        \cmidrule(lr){2-3}\cmidrule(lr){4-5}\cmidrule(lr){6-7}\cmidrule(lr){8-9}
        \(N_{\mathrm{pf}}\) & BFS & Predictive & BFS & Predictive & BFS & Predictive & BFS & Predictive & Speedup \\
        \midrule
        8 & 27.85 & \textbf{23.40} & 3.59 & \textbf{4.27} & 16.15 & \textbf{15.35} & \(449.7_{\pm19.1}\) & \(\mathbf{359.2}_{\pm28.0}\) & \(1.25\times\) \\
        10 & 25.40 & \textbf{20.15} & 3.94 & \textbf{4.96} & 16.56 & \textbf{16.00} & \(420.7_{\pm25.5}\) & \(\mathbf{322.3}_{\pm28.5}\) & \(1.31\times\) \\
        12 & 23.40 & \textbf{18.90} & 4.27 & \textbf{5.29} & 17.58 & \textbf{17.13} & \(411.4_{\pm38.7}\) & \(\mathbf{323.8}_{\pm39.5}\) & \(1.27\times\) \\
        16 & 21.60 & \textbf{17.10} & 4.63 & \textbf{5.85} & 18.15 & \textbf{17.34} & \(392.1_{\pm33.8}\) & \(\mathbf{296.6}_{\pm45.8}\) & \(1.32\times\) \\
        20 & 20.35 & \textbf{15.30} & 4.91 & \textbf{6.54} & 18.43 & \textbf{17.95} & \(375.1_{\pm35.1}\) & \(\mathbf{274.7}_{\pm40.1}\) & \(1.37\times\) \\
        \bottomrule
    \end{tabular}%
    }
\end{table}

\textbf{Results.}
\begin{itemize}[leftmargin=*, nosep]
    \item \textit{Faster at every budget.} The predictive rule is \(1.25\times\)--\(1.37\times\) faster than BFS in model-call time (Figure~\ref{fig:traversal-rules} and Table~\ref{tab:traversal-rules-dp-vs-bfs}) and faster on 98 of the 100 prompt--budget pairs.
    \item \textit{The gain comes from fewer model calls.} The predictive rule completes \(19\%\)--\(33\%\) more MH transitions per call, so it needs \(1.19\times\)--\(1.33\times\) fewer calls for the same \(K\) (Figure~\ref{fig:traversal-rules}(a)); its time per call is also \(2.6\%\)--\(4.9\%\) shorter (Figure~\ref{fig:traversal-rules}(b)).
    \item \textit{The gain is largest at \(N_{\mathrm{pf}}=20\)} (\(1.37\times\)), where the predictive rule raises the transitions per call from \(4.91\) to \(6.54\).
\end{itemize}

Figures~\ref{fig:traversal-rules}(d) and~\ref{fig:traversal-rules-first-trees} show the first prefetched subtree of prompt 0 and of prompts 1--3 under each rule at every budget, one prompt and rule per row. The predictive rule realizes more transitions in 11 of the 20 prompt–budget pairs and fewer in 8 (59 against 52 in total), so its speedup in Table~\ref{tab:traversal-rules-dp-vs-bfs} reflects an average over all trees rather than a gain on every tree.

\begin{figure}[!htbp]
    \centering
    \resizebox{\textwidth}{!}{\begin{minipage}{396.0pt}
    \makebox[24.0pt][l]{}%
    \makebox[48.1pt][c]{\footnotesize \(N_{\mathrm{pf}}=8\)}\hspace{4.0pt}%
    \makebox[65.9pt][c]{\footnotesize \(N_{\mathrm{pf}}=10\)}\hspace{4.0pt}%
    \makebox[73.9pt][c]{\footnotesize \(N_{\mathrm{pf}}=12\)}\hspace{4.0pt}%
    \makebox[83.9pt][c]{\footnotesize \(N_{\mathrm{pf}}=16\)}\hspace{4.0pt}%
    \makebox[83.9pt][c]{\footnotesize \(N_{\mathrm{pf}}=20\)}
    \par\smallskip
    \begin{minipage}[b]{20.0pt}\raggedleft\parbox[b][45.6pt][c]{\linewidth}{\raggedleft\rotatebox[origin=c]{90}{\begin{tabular}[c]{@{}c@{}}\footnotesize Prompt 1\\\footnotesize BFS\end{tabular}}}\par{\scriptsize\strut\par}\end{minipage}\hspace{4.0pt}%
    \begin{minipage}[b]{48.1pt}
        \centering
        \includegraphics[scale=0.234]{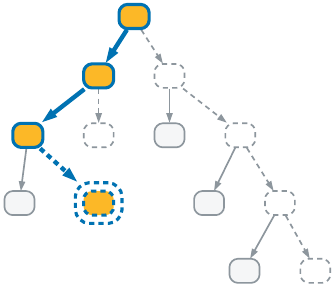}\par
        {\scriptsize MH \(0\rightarrow3\)\par}
    \end{minipage}\hspace{4.0pt}%
    \begin{minipage}[b]{65.9pt}
        \centering
        \includegraphics[scale=0.234]{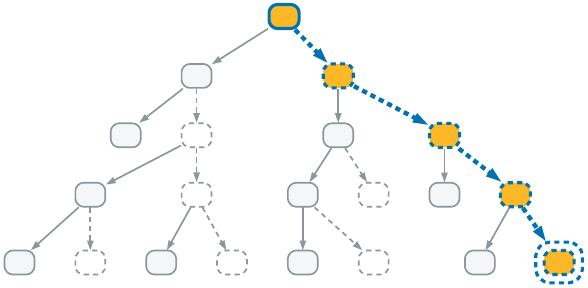}\par
        {\scriptsize MH \(0\rightarrow4\)\par}
    \end{minipage}\hspace{4.0pt}%
    \begin{minipage}[b]{73.9pt}
        \centering
        \includegraphics[scale=0.234]{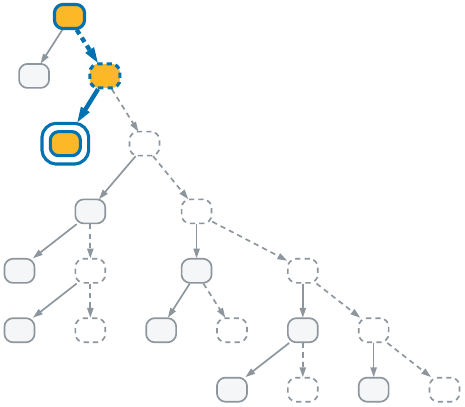}\par
        {\scriptsize MH \(0\rightarrow2\)\par}
    \end{minipage}\hspace{4.0pt}%
    \begin{minipage}[b]{83.9pt}
        \centering
        \includegraphics[scale=0.234]{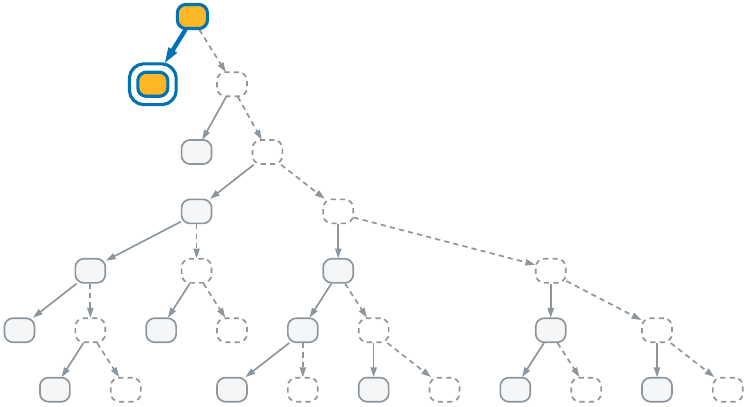}\par
        {\scriptsize MH \(0\rightarrow1\)\par}
    \end{minipage}\hspace{4.0pt}%
    \begin{minipage}[b]{83.9pt}
        \centering
        \includegraphics[scale=0.234]{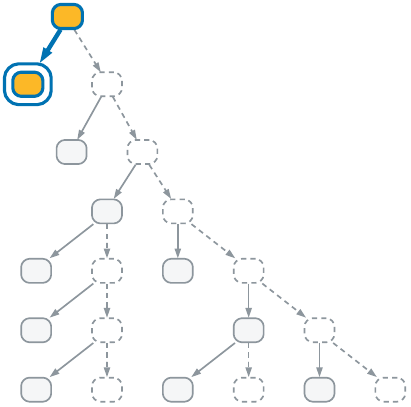}\par
        {\scriptsize MH \(0\rightarrow1\)\par}
    \end{minipage}
    \par\smallskip
    \begin{minipage}[b]{20.0pt}\raggedleft\parbox[b][72.3pt][c]{\linewidth}{\raggedleft\rotatebox[origin=c]{90}{\begin{tabular}[c]{@{}c@{}}\footnotesize Prompt 1\\\footnotesize Predictive\end{tabular}}}\par{\scriptsize\strut\par}\end{minipage}\hspace{4.0pt}%
    \begin{minipage}[b]{48.1pt}
        \centering
        \includegraphics[scale=0.234]{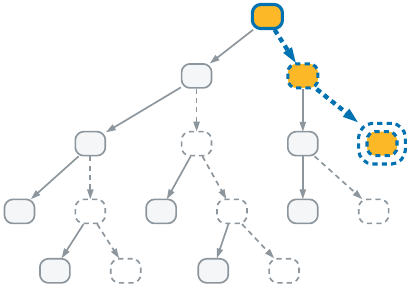}\par
        {\scriptsize MH \(0\rightarrow2\)\par}
    \end{minipage}\hspace{4.0pt}%
    \begin{minipage}[b]{65.9pt}
        \centering
        \includegraphics[scale=0.234]{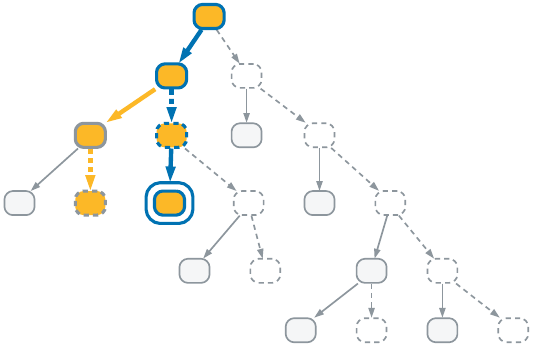}\par
        {\scriptsize MH \(0\rightarrow3\)\par}
    \end{minipage}\hspace{4.0pt}%
    \begin{minipage}[b]{73.9pt}
        \centering
        \includegraphics[scale=0.234]{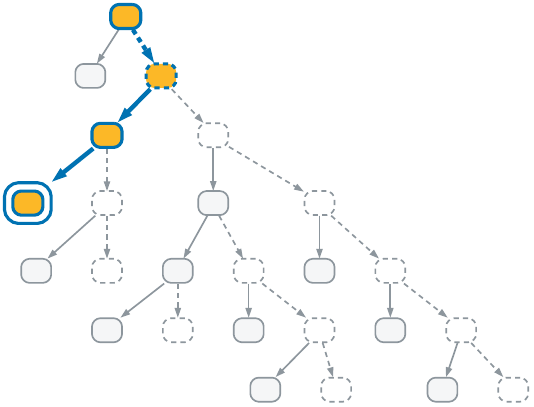}\par
        {\scriptsize MH \(0\rightarrow3\)\par}
    \end{minipage}\hspace{4.0pt}%
    \begin{minipage}[b]{83.9pt}
        \centering
        \includegraphics[scale=0.234]{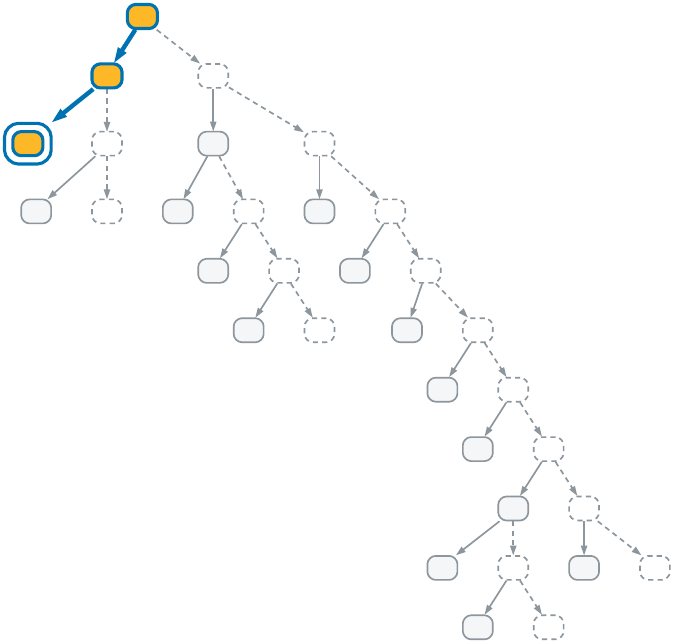}\par
        {\scriptsize MH \(0\rightarrow2\)\par}
    \end{minipage}\hspace{4.0pt}%
    \begin{minipage}[b]{83.9pt}
        \centering
        \includegraphics[scale=0.234]{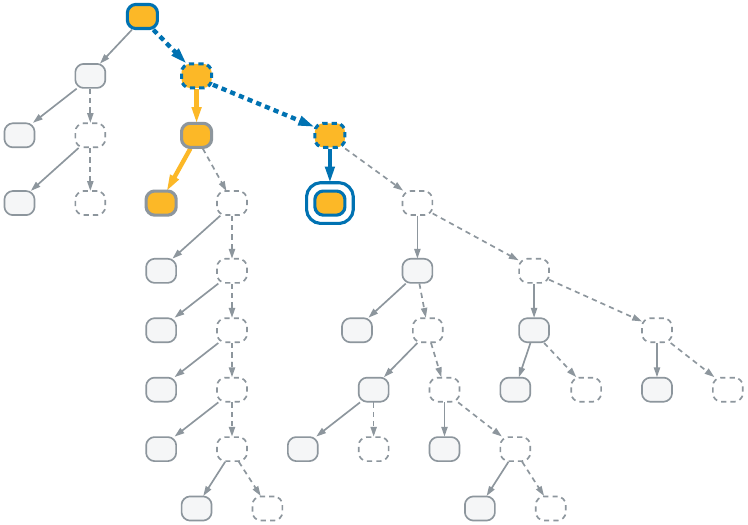}\par
        {\scriptsize MH \(0\rightarrow3\)\par}
    \end{minipage}
    \par\smallskip
    \noindent\tikz\draw[gray!60, dashed, line width=0.4pt] (0,0) -- (\linewidth,0);
    \par\smallskip
    \begin{minipage}[b]{20.0pt}\raggedleft\parbox[b][52.2pt][c]{\linewidth}{\raggedleft\rotatebox[origin=c]{90}{\begin{tabular}[c]{@{}c@{}}\footnotesize Prompt 2\\\footnotesize BFS\end{tabular}}}\par{\scriptsize\strut\par}\end{minipage}\hspace{4.0pt}%
    \begin{minipage}[b]{48.1pt}
        \centering
        \includegraphics[scale=0.234]{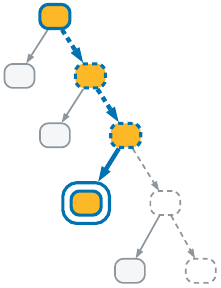}\par
        {\scriptsize MH \(0\rightarrow3\)\par}
    \end{minipage}\hspace{4.0pt}%
    \begin{minipage}[b]{65.9pt}
        \centering
        \includegraphics[scale=0.234]{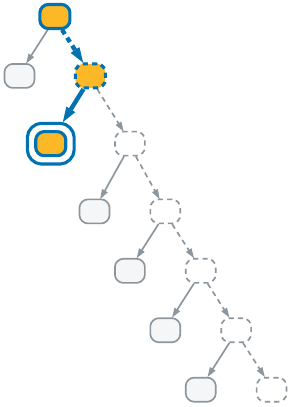}\par
        {\scriptsize MH \(0\rightarrow2\)\par}
    \end{minipage}\hspace{4.0pt}%
    \begin{minipage}[b]{73.9pt}
        \centering
        \includegraphics[scale=0.234]{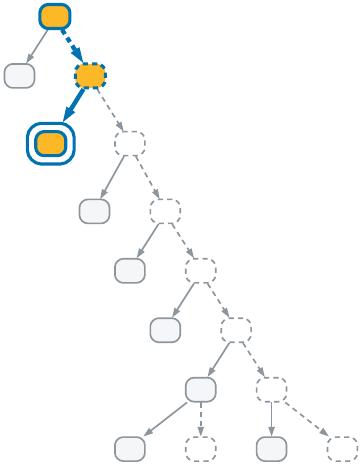}\par
        {\scriptsize MH \(0\rightarrow2\)\par}
    \end{minipage}\hspace{4.0pt}%
    \begin{minipage}[b]{83.9pt}
        \centering
        \includegraphics[scale=0.234]{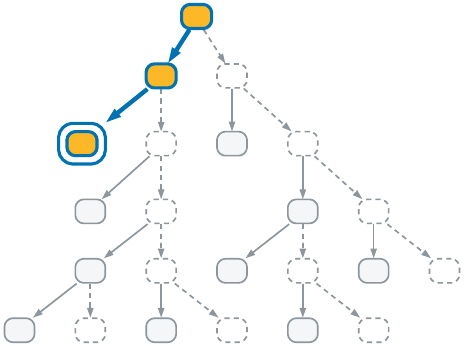}\par
        {\scriptsize MH \(0\rightarrow2\)\par}
    \end{minipage}\hspace{4.0pt}%
    \begin{minipage}[b]{83.9pt}
        \centering
        \includegraphics[scale=0.234]{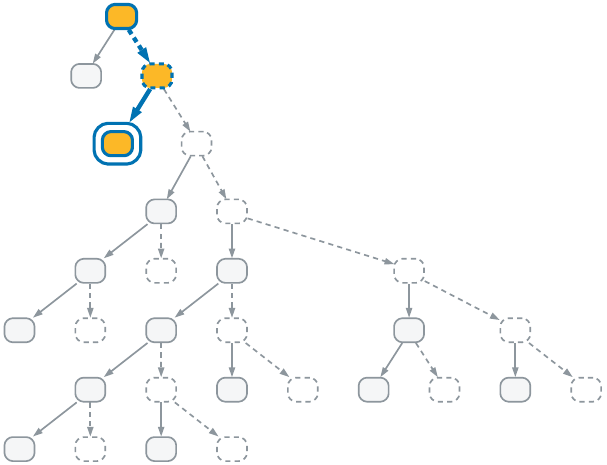}\par
        {\scriptsize MH \(0\rightarrow2\)\par}
    \end{minipage}
    \par\smallskip
    \begin{minipage}[b]{20.0pt}\raggedleft\parbox[b][85.6pt][c]{\linewidth}{\raggedleft\rotatebox[origin=c]{90}{\begin{tabular}[c]{@{}c@{}}\footnotesize Prompt 2\\\footnotesize Predictive\end{tabular}}}\par{\scriptsize\strut\par}\end{minipage}\hspace{4.0pt}%
    \begin{minipage}[b]{48.1pt}
        \centering
        \includegraphics[scale=0.234]{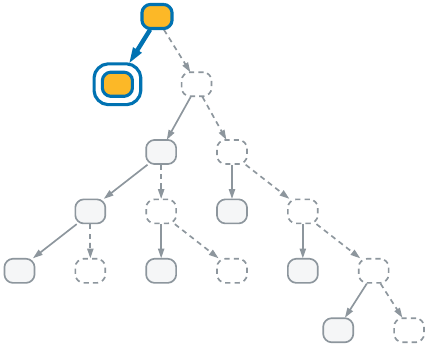}\par
        {\scriptsize MH \(0\rightarrow1\)\par}
    \end{minipage}\hspace{4.0pt}%
    \begin{minipage}[b]{65.9pt}
        \centering
        \includegraphics[scale=0.234]{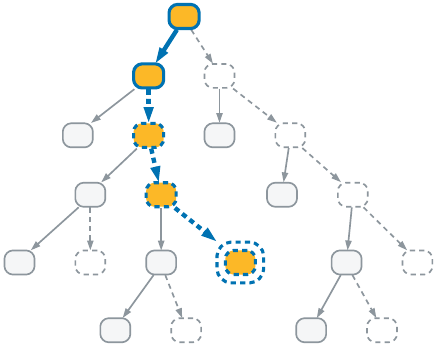}\par
        {\scriptsize MH \(0\rightarrow4\)\par}
    \end{minipage}\hspace{4.0pt}%
    \begin{minipage}[b]{73.9pt}
        \centering
        \includegraphics[scale=0.234]{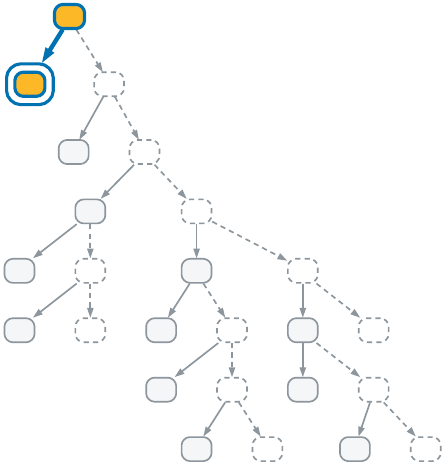}\par
        {\scriptsize MH \(0\rightarrow1\)\par}
    \end{minipage}\hspace{4.0pt}%
    \begin{minipage}[b]{83.9pt}
        \centering
        \includegraphics[scale=0.234]{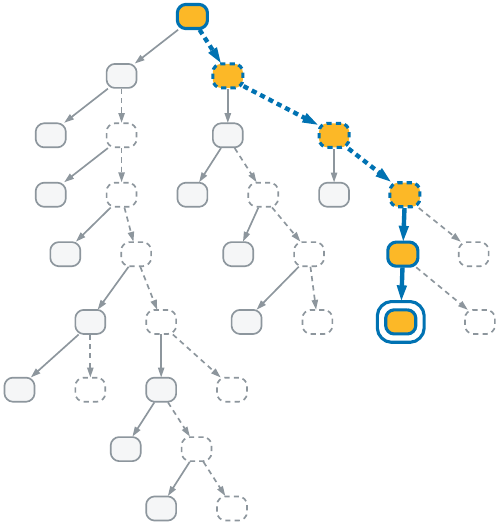}\par
        {\scriptsize MH \(0\rightarrow5\)\par}
    \end{minipage}\hspace{4.0pt}%
    \begin{minipage}[b]{83.9pt}
        \centering
        \includegraphics[scale=0.234]{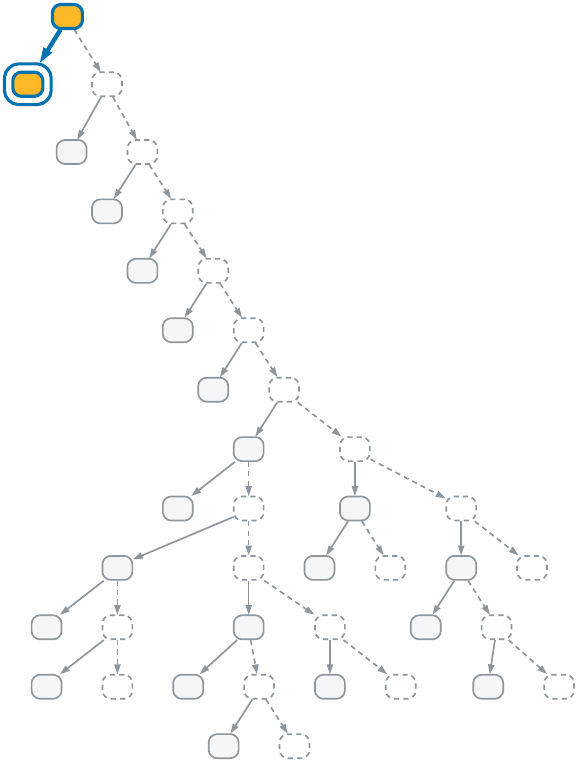}\par
        {\scriptsize MH \(0\rightarrow1\)\par}
    \end{minipage}
    \par\smallskip
    \noindent\tikz\draw[gray!60, dashed, line width=0.4pt] (0,0) -- (\linewidth,0);
    \par\smallskip
    \begin{minipage}[b]{20.0pt}\raggedleft\parbox[b][52.2pt][c]{\linewidth}{\raggedleft\rotatebox[origin=c]{90}{\begin{tabular}[c]{@{}c@{}}\footnotesize Prompt 3\\\footnotesize BFS\end{tabular}}}\par{\scriptsize\strut\par}\end{minipage}\hspace{4.0pt}%
    \begin{minipage}[b]{48.1pt}
        \centering
        \includegraphics[scale=0.234]{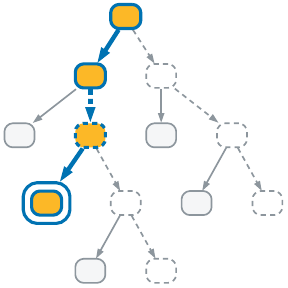}\par
        {\scriptsize MH \(0\rightarrow3\)\par}
    \end{minipage}\hspace{4.0pt}%
    \begin{minipage}[b]{65.9pt}
        \centering
        \includegraphics[scale=0.234]{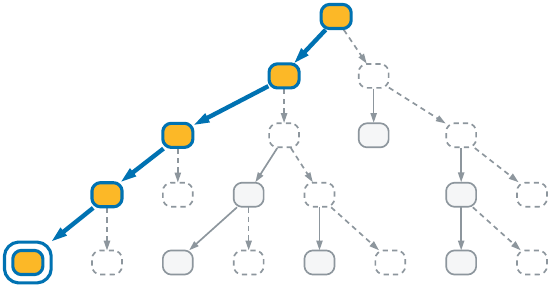}\par
        {\scriptsize MH \(0\rightarrow4\)\par}
    \end{minipage}\hspace{4.0pt}%
    \begin{minipage}[b]{73.9pt}
        \centering
        \includegraphics[scale=0.234]{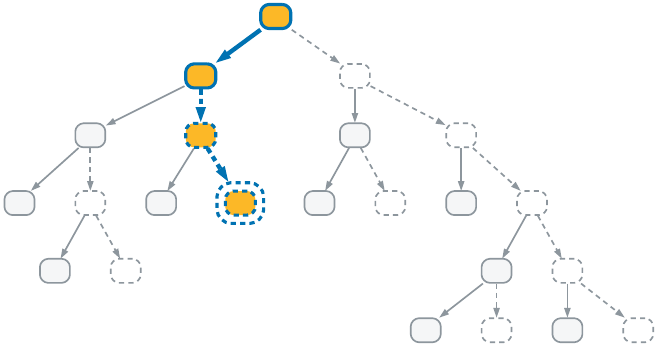}\par
        {\scriptsize MH \(0\rightarrow3\)\par}
    \end{minipage}\hspace{4.0pt}%
    \begin{minipage}[b]{83.9pt}
        \centering
        \includegraphics[scale=0.234]{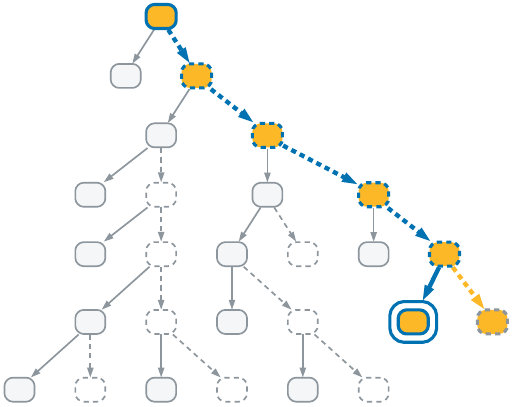}\par
        {\scriptsize MH \(0\rightarrow5\)\par}
    \end{minipage}\hspace{4.0pt}%
    \begin{minipage}[b]{83.9pt}
        \centering
        \includegraphics[scale=0.234]{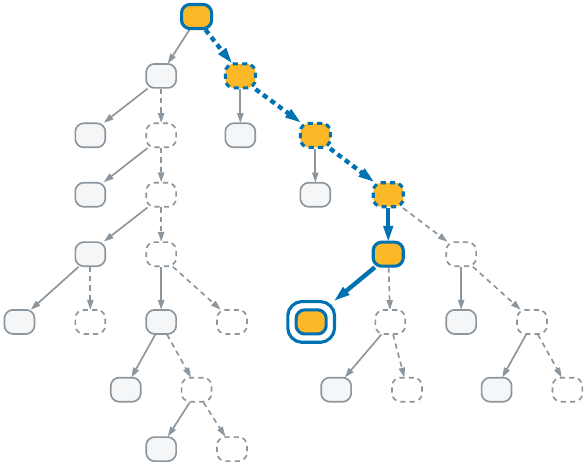}\par
        {\scriptsize MH \(0\rightarrow5\)\par}
    \end{minipage}
    \par\smallskip
    \begin{minipage}[b]{20.0pt}\raggedleft\parbox[b][85.6pt][c]{\linewidth}{\raggedleft\rotatebox[origin=c]{90}{\begin{tabular}[c]{@{}c@{}}\footnotesize Prompt 3\\\footnotesize Predictive\end{tabular}}}\par{\scriptsize\strut\par}\end{minipage}\hspace{4.0pt}%
    \begin{minipage}[b]{48.1pt}
        \centering
        \includegraphics[scale=0.234]{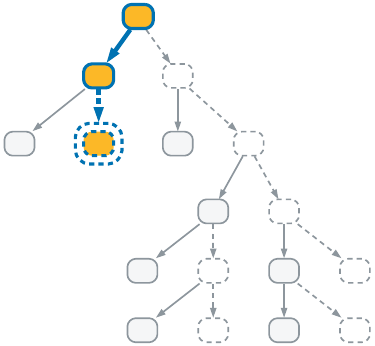}\par
        {\scriptsize MH \(0\rightarrow2\)\par}
    \end{minipage}\hspace{4.0pt}%
    \begin{minipage}[b]{65.9pt}
        \centering
        \includegraphics[scale=0.234]{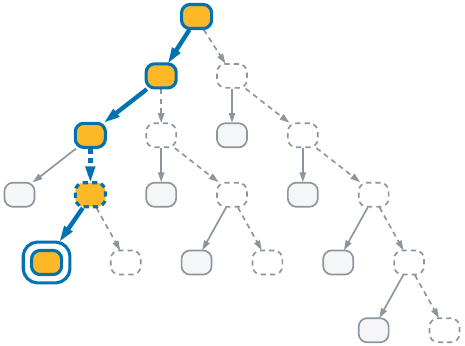}\par
        {\scriptsize MH \(0\rightarrow4\)\par}
    \end{minipage}\hspace{4.0pt}%
    \begin{minipage}[b]{73.9pt}
        \centering
        \includegraphics[scale=0.234]{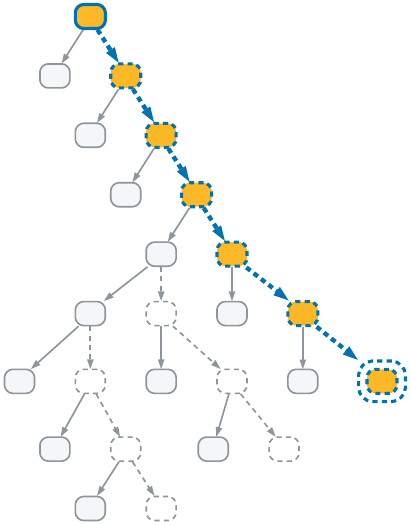}\par
        {\scriptsize MH \(0\rightarrow6\)\par}
    \end{minipage}\hspace{4.0pt}%
    \begin{minipage}[b]{83.9pt}
        \centering
        \includegraphics[scale=0.234]{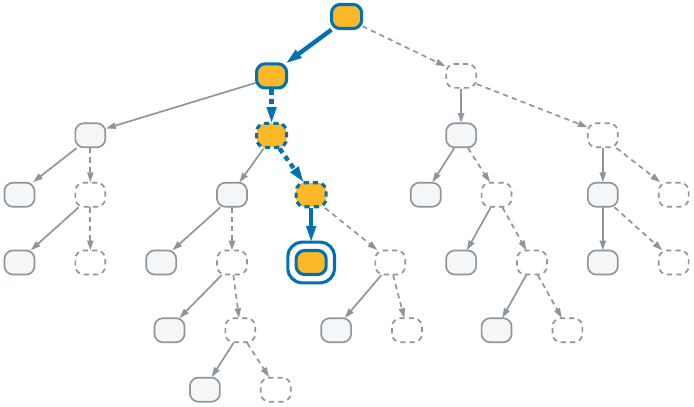}\par
        {\scriptsize MH \(0\rightarrow4\)\par}
    \end{minipage}\hspace{4.0pt}%
    \begin{minipage}[b]{83.9pt}
        \centering
        \includegraphics[scale=0.234]{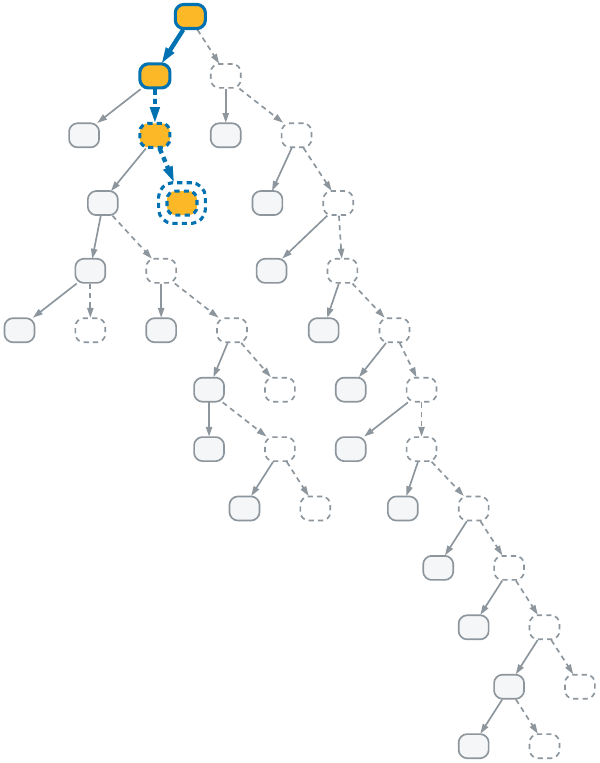}\par
        {\scriptsize MH \(0\rightarrow3\)\par}
    \end{minipage}
    \end{minipage}}
    \caption{\textbf{First prefetched subtree under each traversal rule and prefetch budget.} Prompts 1--3 of LCB V6 with Qwen3.5-9B under \method, one row per prompt and rule; labels and colors follow Figure~\ref{fig:traversal-rules}d.}
    \label{fig:traversal-rules-first-trees}
\end{figure}

\FloatBarrier
\subsection{Case Study 6: Effect of different serving engines}
\label{apx:serving-engines}

\textbf{Setting.}
All main results use vLLM; to check that the acceleration is not specific to it, we also run PowerMH and \method on SGLang.
The two engines draw different random samples, so their chains and numbers of calls differ.

\begin{table}[!htbp]
    \centering
    \caption{\textbf{PowerMH and \method on vLLM and SGLang.}}
    \label{tab:serving-engines}
    \begin{tabular}{llcccc}
        \toprule
        Engine & Sampler & Calls & Time per call (s) & Model-call time (s) & Speedup \\
        \midrule
        \multirow{2}{*}{vLLM} & PowerMH & \(100\) & \(5.39_{\pm0.31}\) & \(539.24_{\pm30.92}\) & \multirow{2}{*}{\(1.40\times\)} \\
        & \method & \(20.60_{\pm1.31}\) & \(18.75_{\pm0.43}\) & \(385.97_{\pm21.25}\) & \\
        \midrule
        \multirow{2}{*}{SGLang} & PowerMH & \(100\) & \(3.57_{\pm0.24}\) & \(357.40_{\pm23.67}\) & \multirow{2}{*}{\(1.97\times\)} \\
        & \method & \(21.25_{\pm1.86}\) & \(8.56_{\pm0.10}\) & \(181.74_{\pm15.25}\) & \\
        \bottomrule
    \end{tabular}%
\end{table}

\textbf{Results.}
\method is faster than PowerMH under both engines, so the acceleration is not specific to vLLM (Table~\ref{tab:serving-engines}).
\begin{itemize}[leftmargin=*, nosep]
    \item \textit{The speedup is larger under SGLang} (\(1.97\times\) against \(1.40\times\) under vLLM). Both engines cut the number of calls by a similar factor (\(4.7\times\) and \(4.9\times\)), but a \method call costs \(2.4\times\) a PowerMH step under SGLang against \(3.5\times\) under vLLM.
    \item \textit{SGLang is faster in absolute time:} \(1.5\times\) faster than vLLM for PowerMH and \(2.1\times\) for \method.
\end{itemize}

\FloatBarrier
\subsection{Case Study 7: Predicting \texorpdfstring{\(\epsilon\)}{epsilon}-certainty before generation}
\label{apx:acceptance-predictors}
The predictive traversal rule (Appendix~\ref{subsec:predictive-dp}) needs the class probabilities \(\widehat P(v\text{ is $\epsilon$-certain reject})\) and \(\widehat P(v\text{ is $\epsilon$-certain accept})\) of every candidate node \(v\) before the proposal at \(v^{+}\) is generated.
We therefore predict, from information available before generation, which of the three classes of Section~\ref{subsec:efficiency} each candidate node falls into: certain reject (\(A\leq\epsilon\)), certain accept (\(A\geq1-\epsilon\)), or uncertain (\(\epsilon<A<1-\epsilon\)).
We use the results to choose the input features and the model of the rule.

\textbf{Setting.}
Each run records the acceptance probability \(A=A(v^{+},v)\) of every prefetched node, and these nodes are the examples.
We use one run per sampler, on the first 20 prompts of LCB V6 with Qwen3.5-9B.
The \method dataset has \(4{,}621\) nodes from \(579\) model calls: \(2{,}814\) \(\epsilon\)-certain rejects, \(945\) \(\epsilon\)-certain accepts, and \(862\) \(\epsilon\)-uncertain.
The \methodEnt dataset has \(4{,}533\) nodes from \(568\) model calls: \(2{,}701\), \(880\), and \(952\), respectively.
These are about \(8\) nodes per call, fewer than \(N_{\mathrm{pf}}=10\), for two reasons.
First, the calls issue \(5{,}690\) and \(5{,}546\) requests; we drop the \(1{,}069\) and \(1{,}013\) requests that BFS places below the rejection child of a node whose acceptance-child request it deferred under Condition~\ref{cond:prefetch}, because the MH walk of that call stops at that node and does not reach them.
Second, \(21\) and \(22\) calls near the end of a prompt, when at most \(6\) of the \(K\) transitions remain, issue fewer than \(10\) requests.
Because the examples are the nodes that BFS actually prefetched, they follow the distribution of the prefetched subtree, not that of all candidate nodes.
We group the features, all available when the batch is assembled, into four sets:
\begin{itemize}[leftmargin=*, nosep]
    \item \textit{Feature 1 (cut fraction):} the cut fraction \(c_j/T\) of the request, where \(v_j\) is the acceptance child of the node \(v_i\) (that is, \(v_i=v\) and \(v_j=v^{+}\)).
    \item \textit{Feature 2 (tree position):} the cut index \(c_i\) of \(v_i\), the gap \(c_i-c_j\), the depth of \(v_i\), and the number of acceptances on the path from the root to \(v_i\).
    \item \textit{Feature 3 (likelihood):} four statistics of the token-wise log-likelihood of the root state under the proposal LLM: its value at the first resampled token, its changes into and out of that token, and its mean over the suffix after the cut.
    \item \textit{Feature 4 (entropy, \methodEnt only):} the same four statistics of the token-wise entropy of the root state; the uniform-cut run did not log token entropies.
\end{itemize}

\textbf{Classifier.}
Each node \(v\) has a standardized feature vector \(x\in\mathbb{R}^{d}\) and a true class \(y(v)\in\{1,2,3\}\): \(\epsilon\)-certain reject, \(\epsilon\)-certain accept, or \(\epsilon\)-uncertain, in the order of Appendix~\ref{subsec:predictive-dp}.
We use a three-class logistic regression, that is, a linear layer with weights \(W\) followed by a softmax over the three classes.
It thus outputs \(\widehat P(v\text{ is $\epsilon$-certain reject})\) and \(\widehat P(v\text{ is $\epsilon$-certain accept})\) directly, and their sum is at most one by construction.
To test whether a nonlinear model helps, we also train multilayer perceptrons that insert hidden layers with the \(\operatorname{ReLU}\) activation before the softmax; with one hidden layer,
\begin{equation}
\label{eq:certainty-mlp}
\widehat P(y=k\mid v)=\operatorname{softmax}_k\bigl(W_2\operatorname{ReLU}(W_1x+b_1)+b_2\bigr),\qquad k=1,2,3,
\end{equation}
and a second hidden layer adds another linear layer followed by \(\operatorname{ReLU}\); we use one hidden layer of \(16\), \(32\), or \(64\) units, or two hidden layers of \(32\) or \(64\) units each.
All models are trained to minimize the log loss (cross-entropy) on a training dataset \(\mathcal{D}\) of nodes,
\begin{equation}
\label{eq:certainty-log-loss}
\mathcal{L}(\mathcal{D})=-\frac{1}{|\mathcal{D}|}\sum_{v\in\mathcal{D}}\log\widehat P\bigl(y=y(v)\mid v\bigr),
\end{equation}
with the natural logarithm, plus the \(L_2\) penalty \(\frac{\lambda}{2|\mathcal{D}|}\) times the sum of squared weights, biases excluded: \(\lambda=1\) for the logistic regression (scikit-learn's \texttt{C}\(\,=1/\lambda=1\)), and \(\lambda=10^{-3}\) for the multilayer perceptrons (scikit-learn's \texttt{alpha}), which are trained with Adam and stopped early on a \(10\%\) validation split of the training fold.

\textbf{Evaluation metrics.}
We evaluate each classifier by 5-fold cross-validation with whole prompts held out and pool the out-of-fold predictions, so every node \(v\) in the set \(\mathcal{U}\) of all nodes is scored by a model that never saw its prompt.
Let \(s_k(v)=\widehat P(y=k\mid v)\) be the held-out probability of class \(k\).
We report two metrics, each against a constant baseline that ignores the features:
\begin{itemize}[leftmargin=*, nosep]
    \item \textit{Accuracy:} the fraction of nodes whose most likely class \(\hat y(v)=\arg\max_{k}s_k(v)\) equals \(y(v)\); the baseline always predicts the most frequent class.
    \item \textit{Log loss:} \(\mathcal{L}(\mathcal{U})\) in~\eqref{eq:certainty-log-loss}, computed with the held-out probabilities \(s_k(v)\); the baseline predicts the class frequencies for every node.
\end{itemize}
Log loss scores the predicted probabilities, which the predictive traversal rule uses; accuracy scores only the most likely class \(\hat y(v)\).

\begin{figure}[!htbp]
    \centering
    \includegraphics[width=\linewidth]{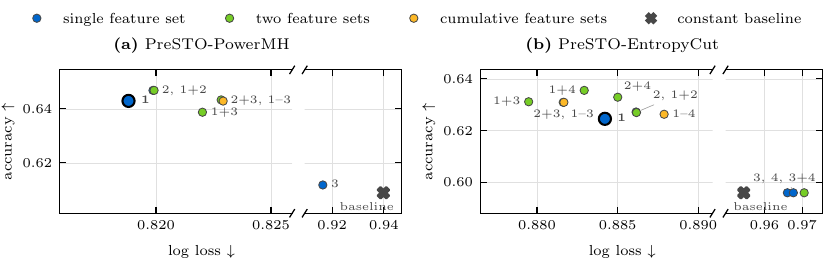}
    \caption{\textbf{Prediction of the \(\epsilon\)-certainty class by feature set.}
    Held-out accuracy against log loss of the logistic regression for \method (a) and \methodEnt (b); upper left is better.
    Labels give the feature sets used: a single digit is one set, ``1+3'' combines Features~1 and~3, and ``1--3'' combines Features~1 through~3; comma-separated labels mark overlapping points.
    The cross marks the constant baseline, and the bold label is the configuration we use.}
    \label{fig:acceptance-predictors}
\end{figure}

\begin{figure}[!htbp]
    \centering
    \includegraphics[width=\linewidth]{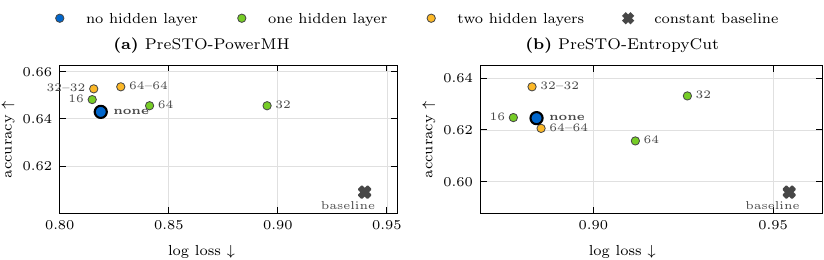}
    \caption{\textbf{Effect of hidden layers on predicting the \(\epsilon\)-certainty class.}
    Held-out accuracy against log loss with the cut fraction as the only feature; labels give the units per hidden layer, and ``none'' is the logistic regression.
    Markers as in Figure~\ref{fig:acceptance-predictors}.}
    \label{fig:acceptance-predictors-mlp}
\end{figure}

\textbf{Results.}
We use the logistic regression with the cut fraction \(c_j/T\) as the only feature.
It reaches a held-out accuracy of \(0.643\) under \method and \(0.625\) under \methodEnt, above the majority-class rates of \(0.609\) and \(0.596\), and lowers the log loss from \(0.940\) to \(0.819\) and from \(0.954\) to \(0.884\) relative to the class prior (Figure~\ref{fig:acceptance-predictors}).
The other choices are not useful.

\FloatBarrier
\subsection{Case Study 8: Comparison with test-time baselines}
\label{apx:scaling-baselines}

\method speeds up PowerMH without changing its transition kernel, but PowerMH is only one way to spend test-time compute.
This case study places both among three baselines that also sharpen the base distribution or select among samples: low-temperature sampling, best-of-\(N\)~\citep{DBLP:conf/nips/StiennonO0ZLVRA20,DBLP:conf/nips/SunHZYQYWBZ24}, and PowerSMC~\citep{DBLP:arxiv/power-smc}.
Low-temperature sampling and best-of-\(N\) do not target \(p_0^{\alpha}\): scaling each token's logits by \(\alpha\) renormalizes every conditional separately, which differs from the sequence-level power distribution~\citep{karan2026reasoning}.
We compare their generation time, peak KV-pool use, and accuracy on the same prompts.

\textbf{Setting.}
All runs use the first \(20\) prompts of LCB V6 with Qwen3.5-9B, process one prompt at a time on vLLM, and target \(\alpha=4\).
PowerMH and \method ignore the end-of-sequence token and always generate all \(1{,}024\) new tokens, whereas low-temperature sampling, best-of-\(N\), and PowerSMC stop a sample at the end-of-sequence token, so they generate up to \(1{,}024\) new tokens.
PowerMH and \method run a single block of \(K\in\{10,20,30\}\) MH transitions, fewer than the \(K=100\) of the main runs, and \method uses the default BFS traversal at \(N_{\mathrm{pf}}=10\).
Low-temperature sampling draws one sample at temperature \(1/\alpha=0.25\); best-of-\(N\) draws \(N\in\{4,8,16,32\}\) samples at the same temperature and keeps the one with the highest unnormalized sequence log-probability, the sum of its token log-probabilities under this temperature-\(0.25\) sampling distribution rather than under \(p_0\).
PowerSMC grows \(P\in\{2,4,8,16,32\}\) particles in blocks of \(128\) tokens, resamples when the effective sample size falls below half, and ramps the exponent to \(\alpha\) over the first \(400\) tokens.
Time is the generation time per prompt, excluding model loading and grading, and peak KV-pool use is the largest fraction of the preallocated KV pool in use.
Figure~\ref{fig:scaling-baselines} plots peak KV-pool use against time.

\begin{figure}[!htbp]
    \centering
    \includegraphics[width=0.95\linewidth]{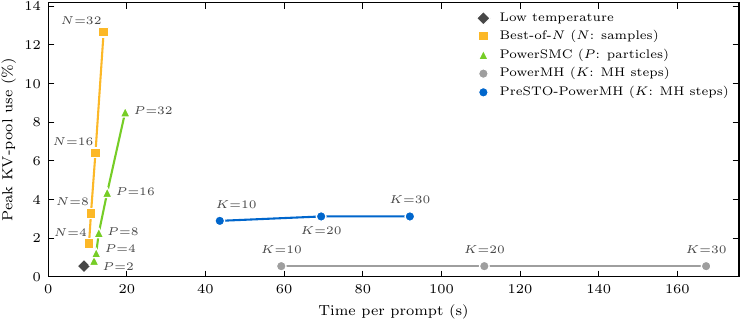}
    \caption{\textbf{Speed and memory against test-time baselines.}
    Peak KV-pool use against time per prompt on the first 20 prompts of LCB V6 with Qwen3.5-9B; lower left is better.}
    \label{fig:scaling-baselines}
\end{figure}

\textbf{Results.}
Figure~\ref{fig:scaling-baselines} shows that prefetching raises peak KV-pool use, but by less than the parallel baselines at their larger settings.
Best-of-\(N\) and PowerSMC decode all samples or particles at once, so their peak KV-pool use grows with \(N\) or \(P\), reaching \(12.7\%\) of the pool at \(N=32\) and \(8.5\%\) at \(P=32\).
\method batches \(10\) proposals and peaks at \(2.9\%\)--\(3.1\%\) at every \(K\), close to best-of-\(8\) (\(3.3\%\)), compared with \(0.6\%\) for PowerMH.
\method therefore narrows, but does not close, the cost gap between PowerMH and these baselines.

\newpage

\subsection{Example prefetched-subtree trajectory}
\label{subsec:tree-visualization}

We print the raw log of one prefetched subtree.
Each line of the log is a node \(v_i\), printed as
\texttt{Node(id=\(i\), cut\_idx=\(c_i\))}.
Node \(v_i\) has acceptance child \(v_{2i+1}\) and rejection child
\(v_{2i+2}\); the rejection child keeps its parent's state and cut index.
Each acceptance edge carries \texttt{log\_mh\_ratio}, the unclamped log MH
ratio of the proposal at \(v_{2i+1}\), so the edge is taken with probability
\(\min\{1,\exp(\texttt{log\_mh\_ratio})\}\); rejection edges carry no value.
The tree holds 16 of the 20 requests in the batch.
The other 4 lie below the rejection child of a node whose own request
violates Condition~\ref{cond:prefetch}, such as \(v_9\); the chain cannot
pass such a node in this batch, so the sampler does not attach these
requests, and the budget runs out before BFS reaches frontier nodes such as \(v_{30}\).

\refstepcounter{prefetchbox}\label{box:prefetched-original-1}
\TreeBox[
  basicstyle=\fontsize{8}{9}\selectfont\ttfamily
]{\small Box~\theprefetchbox: Prefetched tree}{exps/tree_visualization/lcb-bfs-prefetched-subtree-tree-1.txt}

\refstepcounter{prefetchbox}\label{box:prefetched-path-1}
  \begin{tcolorbox}[
    enhanced,
    breakable,
    boxrule=0.4pt,
    titlerule=0.4pt,
    colframe=TreeBoxFrameColor,
    colback=TreeBoxBackgroundColor,
    colbacktitle=TreeBoxTitleBackgroundColor,
    coltitle=TreeBoxTitleColor,
    fonttitle=\normalsize\ttfamily\bfseries,
    fontupper=\small,
    before upper={%
      \setlength{\abovedisplayskip}{3pt}%
      \setlength{\belowdisplayskip}{3pt}%
      \setlength{\abovedisplayshortskip}{0pt}%
      \setlength{\belowdisplayshortskip}{3pt}%
    },
    title={\small Box~\theprefetchbox: MH progress 0/100 $\rightarrow$ 2/100 (2 transitions)},
    title after break={\small Box~\theprefetchbox: MH progress 0/100 $\rightarrow$ 2/100 (2 transitions) (continued)},
    before skip=4pt,
    after skip=4pt,
    boxsep=1pt,
    left=1pt,
    right=1pt,
    top=1pt,
    bottom=1pt,
    colframe=black,colback=white,colbacktitle=white,coltitle=black,colupper=black
  ]
\begin{lstlisting}[
  style=prefetchtree
]
sampled path in the prefetched subtree: 0 → 1 → 3
\end{lstlisting}
  \end{tcolorbox}%

The sampled path (Box~\ref{box:prefetched-path-1}) accepts at \(v_0\) and
\(v_1\), advancing the chain from step 0 to step 2.
Both decisions are \(\epsilon\)-certain at \(\epsilon=0.01\): the ratio at
\(v_0\) accepts with
probability above \(0.9999\), and \(v_1\) accepts with probability 1.
The \(\epsilon\)-rounded tree (Box~\ref{box:prefetched-rounded-1};
Definition~\ref{def:finite-tree-continuation}) drops each branch that
rounding assigns zero probability, together with its descendants, and keeps
both branches at uncertain decisions, so here it reduces to the single path
\(v_0\to v_1\to v_3\), which is the sampled path.

\colorlet{TreeBoxFrameColor}{sglOrange!65!black}
\colorlet{TreeBoxBackgroundColor}{sglOrange!7}
\colorlet{TreeBoxTitleBackgroundColor}{sglOrange!25}
\colorlet{TreeBoxTitleColor}{sglOrange!35!black}
\refstepcounter{prefetchbox}\label{box:prefetched-rounded-1}
\TreeBox[
  basicstyle=\fontsize{8}{9}\selectfont\ttfamily
]{\small Box~\theprefetchbox: $\epsilon$-rounded tree ($\epsilon=0.01$)}{exps/tree_visualization/lcb-bfs-prefetched-subtree-rounded.txt}

\paragraph{Binary-tree views of prefetched subtrees.}

We plot all 19 prefetched subtrees for prompt index 0 from the
Qwen3.5-9B run on LCB V6 with
prefetch budget \(N_{\mathrm{pf}}=20\).
Figure~\ref{fig:lcb-binary-00} shows them in sampling order; together
they cover the chain's full trajectory of 100 MH transitions.

\begin{figure}[!htbp]
\refstepcounter{figure}
\label{fig:lcb-binary-00}
\addcontentsline{lof}{figure}{\protect\numberline{\thefigure}{Prefetched subtrees for prompt 0}}
\centering
\begin{minipage}[b]{0.24\linewidth}
\centering
\refstepcounter{prefetchpanel}
\parbox[b][0.1\textheight][b]{\linewidth}{\centering\includegraphics[width=\linewidth,height=0.1\textheight,keepaspectratio]{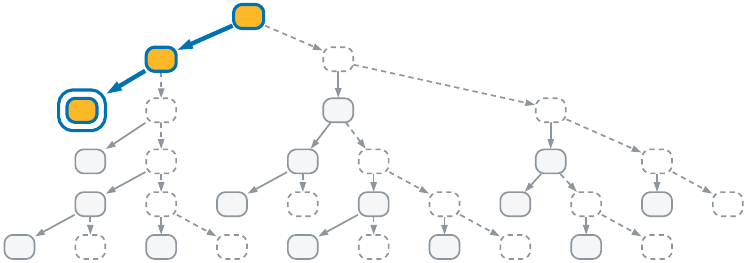}}
\par\smallskip
{\footnotesize (\arabic{prefetchpanel}) MH \(0\rightarrow2\)\par}
\end{minipage}%
\hfill
\begin{minipage}[b]{0.24\linewidth}
\centering
\refstepcounter{prefetchpanel}
\parbox[b][0.1\textheight][b]{\linewidth}{\centering\includegraphics[width=\linewidth,height=0.1\textheight,keepaspectratio]{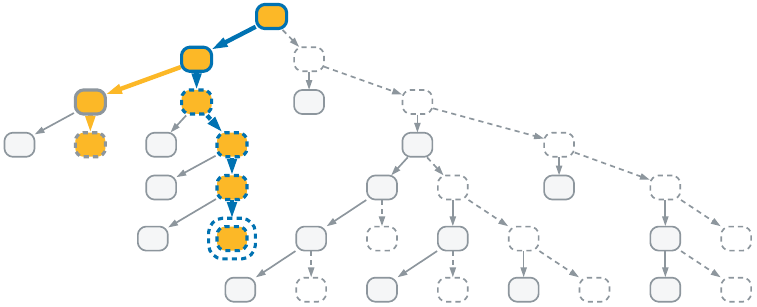}}
\par\smallskip
{\footnotesize (\arabic{prefetchpanel}) MH \(2\rightarrow7\)\par}
\end{minipage}%
\hfill
\begin{minipage}[b]{0.24\linewidth}
\centering
\refstepcounter{prefetchpanel}
\parbox[b][0.1\textheight][b]{\linewidth}{\centering\includegraphics[width=\linewidth,height=0.1\textheight,keepaspectratio]{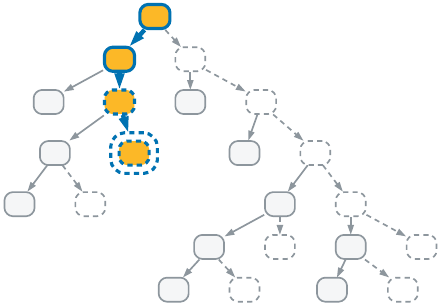}}
\par\smallskip
{\footnotesize (\arabic{prefetchpanel}) MH \(7\rightarrow10\)\par}
\end{minipage}%
\hfill
\begin{minipage}[b]{0.24\linewidth}
\centering
\refstepcounter{prefetchpanel}
\parbox[b][0.1\textheight][b]{\linewidth}{\centering\includegraphics[width=\linewidth,height=0.1\textheight,keepaspectratio]{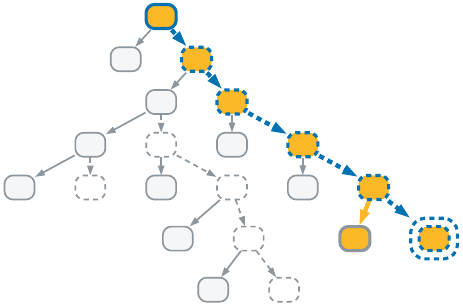}}
\par\smallskip
{\footnotesize (\arabic{prefetchpanel}) MH \(10\rightarrow15\)\par}
\end{minipage}%
\par
\vspace{6pt}
\begin{minipage}[b]{0.24\linewidth}
\centering
\refstepcounter{prefetchpanel}
\parbox[b][0.1\textheight][b]{\linewidth}{\centering\includegraphics[width=\linewidth,height=0.1\textheight,keepaspectratio]{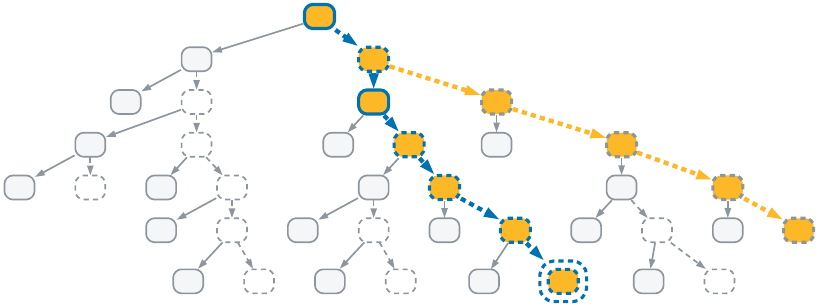}}
\par\smallskip
{\footnotesize (\arabic{prefetchpanel}) MH \(15\rightarrow21\)\par}
\end{minipage}%
\hfill
\begin{minipage}[b]{0.24\linewidth}
\centering
\refstepcounter{prefetchpanel}
\parbox[b][0.1\textheight][b]{\linewidth}{\centering\includegraphics[width=\linewidth,height=0.1\textheight,keepaspectratio]{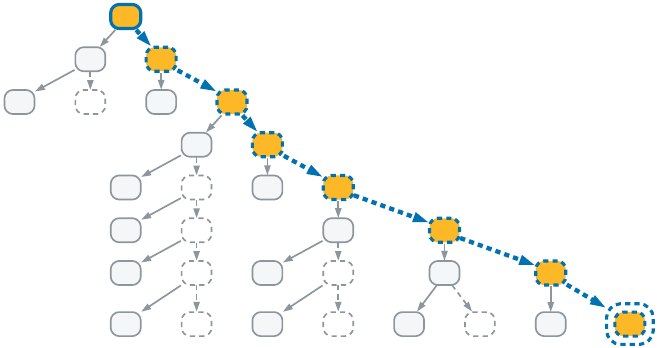}}
\par\smallskip
{\footnotesize (\arabic{prefetchpanel}) MH \(21\rightarrow28\)\par}
\end{minipage}%
\hfill
\begin{minipage}[b]{0.24\linewidth}
\centering
\refstepcounter{prefetchpanel}
\parbox[b][0.1\textheight][b]{\linewidth}{\centering\includegraphics[width=\linewidth,height=0.1\textheight,keepaspectratio]{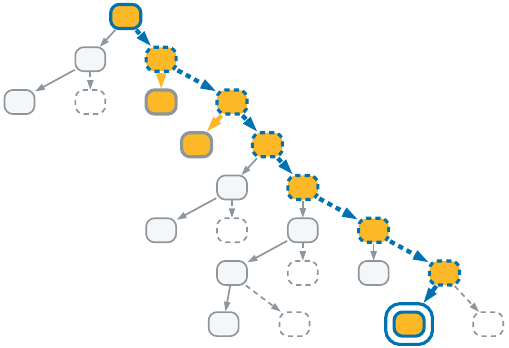}}
\par\smallskip
{\footnotesize (\arabic{prefetchpanel}) MH \(28\rightarrow35\)\par}
\end{minipage}%
\hfill
\begin{minipage}[b]{0.24\linewidth}
\centering
\refstepcounter{prefetchpanel}
\parbox[b][0.1\textheight][b]{\linewidth}{\centering\includegraphics[width=\linewidth,height=0.1\textheight,keepaspectratio]{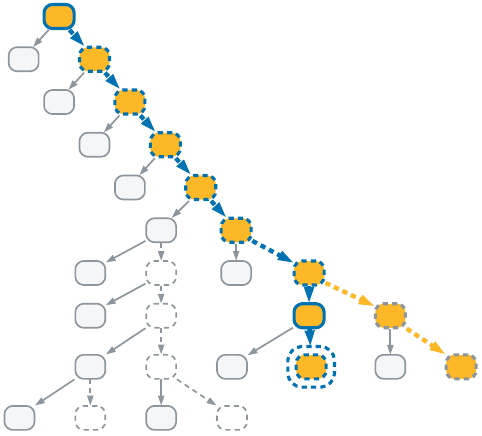}}
\par\smallskip
{\footnotesize (\arabic{prefetchpanel}) MH \(35\rightarrow43\)\par}
\end{minipage}%
\par
\vspace{6pt}
\begin{minipage}[b]{0.24\linewidth}
\centering
\refstepcounter{prefetchpanel}
\parbox[b][0.1\textheight][b]{\linewidth}{\centering\includegraphics[width=\linewidth,height=0.1\textheight,keepaspectratio]{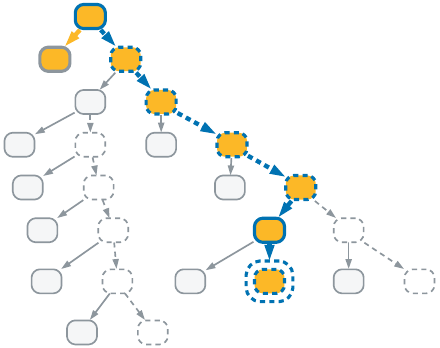}}
\par\smallskip
{\footnotesize (\arabic{prefetchpanel}) MH \(43\rightarrow49\)\par}
\end{minipage}%
\hfill
\begin{minipage}[b]{0.24\linewidth}
\centering
\refstepcounter{prefetchpanel}
\parbox[b][0.1\textheight][b]{\linewidth}{\centering\includegraphics[width=\linewidth,height=0.1\textheight,keepaspectratio]{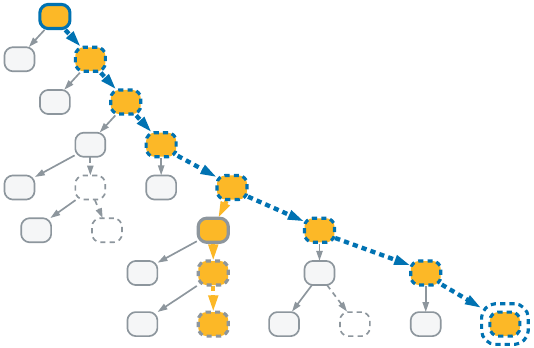}}
\par\smallskip
{\footnotesize (\arabic{prefetchpanel}) MH \(49\rightarrow56\)\par}
\end{minipage}%
\hfill
\begin{minipage}[b]{0.24\linewidth}
\centering
\refstepcounter{prefetchpanel}
\parbox[b][0.1\textheight][b]{\linewidth}{\centering\includegraphics[width=\linewidth,height=0.1\textheight,keepaspectratio]{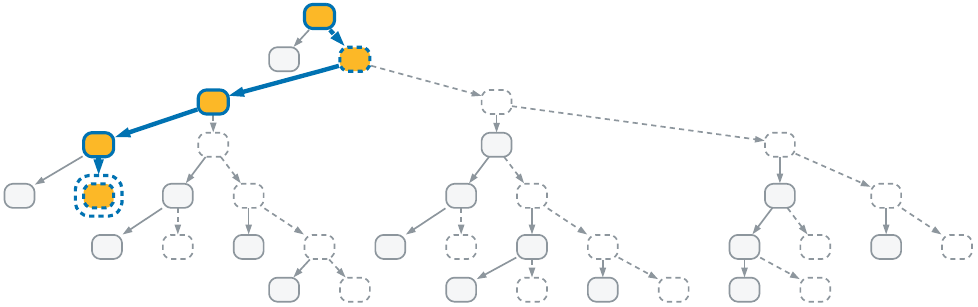}}
\par\smallskip
{\footnotesize (\arabic{prefetchpanel}) MH \(56\rightarrow60\)\par}
\end{minipage}%
\hfill
\begin{minipage}[b]{0.24\linewidth}
\centering
\refstepcounter{prefetchpanel}
\parbox[b][0.1\textheight][b]{\linewidth}{\centering\includegraphics[width=\linewidth,height=0.1\textheight,keepaspectratio]{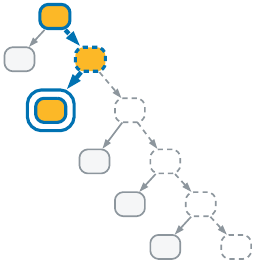}}
\par\smallskip
{\footnotesize (\arabic{prefetchpanel}) MH \(60\rightarrow62\)\par}
\end{minipage}%
\par
\vspace{6pt}
\begin{minipage}[b]{0.24\linewidth}
\centering
\refstepcounter{prefetchpanel}
\parbox[b][0.1\textheight][b]{\linewidth}{\centering\includegraphics[width=\linewidth,height=0.1\textheight,keepaspectratio]{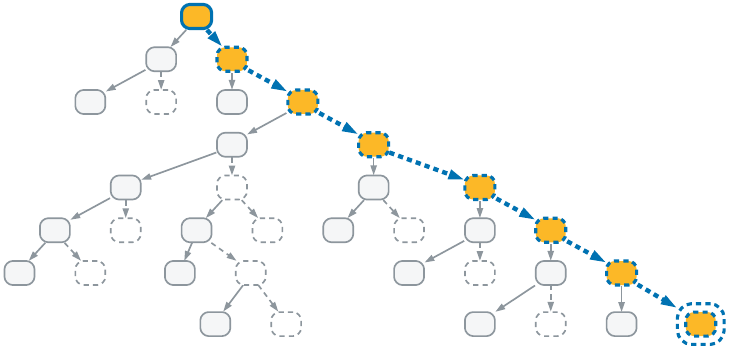}}
\par\smallskip
{\footnotesize (\arabic{prefetchpanel}) MH \(62\rightarrow69\)\par}
\end{minipage}%
\hfill
\begin{minipage}[b]{0.24\linewidth}
\centering
\refstepcounter{prefetchpanel}
\parbox[b][0.1\textheight][b]{\linewidth}{\centering\includegraphics[width=\linewidth,height=0.1\textheight,keepaspectratio]{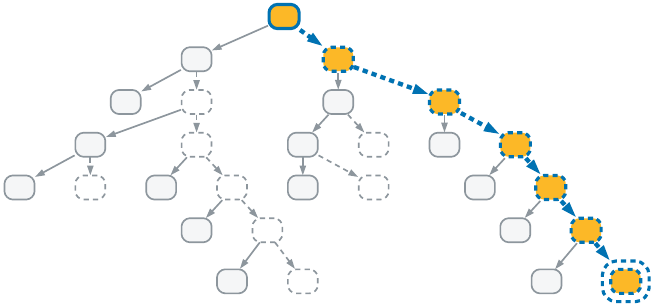}}
\par\smallskip
{\footnotesize (\arabic{prefetchpanel}) MH \(69\rightarrow75\)\par}
\end{minipage}%
\hfill
\begin{minipage}[b]{0.24\linewidth}
\centering
\refstepcounter{prefetchpanel}
\parbox[b][0.1\textheight][b]{\linewidth}{\centering\includegraphics[width=\linewidth,height=0.1\textheight,keepaspectratio]{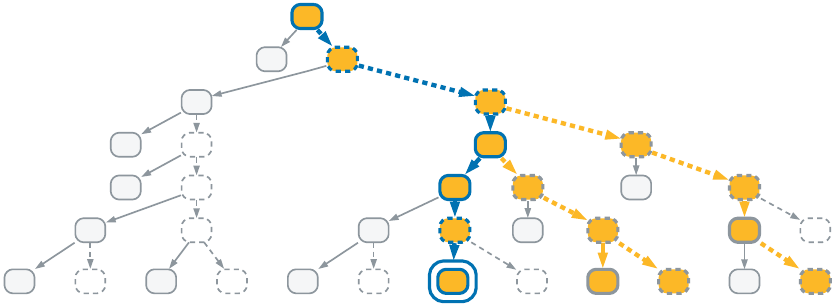}}
\par\smallskip
{\footnotesize (\arabic{prefetchpanel}) MH \(75\rightarrow81\)\par}
\end{minipage}%
\hfill
\begin{minipage}[b]{0.24\linewidth}
\centering
\refstepcounter{prefetchpanel}
\parbox[b][0.1\textheight][b]{\linewidth}{\centering\includegraphics[width=\linewidth,height=0.1\textheight,keepaspectratio]{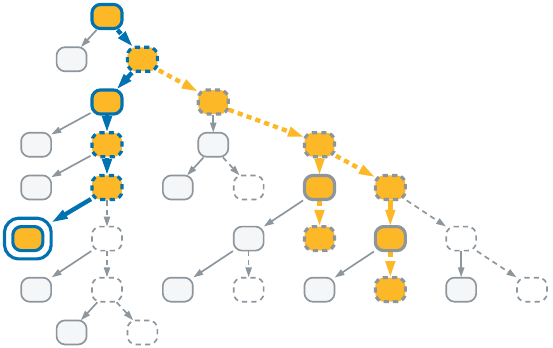}}
\par\smallskip
{\footnotesize (\arabic{prefetchpanel}) MH \(81\rightarrow86\)\par}
\end{minipage}%
\par
\vspace{6pt}
\begin{minipage}[b]{0.24\linewidth}
\centering
\refstepcounter{prefetchpanel}
\parbox[b][0.1\textheight][b]{\linewidth}{\centering\includegraphics[width=\linewidth,height=0.1\textheight,keepaspectratio]{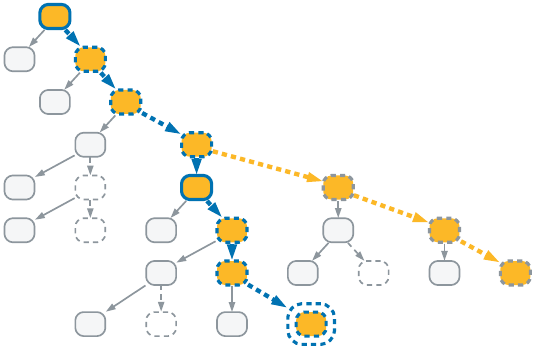}}
\par\smallskip
{\footnotesize (\arabic{prefetchpanel}) MH \(86\rightarrow93\)\par}
\end{minipage}%
\hfill
\begin{minipage}[b]{0.24\linewidth}
\centering
\refstepcounter{prefetchpanel}
\parbox[b][0.1\textheight][b]{\linewidth}{\centering\includegraphics[width=\linewidth,height=0.1\textheight,keepaspectratio]{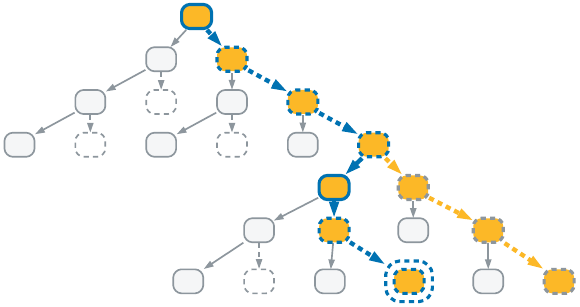}}
\par\smallskip
{\footnotesize (\arabic{prefetchpanel}) MH \(93\rightarrow99\)\par}
\end{minipage}%
\hfill
\begin{minipage}[b]{0.24\linewidth}
\centering
\refstepcounter{prefetchpanel}
\parbox[b][0.1\textheight][b]{\linewidth}{\centering\includegraphics[scale=0.5]{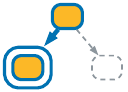}}
\par\smallskip
{\footnotesize (\arabic{prefetchpanel}) MH \(99\rightarrow100\)\par}
\end{minipage}%
\hfill
\begin{minipage}[b]{0.24\linewidth}\mbox{}\end{minipage}%
\par
\PrefetchFigureCaption{\textbf{Prefetched subtrees along one full MH trajectory.} The 19 subtrees of prompt 0 (0-based) in sampling order, covering its 100 MH transitions (LCB V6, Qwen3.5-9B, \method, BFS, \(N_{\mathrm{pf}}=20\)). Each label gives the MH steps the subtree advances.
Solid left edges: acceptance; dashed right edges: rejection; orange: \(\epsilon\)-rounded subtree (\(\epsilon=0.01\), Definition~\ref{def:finite-tree-continuation}); blue: sampled path (endpoint double-bordered); gray: other prefetched branches.}
\end{figure}
\clearpage

\end{document}